\DeclareMathOperator*{\argmax}{arg\,max}
\newcommand{\Scale}[2][1]{\scalebox{#1}{$\m@th#2$}}
\definecolor{tealblue}{HTML}{156082}
\newcommand{\ExtractorLLM}{f_{\theta}}
\newcommand{\PlannerLLM}{h_{\omega}} 
\newcommand{\FactMemory}{\mathcal{M}}
\newcommand{\FactSet}{\mathcal{F}}
\def\blfootnote{\xdef\@thefnmark{}\@footnotetext}
\definecolor{termcolor1}{RGB}{200, 0, 0} %
\definecolor{termcolor2}{RGB}{0, 150, 0} %
\definecolor{termcolor3}{RGB}{0, 0, 200} %
\newcommand{\R}{\mathbb{R}}
\newcommand{\E}{\mathbb{E}}
\newcommand{\Prob}{P} %
\newcommand{\powerset}{\mathcal{P}} %
\newtheorem{theorem}{Theorem}[section] %
\newtheorem{definition}[theorem]{Definition}
\newmdenv[
  linecolor=black,                %
  linewidth=0.5pt,                %
  roundcorner=0pt,                %
  innertopmargin=\baselineskip,     %
  innerbottommargin=\baselineskip,  %
  innerleftmargin=1em,            %
  innerrightmargin=1em,           %
  userdefinedwidth=0.9\linewidth, %
  font=\ttfamily\small,           %
  frametitlefont=\ttfamily\small\bfseries, %
  frametitlealignment=\flushleft, %
  middlelinewidth=0pt,            %
  splittopskip=\topskip,            %
  needspace=3\baselineskip,         %
  skipabove=\medskipamount,         %
  skipbelow=\medskipamount          %
]{promptboxstyle}
\newenvironment{promptbox}[1]
  {%
    \begin{center} %
    \begin{promptboxstyle}[frametitle={#1 Prompt:}]%
    \noindent\ignorespaces%
  }
  {%
    \end{promptboxstyle}%
    \end{center}%
  }
\newmdenv[
  linecolor=black,
  linewidth=0.5pt,
  backgroundcolor=gray!3,
  roundcorner=2pt,
  frametitlefont=\bfseries,
  frametitlealignment=\center,
  skipabove=\medskipamount,
  skipbelow=\medskipamount
]{theorybox}
\begin{document}
\doparttoc
\faketableofcontents

\runningtitle{Fact-Augmented Lookahead Planning for LLM Agents}

\runningauthor{Samuel Holt, Max Ruiz Luyten, Thomas Pouplin, Mihaela van der Schaar}

\twocolumn[

\aistatstitle{Fact-Augmented Lookahead Planning for LLM Agents}

\aistatsauthor{
Samuel Holt\textsuperscript{*} \\ University of Cambridge 
\And
Max Ruiz Luyten\textsuperscript{*} \\ University of Cambridge
}
\vspace{0.1in}

\aistatsauthor{
Thomas Pouplin \\ University of Cambridge 
\And
Mihaela van der Schaar \\ University of Cambridge
}
\vspace{0.20in}
]

\begin{abstract}
Large Language Models (LLMs) are increasingly capable, but LLM agents still struggle to plan effectively in interactive, partially observable, long-horizon environments when search is unguided or recent history is insufficient. We introduce LWM-Planner, a fact-augmented lookahead planning framework that improves agent behavior purely through in-context learning. After each episode, the agent extracts task-critical atomic facts from its trajectories, validates candidates with a lightweight predictive-consistency filter (and optionally compresses them), and uses the resulting fact set to condition action proposal, single-step latent world-model simulation, and state-value estimation. Planning then proceeds via recursive, depth-limited lookahead over candidate trajectories conditioned on the accumulated facts and recent history, enabling online improvement without parameter updates. We provide abstraction-style motivation—treating facts as reducing state aliasing (proxy $\epsilon_{\mathrm{sim}}$) and fact-conditioned simulation as lowering one-step error (proxy $\delta_{\mathrm{model}}$)—without claiming formal guarantees. Empirically, on text FrozenLake variants, CrafterMini, and ALFWorld, the approach improves cumulative return over ReAct/Reflexion and search-only baselines, suggesting that additional test-time search is most useful when grounded by compact, experience-derived facts.
\end{abstract}

\begin{figure*}[!t]
\centering
\includegraphics[width=0.9\textwidth]{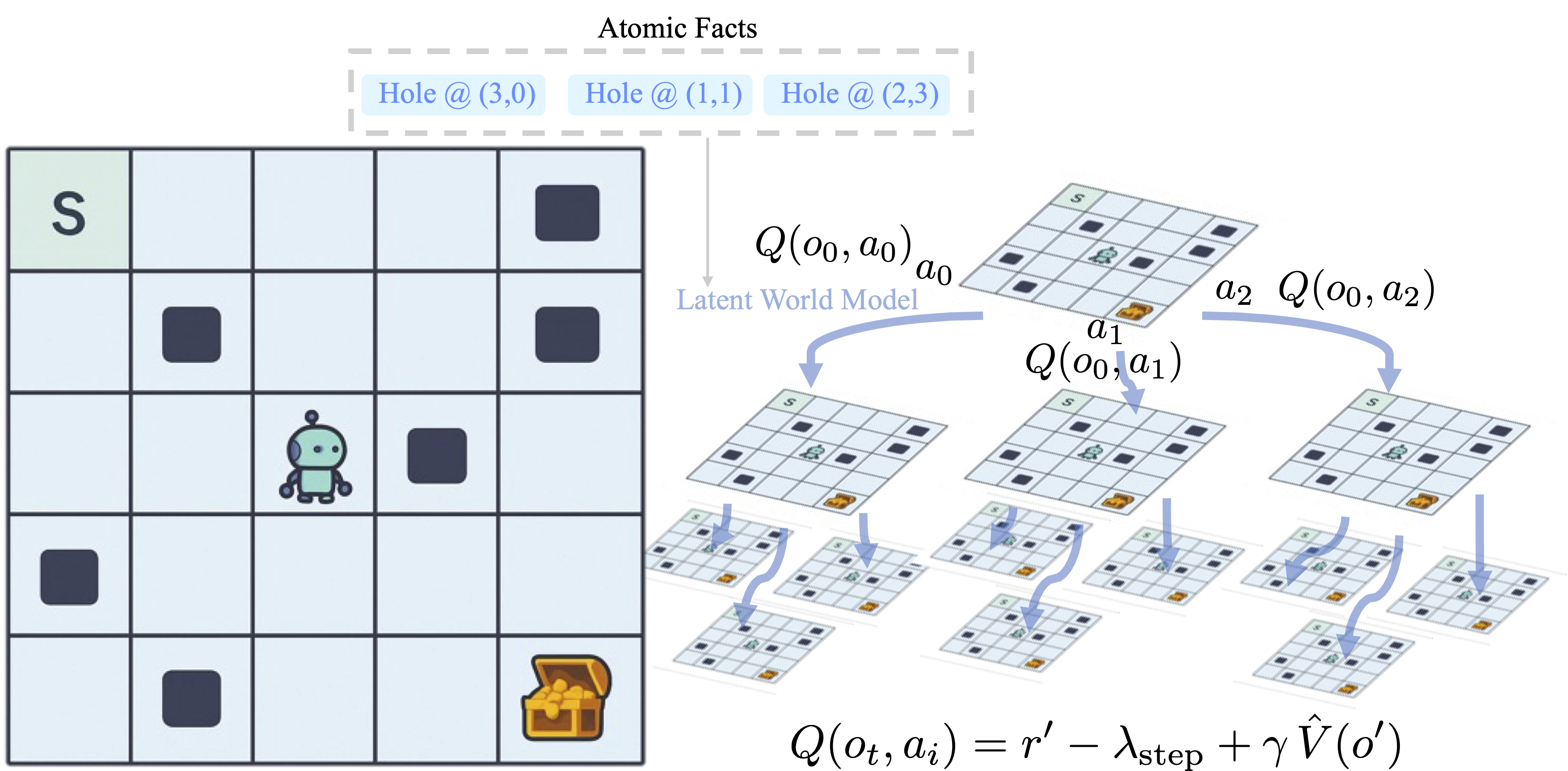}
\caption{%
  \textbf{LWM-Planner extracts compact facts from experience and uses them to ground lookahead planning at test time.} 
  Starting from the current environment observation $o_0$ (left panel), the agent conditions on previously extracted \texttt{Atomic Facts} (e.g., \texttt{Hole@(3,0)}) and performs a recursive lookahead search (right panel) to choose its next action. The search combines: (i) an LLM, acting as a \emph{Latent World Model}, to simulate action sequences ($a_i$) and predict subsequent latent states ($o'$) and immediate rewards ($r'$); (ii) an LLM-based state-value estimator that provides $\hat{V}(o')$ at the search frontier; and (iii) Q-value aggregation from the simulated outcomes and value estimates via
  \(
    Q(o_t, a_i) \;=\; r' \;-\; \lambda_{\mathrm{step}} \;+\; \gamma\,\hat{V}(o')\,,
  \)
  which guides action selection from $o_0$.%
}
\label{fig:fig1_overview}
\end{figure*}

\section{INTRODUCTION}
\label{sec:introduction}

Large Language Models (LLMs) have demonstrated remarkable potential in building autonomous agents for sequential decision-making in diverse settings, from text-based games \citep{yao2023react} to complex interactive environments \citep{zhou2024agentoccam}. A key insight underpinning their success is their vast pre-trained knowledge, which can be steered towards specific tasks. However, effectively harnessing this knowledge and enabling LLMs to learn from new experiences in-context remains a critical challenge for improving their accuracy and optimality in long-horizon tasks.

Many LLM agents rely on extensive few-shot examples \citep{shinn2023reflexion} or retrieve entire past trajectories \citep{kagaya2024rap} to inform their decisions. While effective to a degree, these approaches can lead to very long prompts or may not efficiently distill the most crucial pieces of information from past experiences. Model-based approaches \citep{hao2023reasoning, chae2024web} have emerged, but often involve learning separate, potentially shallow, predictive models or require environment interactions for each step of their lookahead, rather than leveraging the LLM's inherent simulation capabilities more deeply.

\blfootnote{*Equal contribution.}

The core idea of this paper is that LLMs possess a substantial amount of latent knowledge about world dynamics and task structures. To unlock better planning, we need to identify and provide the missing pieces of information—concise, critical insights derived from experience—that allow the LLM to more accurately simulate outcomes and evaluate states. We propose a novel LLM agent architecture that learns and utilizes ``atomic facts'' to augment its planning process. These facts are textual statements (e.g., ``object X is in receptacle\_Y'', ``action Z leads\_to\_failure\_condition'') extracted from the agent's interaction history at the end of each episode.

Our agent employs these atomic facts to inform a recursive, depth-limited lookahead search. The planning process involves three key LLM-driven components: An \textbf{action proposer} to suggest plausible next actions. A \textbf{latent world model} to simulate the next observation, reward, and termination status given an action. A \textbf{value estimator} to predict the long-term utility of states, especially at the leaves of the search tree.
All these components receive the current observation, recent interaction history, and the curated set of atomic facts as input, allowing the LLM to make more informed predictions and decisions. The agent learns online, purely in-context, as the set of facts evolves with experience, leading to improved policies without any LLM fine-tuning. This approach is inspired by Dyna-style architectures (model-learning + planning) \citep{sutton1990integrated}, where experience is used to refine a model (here, the fact-augmented LLM reasoning process) which is then used for planning.

\textbf{Contributions}: \textbf{\textcircled{\raisebox{-0.9pt}{1}}}\textbf{Algorithmic:} We present a simple fact-augmented planning mechanism for LLM agents: after each episode the agent distills a small set of atomic, task-specific facts and conditions \emph{action proposal}, \emph{single-step simulation}, and \emph{value estimation} within a depth-limited lookahead (Section~\ref{sec:method}). A lightweight \emph{predictive-consistency} filter and optional compression keep the fact set minimal and within the context window. 
\textbf{\textcircled{\raisebox{-0.9pt}{2}}} \textbf{Theoretical Framing:} We connect the design to fact-based state abstraction (Section~\ref{sec:theoretical_framework}): facts aim to reduce state aliasing (a proxy for $\epsilon_{\mathrm{sim}}$) while fact-conditioned simulation targets lower one-step prediction error (a proxy for $\delta_{\mathrm{model}}$), which together can reduce planning sub-optimality $\epsilon_{\mathrm{plan}}$. These links are provided as motivation rather than formal guarantees. 
\textbf{\textcircled{\raisebox{-0.9pt}{3}}} \textbf{Empirical:} Across text environments (TextFrozenLake, CrafterMini) and the ALFWorld evaluation suite, we compare against strong search and agent baselines (ReAct, Reflexion, Tree-of-Thought, RAP), reporting cumulative return. Targeted ablations indicate that grounding lookahead with learned facts is the dominant contributor to the gains, and we analyze computation–performance trade-offs.

Taken together, these results suggest that in partially observable, multi-step environments, compact facts learned from experience can make test-time lookahead materially more reliable.

Figure~\ref{fig:fig1_overview} gives a high-level overview of the resulting planning loop.

This work offers a step towards LLM agents that can more effectively learn from their interactions in-context, leading to more robust and accurate planning by systematically augmenting the LLM's reasoning with distilled, experience-grounded knowledge.
\section{THEORETICAL FRAMEWORK FOR FACT-BASED REINFORCEMENT LEARNING}
\label{sec:theoretical_framework}
We propose enabling agents to construct and reason over a \emph{fact-based world model}. Such a model relies on a compressed, symbolic representation of task-relevant information extracted as ``atomic facts'' from the environment's state. 
We do so only as design motivation: we define an ideal fact-based agent and derive bounds that motivate \cref{sec:method}; these are not guarantees for LWM-Planner, and we do not estimate or optimize $\epsilon_{\mathrm{sim}}$, $\delta_{\mathrm{model}}$, or $\epsilon_{\mathrm{plan}}$.

\subsection{Problem Formulation}
\label{sec:problem_formulation}

We model the agent's interaction with its environment as a \textbf{Markov Decision Process (MDP)}, specified by the tuple $\mathcal{G} = (\mathcal{S}, \mathcal{A}, \mathcal{T}, R, \gamma)$. Here, $\mathcal{S}$ is the set of environment states, which we assume are fully observable or derivable into a sufficient structured representation $s_t \in \mathcal{S}$ from raw observations $o_t$. $\mathcal{A}$ is a finite set of actions. $\mathcal{T}: \mathcal{S} \times \mathcal{A} \times \mathcal{S} \rightarrow [0,1]$ is the state transition probability function, $\mathcal{T}(s'|s,a) = \Prob(s_{t+1}=s'|s_t=s, a_t=a)$. $R: \mathcal{S} \times \mathcal{A} \times \mathcal{S} \rightarrow \R$ is the reward function, with expected reward $R(s,a) = \E_{s' \sim \mathcal{T}(\cdot|s,a)}[R(s,a,s')]$. Finally, $\gamma \in [0,1)$ is the discount factor. The agent's goal is to learn a policy $\pi: \mathcal{S} \rightarrow \mathcal{A}$ that maximizes the expected discounted cumulative reward, $V^\pi_{\mathcal{G}}(s) = \E_\pi \left[ \sum_{k=0}^{\infty} \gamma^k R(s_{t+k}, a_{t+k}) | s_t=s \right]$. The optimal value function in $\mathcal{G}$ is $V^*_{\mathcal{G}}(s) = \max_\pi V^\pi_{\mathcal{G}}(s)$.

To manage the potential complexity of $\mathcal{S}$, we introduce a \textbf{fact-based state abstraction}.
\begin{definition}[Fact-Based Abstraction]\label{def:fact_abstraction}
Let $\mathcal{F}_{\text{atomic}}$ be a vocabulary of atomic predicates (e.g., \texttt{is\_goal(loc)}, \texttt{obstacle\_at((x,y))}) relevant to the task. These predicates primarily describe properties of the current state $s_t$. A \textbf{fact set} for a state $s \in \mathcal{S}$ is $F_s = \{f \in \mathcal{F}_{\text{atomic}} \mid f \text{ is true in } s\}$.
The \textbf{Fact Extractor} is an abstraction function $\Psi: \mathcal{S} \rightarrow \mathcal{Z_F}$, where $\mathcal{Z_F} = \powerset(\mathcal{F}_{\text{atomic}})$ is the space of all possible fact sets. Each $z \in \mathcal{Z_F}$ constitutes an \textbf{abstract state}, representing the equivalence class of ground states $\{s \in \mathcal{S} \mid \Psi(s) = z\}$. We denote the abstract state at time $t$ as $z_t = \Psi(s_t)$. A critical objective is that $|\mathcal{Z_F}| \ll |\mathcal{S}|$.
\end{definition}
This abstraction $\Psi$ induces an \textbf{abstract MDP} $M_\Psi = (\mathcal{Z_F}, \mathcal{A}, \mathcal{T}_\Psi, R_\Psi, \gamma)$, where $\mathcal{T}_\Psi(z'|z,a)$ and $R_\Psi(z,a)$ are the abstract transition and reward functions. These are derived from $\mathcal{G}$ by averaging over the ground states $s$ that map to a given abstract state $z$ \citep{Li2006}. For instance, if $\mathcal{S}_z = \{s' \in \mathcal{S} \mid \Psi(s')=z\}$, then $R_\Psi(z,a) = \frac{1}{|\mathcal{S}_z|} \sum_{s \in \mathcal{S}_z} R(s,a)$, assuming a uniform distribution over ground states within an abstract state.

\subsection{An Idealized Fact-Based Agent (IFBA)}
\label{sec:ifba}
We conceptualize an Idealized Fact-Based Agent (IFBA) that flawlessly leverages such abstractions.

\begin{definition}[Ideal Fact Abstraction $\Psi^*$]\label{def:ideal_psi_star}
The IFBA employs an ideal abstraction function $\Psi^*: \mathcal{S} \rightarrow \mathcal{Z_F}$ which establishes an $\epsilon_{\text{sim}}$-\textbf{approximate bisimulation} with the ground MDP $\mathcal{G}$ \citep{ferns2004metrics}. This implies a bound on the difference between the optimal value functions in $\mathcal{G}$ and the induced abstract MDP $M_{\Psi^*}$:
    \begin{equation}
        \|V^*_{\mathcal{G}} - V^*_{M_{\Psi^*}} \circ \Psi^* \|_\infty \le \frac{\epsilon_{\text{sim}}}{1-\gamma}
        \label{eq:bisim_value_bound}
    \end{equation}
    where $V^*_{M_{\Psi^*}}$ is the optimal value function for $M_{\Psi^*}$, and $\epsilon_{\text{sim}} \ge 0$ quantifies the maximum one-step deviation in rewards and discounted next-state distributions for states aggregated by $\Psi^*$.
    \end{definition}

As we will discuss later, we would like it to be \textbf{minimal}, achieving the sufficiency for near-optimal value representation (Eq.~\eqref{eq:bisim_value_bound}) with the smallest possible fact set. This aligns with the Information Bottleneck (IB) principle \citep{Tishby2000, Alemi2017vib}, which seeks a compressed representation $z_t = \Psi^*(s_t)$ of an input $s_t$ that maximizes information about a target variable (e.g., $V^*(s_t)$ or future returns) while minimizing $I(z_t; s_t)$ (or a proxy like fact set complexity).

The IFBA is assumed to dynamically maintain such an abstraction.

\begin{definition}[Ideal Abstract World Model and Planner for IFBA]\label{def:ifba_model_planner}
The IFBA is endowed with:
\begin{enumerate}
    \item \textbf{A perfect abstract world model:} Its internal model $\hat{M}_{\Psi^*}$ is identical to the true abstract MDP $M_{\Psi^*}$, implying zero model error w.r.t. $M_{\Psi^*}$.
    \item \textbf{An $\epsilon_{\text{plan}}$-optimal planner:} This planner computes a policy $\pi^{\circ}_{M_{\Psi^*}}$ for $M_{\Psi^*}$ such that $V^*_{M_{\Psi^*}}(z) - V^{\pi^{\circ}_{M_{\Psi^*}}}_{M_{\Psi^*}}(z) \le \epsilon_{\text{plan}}$ for all $z \in \mathcal{Z_F}$, where $\epsilon_{\text{plan}} \ge 0$.
\end{enumerate}
The policy executed by IFBA in $\mathcal{G}$ is $\pi_F(s) = \pi^{\circ}_{M_{\Psi^*}}(\Psi^*(s))$.
\end{definition}

\subsection{Performance Guarantees (Motivational)}
\label{sec:performance_guarantees}
We first establish a performance guarantee for the IFBA, which operates with a perfect abstract model.

\begin{theorem}[Performance of IFBA with Perfect Abstract Model]
\label{thm:ifba_performance}
Let $\pi_F$ be the policy derived by the Idealized Fact-Based Agent (IFBA) as defined above. If the ideal fact abstraction $\Psi^*$ establishes an $\epsilon_{\text{sim}}$-approximate bisimulation between $\mathcal{G}$ and $M_{\Psi^*}$, and the planner for $M_{\Psi^*}$ is $\epsilon_{\text{plan}}$-optimal, then for any state $s \in \mathcal{S}$:
\begin{equation}
V^*_{\mathcal{G}}(s) - V^{\pi_F}_{\mathcal{G}}(s) \le \frac{2\epsilon_{\text{sim}}}{1-\gamma} + \epsilon_{\text{plan}}
\label{eq:ifba_bound}
\end{equation}
\end{theorem}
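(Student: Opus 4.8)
The plan is to bound the suboptimality by a standard three-term telescoping decomposition that separates (i) the representational loss of the abstraction $\Psi^*$, (ii) the planning loss incurred inside the abstract MDP $M_{\Psi^*}$, and (iii) the loss from executing an abstract policy in the ground MDP $\mathcal{G}$. Fix $s \in \mathcal{S}$, write $z = \Psi^*(s)$, and abbreviate $\pi^\circ = \pi^{\circ}_{M_{\Psi^*}}$. Adding and subtracting $V^*_{M_{\Psi^*}}(z)$ and $V^{\pi^\circ}_{M_{\Psi^*}}(z)$,
\begin{align*}
V^*_{\mathcal{G}}(s) - V^{\pi_F}_{\mathcal{G}}(s)
&= \bigl(V^*_{\mathcal{G}}(s) - V^*_{M_{\Psi^*}}(z)\bigr)
+ \bigl(V^*_{M_{\Psi^*}}(z) - V^{\pi^\circ}_{M_{\Psi^*}}(z)\bigr) \\
&\quad + \bigl(V^{\pi^\circ}_{M_{\Psi^*}}(z) - V^{\pi_F}_{\mathcal{G}}(s)\bigr).
\end{align*}

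First I would dispatch the two easy terms. The first parenthesis is at most $\epsilon_{\text{sim}}/(1-\gamma)$ in absolute value: this is precisely the approximate-bisimulation value bound Eq.~\eqref{eq:bisim_value_bound} of Definition~\ref{def:ideal_psi_star} evaluated at $s$ (equivalently at $z$ after composing with $\Psi^*$). The second parenthesis is at most $\epsilon_{\text{plan}}$ by the $\epsilon_{\text{plan}}$-optimality of the planner for $M_{\Psi^*}$ (Definition~\ref{def:ifba_model_planner}); since the IFBA's internal model $\hat M_{\Psi^*}$ is assumed identical to $M_{\Psi^*}$, there is no additional model-error term to carry. For the third parenthesis, recall $\pi_F = \pi^\circ \circ \Psi^*$, so it equals $\bigl(V^{\pi^\circ}_{M_{\Psi^*}} \circ \Psi^*\bigr)(s) - V^{\pi^\circ \circ \Psi^*}_{\mathcal{G}}(s)$: the gap between the abstract-MDP value of the \emph{fixed} policy $\pi^\circ$ and the ground-MDP value of its lift.

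The crux — and the step I expect to be the main obstacle — is showing this third term is also bounded by $\epsilon_{\text{sim}}/(1-\gamma)$. The key observation is that an $\epsilon_{\text{sim}}$-approximate bisimulation controls the one-step reward deviation and the (Kantorovich/total-variation) discounted next-state-distribution deviation \emph{uniformly over all actions}, not merely along the optimal policy, so the contraction argument underlying Eq.~\eqref{eq:bisim_value_bound} goes through with $V^*$ replaced by $V^{\pi^\circ}$. Concretely, let $\Delta = \|V^{\pi^\circ}_{M_{\Psi^*}} \circ \Psi^* - V^{\pi_F}_{\mathcal{G}}\|_\infty$; subtracting the Bellman fixed-point equation for $V^{\pi^\circ}_{M_{\Psi^*}}$ at $z=\Psi^*(s)$ from that for $V^{\pi_F}_{\mathcal{G}}$ at $s$, bounding the reward difference by the reward component of $\epsilon_{\text{sim}}$, and coupling the abstract and ground next-state distributions (charging the residual to the transition component of $\epsilon_{\text{sim}}$) yields the self-bounding inequality $\Delta \le \epsilon_{\text{sim}} + \gamma \Delta$, i.e.\ $\Delta \le \epsilon_{\text{sim}}/(1-\gamma)$. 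The delicate point is bookkeeping the constant: whether the one-step slack comes out as $\epsilon_{\text{sim}}$ or as $2\epsilon_{\text{sim}}$ depends on the exact normalization of ``maximum one-step deviation'' in Definition~\ref{def:ideal_psi_star}, so I would pin that convention to the one (e.g.\ that of Ferns et al.) under which Eq.~\eqref{eq:bisim_value_bound} is stated and reuse it verbatim.

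Finally, combining the three bounds by the triangle inequality gives
\begin{align*}
V^*_{\mathcal{G}}(s) - V^{\pi_F}_{\mathcal{G}}(s) \;\le\; \frac{\epsilon_{\text{sim}}}{1-\gamma} + \epsilon_{\text{plan}} + \frac{\epsilon_{\text{sim}}}{1-\gamma} \;=\; \frac{2\epsilon_{\text{sim}}}{1-\gamma} + \epsilon_{\text{plan}},
\end{align*}
which is Eq.~\eqref{eq:ifba_bound}; since the left-hand side is nonnegative and $s \in \mathcal{S}$ was arbitrary, this is the claimed guarantee.
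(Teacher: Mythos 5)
Your proposal is correct and follows essentially the same route as the paper's proof: the identical three-term telescoping decomposition, with Term I bounded by $\epsilon_{\text{sim}}/(1-\gamma)$ via Eq.~\eqref{eq:bisim_value_bound}, Term II by $\epsilon_{\text{plan}}$ via planner optimality, and Term III by $\epsilon_{\text{sim}}/(1-\gamma)$ using the fact that approximate bisimulation also controls the value gap for a fixed policy respecting the abstraction (where the paper simply cites Ravindran's thesis, you sketch the underlying $\Delta \le \epsilon_{\text{sim}} + \gamma\Delta$ contraction argument, which is the right justification).
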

\begin{proof}
    We defer the proof to Appendix~\ref{appendix:theory}.
\end{proof}

\textbf{Performance with a Learned Abstract Model:}
In practice, an agent learns an \emph{approximate} abstract model $\tilde{M}_{\Psi} = (\mathcal{Z_F}, \mathcal{A}, \tilde{\mathcal{T}}_{\Psi}, \tilde{R}_{\Psi}, \gamma)$ from data, based on an abstraction $\Psi$ (which itself has an associated $\epsilon_{\text{sim}}$ quality). Let $\pi_L$ be an $\epsilon_{\text{plan}}$-optimal policy for this learned model $\tilde{M}_{\Psi}$. The value loss is decomposed as:
\begin{align}
 V^*_{\mathcal{G}}(s) - V^{\pi_L}_{\mathcal{G}}(s) & = \underbrace{(V^*_{\mathcal{G}}(s) - V^*_{\tilde{M}_{\Psi}}(\Psi(s)))}_{\textcolor{termcolor1}{\text{Term A}}} + \\
 & \underbrace{(V^*_{\tilde{M}_{\Psi}}(\Psi(s)) - V^{\pi_L}_{\tilde{M}_{\Psi}}(\Psi(s)))}_{\textcolor{termcolor2}{\text{Term B}}} + \\ 
 & \underbrace{(V^{\pi_L}_{\tilde{M}_{\Psi}}(\Psi(s)) - V^{\pi_L}_{\mathcal{G}}(s))}_{\textcolor{termcolor3}{\text{Term C}}}
 \label{eq:learned_model_decomp_revised}
\end{align}
where $V^*_{\tilde{M}_{\Psi}}$ is the optimal value function in the learned abstract model $\tilde{M}_{\Psi}$.
\begin{itemize}
    \item \textcolor{termcolor1}{\textbf{Term A} ($|V^*_{\mathcal{G}} - V^*_{\tilde{M}_{\Psi}}|$)}: This gap comprises two parts: (1) the inherent loss from abstraction, $|V^*_{\mathcal{G}}(s) - V^*_{M_{\Psi}}(\Psi(s))| \le \frac{\epsilon_{\text{sim}}}{1-\gamma}$, and (2) the error in the optimal value due to inaccuracies in the learned model $\tilde{M}_{\Psi}$ compared to the true abstract model $M_{\Psi}$, $|V^*_{M_{\Psi}}(\Psi(s)) - V^*_{\tilde{M}_{\Psi}}(\Psi(s))|$. This second part is typically bounded by $C_1 \frac{\delta_{\text{model}}}{(1-\gamma)^2}$, where $\delta_{\text{model}}$ represents a composite one-step model error (in abstract transitions and rewards) of $\tilde{M}_{\Psi}$ w.r.t. $M_{\Psi}$ \citep{Strehl2009, Jiang2015}. Thus, Term A $\lesssim \frac{\epsilon_{\text{sim}}}{1-\gamma} + C_1 \frac{\delta_{\text{model}}}{(1-\gamma)^2}$.
    \item \textcolor{termcolor2}{\textbf{Term B} ($V^*_{\tilde{M}_{\Psi}} - V^{\pi_L}_{\tilde{M}_{\Psi}}$)}: This is the planning error within the agent's learned model $\tilde{M}_{\Psi}$, bounded by $\epsilon_{\text{plan}}$.
    \item \textcolor{termcolor3}{\textbf{Term C} ($|V^{\pi_L}_{\tilde{M}_{\Psi}} - V^{\pi_L}_{\mathcal{G}}|$)}: This simulation error is $|V^{\pi_L}_{\tilde{M}_{\Psi}}(\Psi(s)) - V^{\pi_L}_{\mathcal{G}}(s)| \le |V^{\pi_L}_{\tilde{M}_{\Psi}}(\Psi(s)) - V^{\pi_L}_{M_{\Psi}}(\Psi(s))| + |V^{\pi_L}_{M_{\Psi}}(\Psi(s)) - V^{\pi_L}_{\mathcal{G}}(s)|$. The first part is bounded by $C_2 \frac{\delta_{\text{model}}}{(1-\gamma)^2}$ (Simulation Lemma type result \citep{kearns2002near}), and the second by $\frac{\epsilon_{\text{sim}}}{1-\gamma}$ (abstraction quality for policy $\pi_L$). Thus, Term C $\lesssim \frac{\epsilon_{\text{sim}}}{1-\gamma} + C_2 \frac{\delta_{\text{model}}}{(1-\gamma)^2}$.
\end{itemize}

Summing these bounds, the total value loss is:
\begin{equation}
V^*_{\mathcal{G}}(s) - V^{\pi_L}_{\mathcal{G}}(s) \lesssim \frac{2\epsilon_{\text{sim}}}{1-\gamma} + \epsilon_{\text{plan}} + (C_1+C_2)\frac{\delta_{\text{model}}}{(1-\gamma)^2}
\label{eq:learned_model_bound_final}
\end{equation}
As above, we use these bounds only as design motivation; we do not estimate or optimize $\epsilon_{\mathrm{sim}}$, $\delta_{\mathrm{model}}$, or $\epsilon_{\mathrm{plan}}$ \citep{Strehl2009}.

\subsection{Discussion: Connecting to LLM-Based Agents}
\label{sec:crad_discussion_final_updated}
This framework motivates two LLM roles:
\begin{itemize}
  \item \textbf{LLM as Fact Extractor ($\ExtractorLLM$):} Approximates the abstraction $\Psi$ by turning trajectories into concise, task-relevant atomic facts $F_t$ that preserve value-critical structure, thereby reducing abstraction error $\epsilon_{\mathrm{sim}}$.
  \item \textbf{LLM as Latent World Model and Value Estimator ($g_{\phi}$):} Conditioned on $(o_t, F_t)$, simulates outcomes and estimates values, implicitly defining the learned abstract model $\tilde{M}_{\Psi}$; higher simulation fidelity lowers $\delta_{\mathrm{model}}$.
\end{itemize}
Supplying specific, missing facts leverages the LLM’s prior knowledge while grounding predictions, aiming to reduce aliasing (proxy $\epsilon_{\mathrm{sim}}$) and one-step error (proxy $\delta_{\mathrm{model}}$), which can in turn lower planning sub-optimality $\epsilon_{\mathrm{plan}}$ (Eq.~\eqref{eq:learned_model_bound_final}).
Facts are learned online, purely in-context, and kept \emph{minimal}: proposed facts pass a \emph{predictive-consistency filter}---we replay the just-finished episode while conditioning the simulator (temperature~0) on the candidate fact and retain it only if the held-out next-step prediction error decreases. This is a heuristic, post-hoc, model-internal check (hindsight), not a causal test; a stronger trajectory-grounded validation variant is possible and detailed in Appendix~\ref{app:pcf}. Environment-level validation is left to future work, while the lightweight deployed check still helps curb hallucinations and prune stale knowledge. Minimal, relevant facts also keep prompts focused, avoiding dilution within the context window. The next section instantiates these principles: a depth-limited lookahead uses $\PlannerLLM$ for proposal, simulation, and valuation, all conditioned on $(o_t, F_t)$ within a learned, fact-based abstract MDP.

\section{METHOD: LLM AGENT WITH ATOMIC FACT AUGMENTATION AND LOOKAHEAD PLANNING}
\label{sec:method}
Our proposed agent, the LLM-based World Model Planning Agent (LWM-Planner), enhances its decision-making capabilities through a synergistic combination of online atomic fact learning and LLM-driven lookahead search. The overarching goal is to enable the agent to learn from its interactive experiences entirely in-context, without any updates to the underlying LLM weights, and to leverage this learned knowledge to improve its planning and achieve more optimal behavior. The agent's architecture and operation can be understood through two main interacting processes: the dynamic management of atomic facts and the lookahead planning mechanism that utilizes these facts. A high-level summary of the agent's operational cycle is provided in Algorithm~\ref{alg:lwm_planner_main_loop_alg2e}, detailed in \Cref{ssec:algorithm_summary_revised_detailed}. The overarching goal is to enable the agent to learn from its interactive experiences entirely in-context and to employ the lookahead planning mechanism that utilizes these facts to approximate strong actions (improving $\epsilon_{\mathrm{plan}}$ via better rollouts), while fact-conditioning targets lower one-step error (proxy $\delta_{\mathrm{model}}$).

The LWM-Planner maintains a concise representation of its world understanding and recent interactions. Core to its state are a short-term \textbf{interaction history} (`history`), which is a deque of recent observation-action pairs, and a longer-term, distilled knowledge base in the form of an \textbf{atomic fact set} (`facts`). These facts serve as the cornerstone of the agent's learned abstraction; they are intended to capture the most salient, value-relevant aspects of the environment discovered through experience. The set is therefore composed of textual statements (e.g., ``object X is on table Y,'' ``door Z is locked'') that are crucial for task completion. These, along with an environment description and a list of allowed actions, provide the necessary context for the LLM components.

The process of learning and refining the atomic fact set is central to the agent's adaptability. This occurs primarily at the end of each episode through a ``reflection'' phase. After an episode concludes, the complete trajectory of observations, actions, rewards, and outcomes is provided to an LLM. This LLM is tasked with identifying and generating ``minimal new atomic facts'' that were not previously known (i.e., not in the current \texttt{facts}set) but are \textit{deemed critical for better predicting state values or rewards in the future}. The aim is to distill the most salient pieces of information from the recent experience that, if known earlier, could have led to improved decision-making. Candidate facts are therefore subjected to a \emph{predictive-consistency filter}: we replay the just-finished episode while conditioning the simulator (temperature~0) on the candidate fact and accept it only if the \emph{held-out next-step} prediction error (reward, termination, and/or next observation) decreases using cached model calls. This is a heuristic, post-hoc check performed on the same episode and is model-internal rather than ground truth; a stronger trajectory-grounded variant is given in Appendix~\ref{app:pcf}; it is \emph{not} a causal test. As a sanity check, we discard any fact that is later contradicted by observations in the following episode. We leave environment-level validation and cross-episode checks to future work. 
This aims to reduce state aliasing (proxy $\epsilon_{\mathrm{sim}}$) by enriching $z_t = (o_t, F_t)$ with information that better separates states with different values or optimal actions.
These newly extracted and validated facts are then added to the \texttt{facts} deque. To maintain the conciseness and relevance of this knowledge base, an optional fact compression step can be performed. Here, an LLM reviews the entire set of current facts and attempts to eliminate redundancies or overly specific information, producing a more compact yet informationally rich set of facts. This evolving \texttt{facts}set serves as a dynamically updated, experience-grounded augmentation for all subsequent LLM reasoning during planning. We cap the fact memory by context length; older facts are dropped by redundancy before recency.

Decision-making in LWM-Planner is orchestrated by recursive lookahead search. This search is bounded by a configurable depth (e.g., $d=3$) and branching factor (e.g., $b=4$). To ensure deterministic planning behavior within a single search instance, all LLM calls during this phase operate with a temperature of zero. A small step penalty is also incorporated to favor more efficient solutions. The lookahead search relies on three specialized LLM-driven functionalities, invoked via structured function calls: \texttt{propose\_actions} for suggesting likely candidate actions from a given state; \texttt{simulate\_step} , which acts as the \textit{latent} world model $\tilde{T}_{\Psi}, \tilde{R}_{\Psi}$. Conditioned on $z_t = (o_t, F_t)$ and a proposed action $a_i$, it predicts the next (potentially latent) observation $o'$, immediate reward $r'$, and termination status.
The accuracy of this LLM-based simulation, enhanced by the atomic facts, targets lower one-step prediction error (proxy $\delta_{\mathrm{model}}$).
Finally, \texttt{estimate\_value} approximates the value function $\hat{V}_{\tilde{M}_\Psi}$ of the learned abstract MDP. It assesses the long-term utility of states, particularly those at the frontier of the search (leaf nodes or terminal states). Facts also help ground this estimation (e.g., proximity to a known goal or hazard fact).

The planning process begins at the current observation $o_t$. First, the action proposal LLM, conditioned on $o_t$, the interaction history, and the current atomic facts, suggests a set of candidate actions. For each proposed action $a_i$, an estimated Q-value, $Q(o_t, a_i)$, is computed. This computation involves invoking the simulation LLM (again, conditioned on the current state, action, history, and facts) to predict the immediate reward $r'$ and next (potentially latent) observation $o'$. If $o'$ is a terminal state or the maximum search depth is reached, the value estimation LLM is called to predict the future cumulative reward from $o'$. Otherwise, the search recurses from $o'$ with decremented depth. The Q-value is then a combination of the immediate simulated reward and the discounted value of the subsequent state, plus any step penalties $Q(o_t, a_i) = r' - \lambda_{\text{step}} + \gamma\,\hat{V}(o')$. After evaluating all initial candidate actions, the action yielding the highest Q-value is selected for execution in the environment. To manage computational overhead during a single planning phase, the results of these LLM calls (proposal, simulation, and value estimation) are memoized based on their inputs.

\section{RELATED WORK}
\label{sec:related_work}

Our work builds upon several lines of research in LLM-based agents, model-based reinforcement learning, and the use of external knowledge for planning---see Appendix~\ref{app:ExtendedRelatedWork} for an extended related work. Unlike ToT/RAP, which expand search over raw trajectories, our agent learns a compact, symbolic memory online and uses it to condition the simulator itself, improving rollouts at the same test-time budget.

\paragraph{LLM Agents}
Early LLM agents like ReAct \citep{yao2023react} introduced the concept of interleaving reasoning (thought) and action generation. Reflexion \citep{shinn2023reflexion} extended this by incorporating self-reflection, where an LLM analyzes past failures to generate textual feedback for future trials. This episodic learning is akin to our fact extraction, but Reflexion focuses on high-level advice rather than structured atomic facts for a world model. Many agents operate in a model-free manner or rely heavily on in-context exemplars from fixed datasets.

\paragraph{LLM-Based Planning and World Models}
Several approaches have explored using LLMs for planning. Some use LLMs to score or propose actions within classical search algorithms like Monte-Carlo Tree Search (MCTS) \citep{hao2023reasoning, kagaya2024rap, liu2024reasonfutureactnow} or to expand deliberate search trees as in Tree-of-Thought (ToT) \citep{yao2024treeofthought}. Retrieval-Augmented Planning (RAP) \citep{kagaya2024rap} retrieves full past trajectories to inform MCTS, often requiring environment interaction for tree expansion, while ExpeL \citep{zhao2024expel} distils salient information from offline experience buffers to guide planning at test time. Other works like \citep{chae2024web} use LLMs to build explicit, but often one-step, world models that predict state transitions or webpage changes, and \citep{xie2023translating} translate natural language goals into formal planning problems. Our approach differs by using the LLM itself as a latent world model for multi-step simulation during lookahead, conditioned on dynamically extracted atomic facts that are validated against experience. This enables online abstraction learning rather than depending on pre-collected trajectory stores and provides concise, symbolic grounding that directly feeds the planner. The idea of an LLM as a ``simulator'' has been explored, e.g., for few-shot generation \citep{prystawski2023think}, but its integration with online fact-based learning for improved planning is a novel aspect of our work.

\paragraph{Dyna-Style Architectures and Fact-Based RL}
Our method is inspired by Dyna-style reinforcement learning \citep{sutton1990integrated, sutton2018reinforcement}, where an agent learns a model of the world from real interactions and then uses this model to generate simulated experiences for planning. In our case, the ``model'' is implicitly represented by the set of atomic facts combined with the LLM's inherent simulation capabilities. The extraction of facts from trajectories is analogous to model learning, and the lookahead search is planning with this model. While traditional Dyna uses tabular or parametric models, we leverage the LLM's ability to reason over textual facts. The concept of using facts or symbolic knowledge in RL is not new \citep{abel2020value}, but its integration with LLM-driven simulation and online fact extraction is a key aspect of our work.

\paragraph{Knowledge Augmentation for LLMs}
Retrieval-Augmented Generation (RAG) \citep{lewis2021retrieval} is a common paradigm for providing LLMs with external knowledge. Our fact extraction and augmentation mechanism can be seen as a specialized form of RAG where the ``retrieved'' knowledge (atomic facts) is actively generated and refined from the agent's own experience rather than drawn from a static corpus. This makes the knowledge highly task-specific and current, directly addressing the information needs identified through interaction, rather than relying on potentially less relevant or outdated general knowledge.

Our work distinguishes itself by the tight integration of online atomic fact learning from episodic experience with an LLM-driven, multi-step lookahead planner where the LLM serves as both a latent world model and value function, all operating in-context without weight updates. This focus on distilled, symbolic knowledge (atomic facts) aims to provide a more structured and efficient way for the LLM to learn from experience compared to methods relying on raw trajectory retrieval or general textual reflections.

\begin{table*}[ht]
  \centering
  \caption{Cumulative return (\textbf{higher better}); mean\,$\pm$\,95\% CI, for each benchmark method across each environment. Results are averaged over ten random seeds. LWM-Planner consistently delivers the highest return, indicating more efficient task completion than the non-symbolic search baselines.}
  \label{tab:main_table_results}
  \smallskip
  \resizebox{\textwidth}{!}{%
    \begin{tabular}{@{}l|c|c|c|c|c@{}}
      \toprule
      \textbf{Method (metric)}                       &
      TextFrozenLake (4$\times$4; $h{=}0.9$)             &
      CrafterMini (5$\times$5)                       &
      ALFWorld-A                                     &
      ALFWorld-B                                     &
      ALFWorld-C \\
      \midrule
      \rowcolor{blue!8} \textbf{LWM-Planner} (Cum.\ return $\uparrow$) &
      \textbf{31.80$\pm$20.39} & \textbf{150.30$\pm$44.94} & \textbf{21.33$\pm$9.53} & \textbf{22.89$\pm$12.11} & \textbf{19.50$\pm$8.37} \\
      \addlinespace
      ReAct + FEC (Cum.\ return $\uparrow$) &
      20.20$\pm$12.19   & 149.70$\pm$55.50 & 4.70$\pm$2.46 & 15.50$\pm$6.51 & 10.60$\pm$3.71 \\
      \addlinespace
      ReAct (Cum.\ return $\uparrow$) &
      $-265.20\pm33.59$ & 92.00$\pm$57.16 & 12.60$\pm$0.37 & 12.80$\pm$0.45 & 12.50$\pm$0.38 \\
      \addlinespace
      Reflexion (Cum.\ return $\uparrow$) &
      $-61.10\pm$4.80 & 87.20$\pm$51.45 & 11.00$\pm$0.00 & 11.00$\pm$0.45 & 11.33$\pm$0.38 \\
      \addlinespace
      RAP (Cum.\ return $\uparrow$) &
      $-83.25\pm$20.92 & 0.80$\pm$69.02 & 5.60$\pm$2.42 & 17.80$\pm$1.84 & 14.20$\pm$2.69 \\
      \addlinespace
      ToT (Cum.\ return $\uparrow$) &
      $-56.20\pm$4.84 & 51.40$\pm$118.56 & 1.00$\pm$0.88 & 2.80$\pm$0.56 & 5.20$\pm$1.84 \\
      \addlinespace
      Random (Cum.\ return $\uparrow$) &
      $-80.00\pm$4.49 & $-289.00\pm$8.56 & 0.00$\pm$0.00 & 0.00$\pm$0.00 & 0.00$\pm$0.00 \\
      \bottomrule
    \end{tabular}}%
\end{table*}

\section{EXPERIMENTS}
\label{sec:experiments}

We evaluate our LWM-Planner to assess its ability to learn from experience and improve its decision-making accuracy and task performance over time. We evaluate behavioural improvements only; $\epsilon_{\mathrm{sim}}$, $\delta_{\mathrm{model}}$, and $\epsilon_{\mathrm{plan}}$ are used only as motivating proxies, and we do not claim formal guarantees.

\textbf{Benchmark Environments:} We use three different diverse environment domains, from which we can procedurally generate a near limitless amount of different environments. First, we create a procedurally generated text version of the classic environment of Frozen Lake \citep{brockman2016openai}, where we can alter the probability that all the tiles are holes ($h$) and ensure that each board is always at least solvable. Moreover, we use the standard ALFWorld environments \citep{shridhar2020alfworld} (randomly sampling three environments in the main paper, with more in the appendix). ALFWorld is a text environment that parallels embodied worlds in the ALFRED dataset \citep{shridhar2020alfred}, where each environment requires agents to reason to solve embodied tasks in a home environment. Furthermore, we benchmark against CrafterMini, a procedurally generated mini version of Crafter \citep{hafner2021benchmarking}---a 2D world where the player needs to explore for resources, collect materials and build tools. The goal here is to craft an iron pickaxe, which can only be done by crafting two previous items and using collected resources. We detail all environments in Appendix~\ref{sec:BenchmarkEnvironmentsDetails}.

\textbf{Benchmark Methods:} We seek to provide competitive benchmarks; therefore, we compare against \textbf{ReAct} \citep{yao2023react}, reasoning then acting, which observes the current observation, interaction history, and the environment description. Building on top of ReAct, we compare with \textbf{Reflexion} \citep{shinn2023reflexion}, which maintains a buffer of previously learned verbal lessons on how to act better that is appended to the end of each episode, and is included in the agent's prompt. Moreover, we consider two strong search-based planners, Tree-of-Thought (\textbf{ToT}) \citep{yao2024treeofthought} and Retrieval-Augmented Planning (\textbf{RAP}) \citep{kagaya2024rap}, both of which expand lookahead trees using LLM calls but do not learn symbolic abstractions. We include an ablation of our method, \textbf{ReAct + FEC}, which is a ReAct agent with Fact Extraction and Compression as done in LWM-Planner but without the tree search, our full \textbf{LWM-Planner}, and a \textbf{Random} policy. We provide full benchmark method details in Appendix~\ref{sec:BenchmarkMethodImplementationDetails}.

\textbf{Evaluation:} We run each LLM Agent method for 300 environment steps unless otherwise noted, tracking episode return and the number of steps taken for each episode. After 300 steps, we compute the cumulative return/total reward (sum of returns up to 300 steps) and report the mean $\pm$ 95\% confidence interval over three random seeds. We report mean$\pm$95\% CI over three seeds due to inference-time cost. To enable replication we provide detailed implementation and evaluation details in the appendix.
Further experimental setup and evaluation details, including the precise success criteria and discussion of the 300-step interaction budget, are given in Appendix~\ref{sec:EvaluationDetails}. A quantitative analysis of computational cost is provided in Appendix~\ref{app:cost_analysis}. Additional robustness results, including generalization to a smaller backbone, a compute-performance Pareto analysis, and an adversarial filter audit, are reported in Appendix~\ref{app:small_backbone}, Appendix~\ref{app:pareto_front}, and Appendix~\ref{app:adversarial_filter_audit}.

\begin{table*}[h]
  \centering
  \caption{\textbf{Comparison of increasing environment state-action space.} Normalised cumulative return (\textbf{higher better}) and steps per success (\textbf{lower better}); mean\,$\pm$\,95\% CI, for each benchmark method across each environment. LWM-Planner performs the best across all environments. Results are averaged over three random seeds. Normalised cumulative return is the cumulative return normalized to be between 0 and 100, where 0 corresponds to Random and 100 to the best score among all methods on that environment.}
  \smallskip
  \resizebox{\textwidth}{!}{%
    \begin{tabular}{@{}l|c|c|c@{}}
      \toprule
      \textbf{Method (metric)} &
      TextFrozenLake (4$\times$4; $h{=}0.9$) &
      TextFrozenLake (6$\times$6; $h{=}9$) &
      TextFrozenLake (8$\times$8; $h{=}5$) \\
      \midrule
      \rowcolor{blue!8}\textbf{LWM-Planner} (Cum.\ return norm $\uparrow$) &
      \textbf{100.00$\pm$46.34} & \textbf{100.00$\pm$30.91} & \textbf{100.00$\pm$90.33} \\
      \rowcolor{blue!8}\phantom{\textbf{LWM-Planner}}\,(Steps/Success $\downarrow$) &
      \textbf{6.00$\pm$0.00} & \textbf{13.67$\pm$11.47} & \textbf{42.83$\pm$80.65} \\
      \addlinespace
      ReAct + FEC (Cum.\ return norm $\uparrow$) &
      85.90$\pm$31.81 & 90.87$\pm$22.45 & 18.18$\pm$156.46 \\
      \phantom{ReAct + FEC}\,(Steps/Success $\downarrow$) &
      \textbf{6.00$\pm$0.00} & 77.33$\pm$199.12 & -- \\
      \addlinespace
      ReAct (Cum.\ return norm $\uparrow$) &
      $-114.10\pm11.03$ & $-279.57\pm8.57$ & $-336.36\pm22.58$ \\
      \phantom{ReAct}\,(Steps/Success $\downarrow$) &
      -- & -- & -- \\
      \addlinespace
      Reflexion (Cum.\ return norm $\uparrow$) &
      17.31$\pm$13.16 & 47.83$\pm$14.12 & $-212.12\pm104.31$ \\
      \phantom{Reflexion}\,(Steps/Success $\downarrow$) &
      23.33$\pm$7.99 & -- & -- \\
      \addlinespace
      Random (Cum.\ return norm $\uparrow$) &
      0.00$\pm$0.00 & 0.00$\pm$0.00 & 0.00$\pm$0.00 \\
      \phantom{Random}\,(Steps/Success $\downarrow$) &
      -- & -- & -- \\
      \bottomrule
    \end{tabular}}%
  \label{tab:increasingstateactionspace}
\end{table*}

\subsection{Main Results}

We evaluated all the LLM benchmark methods across all environments and tabulate the results in \Cref{tab:main_table_results}. LWM-Planner on average achieves higher cumulative return on every complex multi-step environment, reflecting that fact-augmented planning finds substantially more efficient solutions than competing approaches. Search-only baselines (ToT, RAP) are weaker at the same test-time budget, consistent with the value of a learned symbolic abstraction for guiding lookahead. The ReAct + FEC ablation confirms that fact learning alone provides a large benefit, while the full LWM-Planner combines this with grounded lookahead to deliver the strongest results.

Finally, \Cref{tab:lwm_ablation_norm_return} summarises targeted ablations showing that unguided lookahead can even harm performance, whereas grounding the LLM world model with learned atomic facts is the dominant contributor to the observed gains.
Overall, the results suggest that additional test-time search helps most when it is guided by compact learned facts rather than applied in an unguided manner.

For completeness, the full ALFWorld 134-task results—including per-task cumulative returns—are provided in Appendix~\ref{ALFWorldFullResults}.

\subsection{Insight Experiments}

In the following, we gain insight into why LWM-Planner outperforms Reflexion and ReAct.

\textbf{How does LWM-Planner with its atomic fact learning lead to a higher cumulative return over time compared to baselines?} To investigate why LWM-Planner achieves a higher cumulative return, as seen in \Cref{tab:main_table_results}, we can qualitatively investigate the facts that it learns throughout its process. Specifically on TextFrozenLake ($4\times4;h=0.9$), it learns the hole locations through trial and error initially, as a hole terminates an episode, then during its fact extraction stage it self extracts facts that if known earlier would have improved its value predictions. The agent therefore learns atomic concise facts describing where the holes in the state are, allowing it to avoid them. We provide detailed facts that are retrieved as a case study in Appendix~\ref{sec:factextractedfrozenlakecasestudy}.

Moreover, while capable, ReAct struggled with long-horizon planning and adapting to subtle but critical state changes that weren't immediately obvious from the current observation alone. Whereas, Reflexion, showed learning by refining its high-level strategy. Such benchmark trajectories and memories are outlined in Appendix~\ref{sec:factextractedfrozenlakecasestudy}.

\textbf{Can LWM-Planner scale better with increasing state-action space?} To investigate this we used our procedurally generated TextFrozenLake environments to generate environments of increasing board size, and hence state-action space. We tabulate these results for all the benchmarks in \Cref{tab:increasingstateactionspace}. Interestingly, LWM-Planner achieves the highest normalised cumulative return, and crucially, as the state-action space increases, the other baselines degrade; whereas LWM-Planner is still able to online-learn and solve the environment. This verifies that the combination of both having the fact extraction and compression, plus the ability to forward plan, is crucial, and provides an effective in-context online learning method to learn the minimal fact representations of the environment, in this case, where the holes are to solve the environment optimally.

\section{CONCLUSION}
\label{sec:conclusion}

Compact, experience-derived facts can make test-time lookahead more useful for LLM agents in partially observable, multi-step environments. LWM-Planner implements this idea by extracting atomic facts from episodic experience and using them to augment action proposal, latent world model simulation, and state-value estimation within a recursive lookahead search.

Our approach allows the agent to distill missing task information from interaction and feed it back into future reasoning, leading to more accurate simulations and value assessments within its lookahead search. This occurs without any LLM fine-tuning, relying entirely on in-context learning augmented by dynamically generated facts. We provided a theoretical motivation for this fact-based approach, linking agent performance to the quality of the learned factual abstraction and the fidelity of fact-conditioned simulation.

Across our benchmark suite, empirical evaluations show that LWM-Planner learns from experience and improves cumulative return over baselines that lack this focused fact-learning mechanism or grounded lookahead. The key lies in providing the LLM with the specific, missing pieces of information (atomic facts) it needs to better ground its generative and reasoning capabilities in the context of the current task. More broadly, these results suggest that a small learned symbolic memory can serve as an effective interface between in-context experience and test-time planning.

Future work includes exploring more sophisticated fact extraction and management techniques, such as leveraging insights from causal discovery to identify truly influential facts; dynamically adjusting search depth based on task complexity or uncertainty; and investigating methods for the agent to explicitly identify when its factual knowledge is insufficient and trigger targeted exploration. The LWM-Planner represents a step towards more robust, adaptive, and experience-grounded LLM agents for complex sequential decision-making.

\section*{Acknowledgements}
We extend our gratitude to the anonymous reviewers, area and program chairs, and members of the van der Schaar lab for their valuable feedback and suggestions. SH, ML \& TP gratefully acknowledge the sponsorship and support of AstraZeneca. This work was supported by Azure sponsorship credits granted by Microsoft’s AI for Good Research Lab and by Microsoft’s Accelerate Foundation Models Academic Research Initiative.

\bibliographystyle{plainnat}   %
\bibliography{main}

\section*{CHECKLIST}

\begin{enumerate}

  \item For all models and algorithms presented, check if you include:
  \begin{enumerate}
    \item A clear description of the mathematical setting, assumptions, algorithm, and/or model. [Yes --- Sections~\ref{sec:theoretical_framework} and~\ref{sec:method}]
    \item An analysis of the properties and complexity (time, space, sample size) of any algorithm. [Yes --- Section~\ref{sec:theoretical_framework} (sample guarantees) and Appendix~\ref{app:cost_analysis} (computational cost)]
    \item (Optional) Anonymized source code, with specification of all dependencies, including external libraries. [No --- We provide an extensive appendix which includes implementation details.]
  \end{enumerate}

  \item For any theoretical claim, check if you include:
  \begin{enumerate}
    \item Statements of the full set of assumptions of all theoretical results. [Yes --- Section~\ref{sec:theoretical_framework}]
    \item Complete proofs of all theoretical results. [Yes --- Appendix~\ref{appendix:theory}]
    \item Clear explanations of any assumptions. [Yes --- Section~\ref{sec:theoretical_framework}]
  \end{enumerate}

  \item For all figures and tables that present empirical results, check if you include:
  \begin{enumerate}
    \item The code, data, and instructions needed to reproduce the main experimental results (either in the supplemental material or as a URL). [Yes --- The environments are publicly available, and implementation details are described in Appendix~\ref{sec:EvaluationDetails}.]
    \item All the training details (e.g., data splits, hyperparameters, how they were chosen). [Yes --- Appendix~\ref{sec:EvaluationDetails}]
    \item A clear definition of the specific measure or statistics and error bars (e.g., with respect to the random seed after running experiments multiple times). [Yes --- Section~\ref{sec:experiments} and Appendix~\ref{sec:EvaluationDetails}]
    \item A description of the computing infrastructure used. (e.g., type of GPUs, internal cluster, or cloud provider). [Yes --- Appendix~\ref{sec:EvaluationDetails}]
  \end{enumerate}

  \item If you are using existing assets (e.g., code, data, models) or curating/releasing new assets, check if you include:
  \begin{enumerate}
    \item Citations of the creator If your work uses existing assets. [Yes --- Section~\ref{sec:BenchmarkEnvironmentsDetails}]
    \item The license information of the assets, if applicable. [No --- Public benchmarks cited without explicit license discussion]
    \item New assets either in the supplemental material or as a URL, if applicable. [Not Applicable]
    \item Information about consent from data providers/curators. [Not Applicable]
    \item Discussion of sensible content if applicable, e.g., personally identifiable information or offensive content. [Not Applicable]
  \end{enumerate}

  \item If you used crowdsourcing or conducted research with human subjects, check if you include:
  \begin{enumerate}
    \item The full text of instructions given to participants and screenshots. [Not Applicable]
    \item Descriptions of potential participant risks, with links to Institutional Review Board (IRB) approvals if applicable. [Not Applicable]
    \item The estimated hourly wage paid to participants and the total amount spent on participant compensation. [Not Applicable]
  \end{enumerate}

\end{enumerate}

\newpage
\appendix

\onecolumn

\addcontentsline{toc}{section}{Appendix}
\part{Appendix}
\parttoc

\vfill

\clearpage

\newpage

\section{Extended Related Work}
\label{app:ExtendedRelatedWork}

Our work builds upon several lines of research in LLM-based agents, model-based reinforcement learning, and the use of external knowledge for planning. The related work in the following extends that given in the main paper.

\paragraph{LLM Agents}
Early LLM agents like ReAct \citep{yao2023react} introduced the concept of interleaving reasoning (thought) and action generation. Reflexion \citep{shinn2023reflexion} extended this by incorporating self-reflection, where an LLM analyzes past failures to generate textual feedback for future trials. This episodic learning is akin to our fact extraction, but Reflexion focuses on high-level advice rather than structured atomic facts for a world model. More recent agent work continues to target the same interactive reasoning bottlenecks with stronger scaffolding: Agent Q combines guided MCTS, self-critique, and offline preference optimization for web agents \citep{putta2024agent}, while DORA studies how to sustain iterative reflection in MiniWoB++ and ALFWorld by dynamically optimizing reflection prompts \citep{li2025dora}. For our purposes, ReAct and Reflexion remain the most comparable training-free baselines, since they isolate prompt-level acting and reflection without extra policy learning or auxiliary prompt-optimization modules.

\paragraph{Experience Retrieval and Skill Abstraction in POMDPs}
Several recent methods improve in-context sequential decision making by retrieving or synthesizing better contextual guidance from prior experience. Synapse retrieves abstracted full trajectories as exemplars for computer-control tasks \citep{zheng2023synapse}, while TRAD performs step-wise thought retrieval and aligned local-context selection to reduce irrelevant prompt content in ALFWorld- and Mind2Web-style settings \citep{zhou2024trad}. A complementary direction is to distill higher-level reusable guidance: LEAP learns explicit principles from mistakes for few-shot adaptation \citep{zhang2024context}, and SkillGen constructs domain-level action-centric skills to produce focused, step-wise prompts for sequential decision making \citep{ding2026skillgen}. Our approach differs from both families. We do not retrieve raw trajectories from a memory bank or precompute domain skills offline; instead, we extract minimal atomic facts online from the agent's own episodes and use them to ground a latent world-model lookahead in the current environment's transition structure.

\paragraph{LLM-Based Planning and World Models}
Several approaches have explored using LLMs for planning. Some use LLMs to score or propose actions within classical search algorithms like Monte-Carlo Tree Search (MCTS) \citep{hao2023reasoning, kagaya2024rap, liu2024reasonfutureactnow}. For instance, Retrieval-Augmented Planning (RAP) \citep{kagaya2024rap, pouplin2024retrieval} retrieves full past trajectories to inform MCTS, often requiring environment interaction for tree expansion. Other works like \citep{chae2024web} use LLMs to build explicit, but often one-step, world models that predict state transitions or webpage changes. \citep{xie2023translating} use LLMs to translate natural language goals into formal planning problems. A related but orthogonal direction uses LLMs to design or refine the simulator itself rather than to serve as the online planner: G-Sim combines LLM-guided structural design with empirical calibration for general-purpose simulators \citep{holt2025g}, while D3 uses LLMs to iteratively propose, evaluate, and refine interpretable dynamical-system models from data \citep{holt2024data}. Our approach differs by using the LLM itself as a latent world model for multi-step simulation during lookahead, conditioned on dynamically extracted atomic facts, rather than just retrieving raw trajectories or learning a separate explicit model. The idea of an LLM as a ``simulator'' has been explored, e.g., for few-shot generation \citep{prystawski2023think}, but its integration with online fact-based learning for improved planning is a novel aspect of our work.

\paragraph{Inference-Time Scaling}
Recent work has shown that scaling test-time compute can substantially improve static reasoning and code-generation performance, in some cases more efficiently than scaling model size alone \citep{snell2024scaling, wu2024inference}. DISC dynamically decomposes reasoning traces during search to focus compute on difficult steps \citep{light2025disc}; SFS frames code generation as search over code space with verifier feedback \citep{light2025sfs}; Sample, Scrutinize and Scale studies how sampling-based search improves as verification is scaled \citep{zhao2025sample}; and Every Rollout Counts derives direction-oriented resource allocation for fixed rollout budgets in mathematical reasoning \citep{wang2025every}. These methods mostly study static tasks such as math or code, where candidate solutions can be compared, verified, or revised after the fact. Our setting is interactive and partially observable: search quality depends not only on search budget, but also on whether the agent has learned the environment-specific latent state information needed for accurate transition prediction. The Pareto analysis in Appendix~\ref{app:pareto_front} complements this literature by showing that raw search depth alone saturates in our POMDP setting, whereas fact-grounded lookahead converts additional compute into better decisions.

\paragraph{Dyna-Style Architectures and Fact-Based RL}
Our method is inspired by Dyna-style reinforcement learning \citep{sutton1990integrated, sutton2018reinforcement}, where an agent learns a model of the world from real interactions and then uses this model to generate simulated experiences for planning. In our case, the ``model'' is implicitly represented by the set of atomic facts combined with the LLM's inherent simulation capabilities. The extraction of facts from trajectories is analogous to model learning, and the lookahead search is planning with this model. While traditional Dyna uses tabular or parametric models, we leverage the LLM's ability to reason over textual facts. Outside language-mediated POMDPs, adjacent continuous-control work also studies multi-timescale controller structure for efficient exploration and fast reactions, e.g. EvoControl's learned high-/low-frequency bi-level control \citep{holt2025evocontrol}. The concept of using facts or symbolic knowledge in RL is not new \citep{abel2020value}, including outside the LLM domain, for instance, through the discovery of ODEs for treatment effect inference \citep{kacprzyk2024ode}. Our integration of LLM-driven simulation with online atomic fact extraction brings this paradigm into the modern LLM agent landscape.

\paragraph{Knowledge Augmentation for LLMs}
Retrieval-Augmented Generation (RAG) \citep{lewis2021retrieval} is a common paradigm for providing LLMs with external knowledge. Our fact extraction and augmentation mechanism can be seen as a specialized form of RAG where the ``retrieved'' knowledge (atomic facts) is actively generated and refined from the agent's own experience rather than drawn from a static corpus. This makes the knowledge highly task-specific and current, directly addressing the information needs identified through interaction, rather than relying on potentially less relevant or outdated general knowledge. More broadly, LLM-based augmentation has also been proposed for scientific-evaluation workflows such as peer review \citep{wei2025ai}, although that setting targets human deliberation rather than environment interaction.

Our work distinguishes itself by the tight integration of online atomic fact learning from episodic experience with an LLM-driven, multi-step lookahead planner where the LLM serves as both a latent world model and value function, all operating in-context without weight updates. This focus on distilled, symbolic knowledge (atomic facts) aims to provide a more structured and efficient way for the LLM to learn from experience compared to methods relying on raw trajectory retrieval or general textual reflections.

\section{Prompt Structures}
\label{app:prompts}

This section provides the core prompt structures employed by LWM-Planner's LLM components. These conceptual prompts are dynamically populated at runtime with specific content such as environment descriptions (\texttt{\{\{env\_description\_str\}\}}), the current set of atomic facts (\texttt{\{\{current\_facts\_list\_str\}\}}), the current observation (\texttt{\{\{current\_observation\_str\}\}}), and relevant interaction history (\texttt{\{\{history\_lines\_str\}\}}). All LLM interactions leverage a structured function-calling interface. The prompts guide the LLM to produce a ``thought'' (chain-of-thought reasoning) and then invoke a specified function with the relevant arguments, following \citet{wei2022chain}.

\subsection{Fact Extractor LLM Prompts (\texorpdfstring{$\ExtractorLLM$}{ExtractorLLM})}
\label{ssec:fact_extractor_prompts_app_a}
Invoked post-episode to extract new atomic facts and refine the fact memory. These LLM calls use a temperature of 0.0 for deterministic fact processing.

\subsubsection{Fact Elicitation from Trajectory (\texttt{fact\_extraction})}
\begin{promptbox}{Fact Elicitation}
\textbf{SYSTEM}: You are an expert agent.

\textbf{USER}:
You are a LLM fact extraction agent. Operating in the following environment defined below.
Your task is to extract atomic facts that you did not know already to help with
predicting the next state value / next reward, such that if you had this fact you would
have improved your prediction for the next state value, when being a world model
(that is be able to complete the task optimally in the minimum number of steps,
therefore extract key information that helps you).

ENVIRONMENT DESCRIPTION:
\begin{lstlisting}[basicstyle=\ttfamily\footnotesize, frame=none, columns=fullflexible]
{{ env_description_str }}
\end{lstlisting}

{{ episode_trajectory_summary_str }}
\textit{// This includes: Outcome: {{ episode\_outcome\_str }} (Total Reward: ...)}
\textit{// And a sequence like:}
\textit{// "1. Obs: S00 | Act: right | Reward: 0.0 | Next\_Obs: S01}
\textit{//  2. Obs: S01 | Act: down | Reward: -1.0 | Next\_Obs: S11 (Hole)"}

We already know and have the following facts (ensure you do not duplicate them)
(at beginning of episode):
\begin{lstlisting}[basicstyle=\ttfamily\footnotesize, frame=none, columns=fullflexible]
{{ current_facts_list_str }}
\end{lstlisting}
\textit{// e.g., ["hole@(1,1)", "object\_A\_is\_on\_table\_B"]}

Now respond with minimal new atomic facts (at beginning of episode) that you
did not already know, for the rest of the states assume you already know them.
Make facts as concise as possible. Optimize them for other agents reading and
decision making given a current state. Never duplicate the facts if they already
exist within our following fact set. Do not include any other text or reasoning,
just the facts. If no new facts just return empty string.
Use function "fact\_extraction" to do this now.

\textbf{Function Call}: \texttt{fact\_extraction}
\textbf{Arguments}:
  \texttt{"thought"}: (string) Your reasoning process for identifying these new facts.
  \texttt{"new\_facts"}: (list of strings) The list of newly extracted atomic facts.
               If no new critical facts are found, provide an empty list.
               Example: \texttt{["hole\_at(1,1)", "goal\_at(3,3)"]}
\end{promptbox}

\newpage\subsubsection{(Optional) Fact Memory Compression and Refinement (\texttt{fact\_redundancy\_remover})}
\begin{promptbox}{Fact Memory Compression}
\textbf{SYSTEM}: You are an expert agent.

\textbf{USER}:
Remove any redundant facts that are already included in the list of all facts
given to you. You will also always be given the environment description, therefore
you can use that to help you remove any redundant facts. Always keep all exhaustive
factual knowledge, just remove any duplicate facts, or redundant information already
contained within the environment description. You optimize the facts so they can be
read by another LLM agent using them for being a world model of the environment
(where the agent has to simulate given a state,action to predict the next state,
next reward and terminal state). Remove any redundancy, otherwise copy over the
existing facts verbatim.

ENVIRONMENT DESCRIPTION:
\begin{lstlisting}[basicstyle=\ttfamily\footnotesize, frame=none, columns=fullflexible]
{{ env_description_str }}
\end{lstlisting}

Facts (at beginning of episode):
\begin{lstlisting}[basicstyle=\ttfamily\footnotesize, frame=none, columns=fullflexible]
{{ current_facts_list_for_compression_str }}
\end{lstlisting}
\textit{// e.g., ["hole\_at(1,1)", "object\_A\_is\_on\_table\_B", "hole\_at(row=1,col=1)"]}

List of all facts (at beginning of episode) that you did not know already (not
contained within the environment description) to help with predicting the next
state value / next reward, such that if you had this fact you would have improved
your prediction for the next state value, when being a world model. Optimize them
for other agents reading and decision making given a current state.
Use function "fact\_redundancy\_remover" to do this now.

\textbf{Function Call}: \texttt{fact\_redundancy\_remover}
\textbf{Arguments}:
  \texttt{"thought"}: (string) Your reasoning for the compression and refinement decisions.
  \texttt{"all\_facts"}: (list of strings) The refined, concise list of essential atomic facts.
\end{promptbox}

\newpage\subsection{Planner LLM Prompts (\texorpdfstring{$g_{\phi}$}{gPhi})}
\label{ssec:planner_prompts_app_revised}
Used within the lookahead search. These LLM calls operate with a temperature of 0.0 for deterministic planning outcomes, as described in Section~\ref{sec:method}.

\subsubsection{Action Proposal (\texttt{propose\_actions})}
\begin{promptbox}{Action Proposal}
\textbf{SYSTEM}: You must call \texttt{propose\_actions}.

\textbf{USER}:
You are an next best action proposing agent, task with solving the given environment
defined below optimally. Your task is to propose up to \texttt{\{\{ branch\_factor\_int \}\}}
most likely next best unique actions to try next that make the agent solve the
environment task optimally.

Environment description:
\begin{lstlisting}[basicstyle=\ttfamily\footnotesize, frame=none, columns=fullflexible]
{{ env_description_str }}
\end{lstlisting}

Atomic facts that help to predict next state value / next reward accurately
(at beginning of episode):
\begin{lstlisting}[basicstyle=\ttfamily\footnotesize, frame=none, columns=fullflexible]
{{ current_facts_list_str }}
\end{lstlisting}

Current Observation:
\begin{lstlisting}[basicstyle=\ttfamily\footnotesize, frame=none, columns=fullflexible]
{{ current_observation_str }}
\end{lstlisting}

Recent history (old->new):
\begin{lstlisting}[basicstyle=\ttfamily\footnotesize, frame=none, columns=fullflexible]
{{ history_lines_str }}
\end{lstlisting}
\textit{// e.g., "Obs: S00 -> Act: right -> Obs: S01"}

You now see Observation: \texttt{\{\{ current\_observation\_str \}\}}. Now reason through
(using the atomic facts, and recent observation and action history), then give
propose up to \texttt{\{\{ branch\_factor\_int \}\}} most likely next best unique actions to
try next that make the agent solve the environment task optimally, each from
\texttt{\{\{ allowed\_actions\_list\_str \}\}}. You will call the function \texttt{propose\_actions} to do this.

\textbf{Function Call}: \texttt{propose\_actions}
\textbf{Arguments}:
  \texttt{"thought"}: (string) Your reasoning for selecting these actions.
  \texttt{"actions"}: (list of strings) The proposed actions.
\end{promptbox}

\newpage\subsubsection{Latent World Model - Single Step Simulation (\texttt{simulate\_step})}
\begin{promptbox}{World Model Simulation}
\textbf{SYSTEM}: You must call \texttt{simulate\_step}.

\textbf{USER}:
You are a latent world model for the given environment defined below. Given the current
observation and an action, predict: the next (perhaps latent) observation, immediate
reward and done flag (whether the resulting state ends the episode). You must be as
accurate as possible, as your output is used as a planner to solve the given
environment optimally.

Environment description:
\begin{lstlisting}[basicstyle=\ttfamily\footnotesize, frame=none, columns=fullflexible]
{{ env_description_str }}
\end{lstlisting}

Atomic facts that help to predict next state value / next reward accurately
(at beginning of episode):
\begin{lstlisting}[basicstyle=\ttfamily\footnotesize, frame=none, columns=fullflexible]
{{ current_facts_list_str }}
\end{lstlisting}

Current Observation:
\begin{lstlisting}[basicstyle=\ttfamily\footnotesize, frame=none, columns=fullflexible]
{{ current_observation_str }}
\end{lstlisting}

Recent history (old->new):
\begin{lstlisting}[basicstyle=\ttfamily\footnotesize, frame=none, columns=fullflexible]
{{ history_lines_str }}
\end{lstlisting}

Given action to simulate the next observation and reward for:
\begin{lstlisting}[basicstyle=\ttfamily\footnotesize, frame=none, columns=fullflexible]
{{ action_to_simulate_str }}
\end{lstlisting}

You now see Observation: \texttt{\{\{ current\_observation\_str \}\}}. Now reason through
(using the atomic facts, and recent observation and action history), and predict
the next (perhaps latent) observation, immediate reward, and done flag (whether the
resulting state ends the episode) after taking the given action of
\texttt{\{\{ action\_to\_simulate\_str \}\}}. You must be as accurate as possible (for the
predicted reward, and ensure your predicted next observation has enough observation
information to predict future rewards for the given task in the given environment),
as your output is used as a planner to solve the given environment optimally.
You will call the function \texttt{simulate\_step} to do this.

\textbf{Function Call}: \texttt{simulate\_step}
\textbf{Arguments}:
  \texttt{"thought"}: (string) Your reasoning for the predicted outcome.
  \texttt{"next\_observation"}: (string) The predicted (perhaps latent) observation after the action.
  \texttt{"reward"}: (float) The predicted immediate reward (float) after the action.
  \texttt{"done"}: (boolean) True if the resulting state ends the episode (terminal), false otherwise.
\end{promptbox}

\newpage\subsubsection{Value Estimator (\texttt{estimate\_value})}
\begin{promptbox}{Value Estimation}
\textbf{SYSTEM}: You must call \texttt{estimate\_value}.

\textbf{USER}:
You are a state value function estimator for the given environment defined below.
You must predict the current cumulative future reward from the current (perhaps latent)
observation. You must be as accurate as possible, as your output is used as a planner
to solve the given environment optimally. The environment's discount factor is
\texttt{\{\{ discount\_gamma\_float \}\}}.

Environment description:
\begin{lstlisting}[basicstyle=\ttfamily\footnotesize, frame=none, columns=fullflexible]
{{ env_description_str }}
\end{lstlisting}

Atomic facts that help to predict next state value / next reward accurately
(at beginning of episode):
\begin{lstlisting}[basicstyle=\ttfamily\footnotesize, frame=none, columns=fullflexible]
{{ current_facts_list_str }}
\end{lstlisting}

Current Observation (to predict the current cumulative future reward for):
\begin{lstlisting}[basicstyle=\ttfamily\footnotesize, frame=none, columns=fullflexible]
{{ observation_to_evaluate_str }}
\end{lstlisting}

Recent history (old->new):
\begin{lstlisting}[basicstyle=\ttfamily\footnotesize, frame=none, columns=fullflexible]
{{ history_lines_str }}
\end{lstlisting}

You now see Observation: \texttt{\{\{ observation\_to\_evaluate\_str \}\}}. Now reason through
(using the atomic facts, and recent observation and action history), and predict the
current cumulative future reward from the current (perhaps latent) observation.
You must be as accurate as possible, as your output is used as a planner to solve
the given environment optimally. The environment's discount factor is
\texttt{\{\{ discount\_gamma\_float \}\}}. You will call the function \texttt{estimate\_value} to do this.

\textbf{Function Call}: \texttt{estimate\_value}
\textbf{Arguments}:
  \texttt{"thought"}: (string) Your reasoning for this value estimate.
  \texttt{"value"}: (float) The estimated state value (float). The cumulative future reward
           from the current (perhaps latent) observation.
\end{promptbox}

\newpage\section{Conceptual Details of LLM Component Prompts}
\label{app:prompt_details}

This appendix provides a more formal conceptual overview of the prompts used to guide the LLM components within the LWM-Planner framework, as detailed in \Cref{sec:method}. These prompts are designed to elicit specific reasoning and generation capabilities from the LLMs, enabling them to function as fact extractors, world model simulators, and value function approximators, all operating through a structured function-calling interface. The specific fields like \texttt{\{\{ env\_description\_str \}\}}, \texttt{\{\{ current\_facts\_list\_str \}\}}, etc., are placeholders populated dynamically by the agent at runtime.

\subsection{Fact Elicitation and Memory Refinement LLM (\texorpdfstring{$\Psi_{\text{LLM}}$}{PsiLLM})}
\label{app:fact_extraction_prompts_detail}

The Fact Elicitation and Memory Refinement LLM, denoted $\Psi_{\text{LLM}}$ (see \Cref{sec:crad_discussion_final_updated}), is responsible for constructing and maintaining the agent's symbolic Fact Memory, $\FactMemory_t$. This typically occurs post-episode, leveraging the trajectory $\tau_e = (o_0, a_0, r_0, \dots, o_H)$ from episode $e$. The process involves two main LLM-driven function calls: fact extraction and fact compression/refinement.

\subsubsection{Fact Elicitation (\texttt{fact\_extraction} call)}
\begin{itemize}[leftmargin=*]
    \item \textbf{Objective}: To identify a concise set of new, task-relevant atomic facts $\Delta \FactSet_e$ from the trajectory $\tau_e$. These facts, when incorporated into the existing $\FactMemory_t$, are intended to improve the agent's predictive capabilities and decision-making quality, effectively learning and refining the abstraction function $\Psi$.
    \item \textbf{Input to LLM} (Context provided in the user prompt):
    \begin{enumerate}
        \item \textbf{Environment Description} (\texttt{\{\{ env\_description\_str \}\}}): A comprehensive description of the environment $\mathcal{G}$, including its rules, objectives, action space $\mathcal{A}$, and the nature of observations $o \in \mathcal{O}$.
        \item \textbf{Current Fact Memory} (\texttt{\{\{ current\_facts\_list\_str \}\}}): The set of atomic facts, $\FactMemory_t$, that the agent currently holds, passed as a list of strings.
        \item \textbf{Episode Trajectory Summary} (\texttt{\{\{ episode\_trajectory\_summary\_str \}\}}): A string summarizing the completed episode $\tau_e$, including the outcome (e.g., success/failure), total reward, and a formatted sequence of observations, actions, rewards, and next observations.
    \end{enumerate}
    \item \textbf{LLM Task Specification} (Instructions guiding the LLM to generate arguments for the \texttt{fact\_extraction} function):
    \begin{enumerate}
        \item Analyze the provided \texttt{\{\{ episode\_trajectory\_summary\_str \}\}} in conjunction with the \texttt{\{\{ current\_facts\_list\_str \}\}} and \texttt{\{\{ env\_description\_str \}\}}.
        \item Identify ``minimal new atomic facts'' ($\Delta \FactSet_e$) that are evidenced by or can be reliably inferred from the trajectory and are \emph{not} already present or directly implied by the \texttt{\{\{ current\_facts\_list\_str \}\}} or \texttt{\{\{ env\_description\_str \}\}}.
        \item Prioritize facts crucial for explaining significant trajectory events (e.g., unexpected rewards, state transitions leading to success or failure, particularly those that would improve the prediction of state values or rewards if known beforehand).
        \item Ensure facts are concise, atomic, and adhere to any implicitly defined predicate vocabulary illustrated by examples (e.g., \texttt{hole\_at(x,y)} for TextFrozenLake, \texttt{object\_X\_is\_in\_receptacle\_Y} for ALFWorld).
        \item The LLM should structure its output to call the \texttt{fact\_extraction} function, providing its internal reasoning as the \texttt{thought} argument and the identified new facts as a list of strings for the \texttt{new\_facts} argument.
    \end{enumerate}
    \item \textbf{Qualitative Goal}: The LLM engages in a form of abductive reasoning to hypothesize underlying environmental properties or dynamics. These hypotheses, framed as new atomic facts, should explain observed phenomena in $\tau_e$, especially aspects that were surprising or poorly modeled by the existing $\FactMemory_t$. The aim is to iteratively refine $\FactMemory_t$ towards a more accurate and value-preserving abstraction, contributing to minimizing $\epsilon_{\text{sim}}$.
\end{itemize}

\subsubsection{(Optional) Fact Compression and Refinement (\texttt{fact\_redundancy\_remover} call)}
\begin{itemize}[leftmargin=*]
    \item \textbf{Objective}: To maintain a compact, non-redundant, and highly informative Fact Memory $\FactMemory_{t+1}$. This enhances computational efficiency within the LLM's context window and can improve the generalization of the Planner LLM by focusing its attention on the most salient information.
    \item \textbf{Input to LLM} (Context provided in the user prompt):
    \begin{enumerate}
        \item \textbf{Environment Description} (\texttt{\{\{ env\_description\_str \}\}}).
        \item \textbf{Augmented Fact Set} (\texttt{\{\{ current\_facts\_list\_for\_compression\_str \}\}}): The union of the previous Fact Memory and newly extracted facts, $\FactMemory_t \cup \Delta \FactSet_e$, passed as a list of strings.
    \end{enumerate}
    \item \textbf{LLM Task Specification} (Instructions guiding the LLM to generate arguments for the \texttt{fact\_redundancy\_remover} function):
    \begin{enumerate}
        \item Review the entire provided set of facts for semantic overlap, direct redundancy (e.g., facts identical to or trivially inferable from the \texttt{\{\{ env\_description\_str \}\}}), or subsumption by more general facts within the set.
        \item Generate a revised and refined fact set, $\FactMemory_{t+1}$, by removing or merging facts to enhance conciseness while preserving all critical, distinct pieces of information essential for optimal decision-making and world model accuracy.
        \item The LLM should structure its output to call the \texttt{fact\_redundancy\_remover} function, providing its reasoning as the \texttt{thought} argument and the complete, refined list of facts as the \texttt{all\_facts} argument.
    \end{enumerate}
    \item \textbf{Qualitative Goal}: This process aims to manage the complexity of the abstract state representation $|\mathcal{Z_F}|$. By ensuring $\FactMemory_{t+1}$ is maximally informative yet minimally redundant, it helps focus the Planner LLM's reasoning and prevents dilution of critical information, especially within a fixed context window.
\end{itemize}

\subsection{Planner LLM (\texorpdfstring{$g_{\phi}$}{g	extunderscore{phi}}) Components for Lookahead Search}
\label{app:planner_prompts_detail}

The Planner LLM, $g_{\phi}$, is central to the lookahead search mechanism described in \Cref{sec:method}. It is invoked through three distinct function calls to propose actions, simulate their outcomes, and estimate the value of states encountered during the search. An abstract state $z_k$ at any point in the search (real or simulated) is effectively represented by $(o_k, \FactMemory_t)$, where $o_k$ is the observation at that point and $\FactMemory_t$ is the agent's current, fixed set of atomic facts for the episode.

\subsubsection{Action Proposal (\texttt{propose\_actions} call)}
\label{app:action_proposal_prompt_detail}
\begin{itemize}[leftmargin=*]
    \item \textbf{Objective}: To generate a focused yet diverse set of up to $k_B$ candidate actions from the current (potentially simulated) observation $o_k$ that are relevant for achieving the task goal or for effective exploration during planning.
    \item \textbf{Input to LLM} (Context provided in the user prompt):
    \begin{enumerate}
        \item \textbf{Environment Description} (\texttt{\{\{ env\_description\_str \}\}}).
        \item \textbf{Current Atomic Facts} (\texttt{\{\{ current\_facts\_list\_str \}\}}): The agent's Fact Memory, $\FactMemory_t$.
        \item \textbf{Current Observation} (\texttt{\{\{ current\_observation\_at\_node\_k\_str \}\}}): The observation $o_k$ from which actions are to be proposed.
        \item \textbf{Recent Trajectory History} (\texttt{\{\{ recent\_history\_for\_prompt\_str \}\}}): An excerpt of the (simulated or real) trajectory within the current lookahead search (or agent history) leading to $o_k$. This is typically a list of \texttt{"Obs: ..."} and \texttt{"Act: ..."} strings.
        \item \textbf{Available Actions} (\texttt{\{\{ available\_actions\_list\_str \}\}}): The set of legally permissible actions $\mathcal{A}(s_k)$ from the underlying ground state $s_k$ corresponding to $o_k$ (or the full action set $\mathcal{A}$).
        \item \textbf{Branching Factor} (\texttt{\{\{ branch\_factor\_k\_B\_int \}\}}): The maximum number of actions to propose.
    \end{enumerate}
    \item \textbf{LLM Task Specification} (Instructions for the \texttt{propose\_actions} function):
    Given $o_k$, $\FactMemory_t$, and \texttt{\{\{ recent\_history\_for\_prompt\_str \}\}}, propose up to \texttt{\{\{ branch\_factor\_k\_B\_int \}\}} distinct actions from \texttt{\{\{ available\_actions\_list\_str \}\}} that appear most promising. The selection should be informed by the current understanding of the environment as encoded in $\FactMemory_t$ and the immediate context $o_k$. The LLM returns its reasoning (\texttt{thought}) and the list of \texttt{actions}.
\end{itemize}

\subsubsection{Single-Step Abstract Simulation (\texttt{simulate\_step} call)}
\label{app:simulation_prompt_detail}
\begin{itemize}[leftmargin=*]
    \item \textbf{Objective}: To predict the immediate outcome---next observation $o'_{j}$, immediate reward $r_j$, and termination status $d'_j$---of executing a proposed action $a_j$ from the current observation $o_k$, conditioned on the Fact Memory $\FactMemory_t$. This approximates the abstract transition $\hat{\mathcal{T}}_{\Psi}$ and reward $\hat{R}_{\Psi}$ functions.
    \item \textbf{Input to LLM} (Context provided in the user prompt):
    \begin{enumerate}
        \item \textbf{Environment Description} (\texttt{\{\{ env\_description\_str \}\}}).
        \item \textbf{Current Atomic Facts} (\texttt{\{\{ current\_facts\_list\_str \}\}}, i.e., $\FactMemory_t$).
        \item \textbf{Current Observation} (\texttt{\{\{ current\_observation\_at\_node\_k\_str \}\}}, i.e., $o_k$).
        \item \textbf{Action to Simulate} (\texttt{\{\{ action\_to\_simulate\_str \}\}}, i.e., $a_j \in \mathcal{A}$).
        \item \textbf{Recent Trajectory History} (\texttt{\{\{ recent\_history\_for\_prompt\_str \}\}}) leading to $o_k$.
    \end{enumerate}
    \item \textbf{LLM Task Specification} (Instructions for the \texttt{simulate\_step} function):
    Predict the \texttt{next\_observation} ($o'_j$), \texttt{reward} ($r_j$), and \texttt{done} ($d'_j$) status that would result from taking \texttt{\{\{ action\_to\_simulate\_str \}\}} from \texttt{\{\{ current\_observation\_at\_node\_k\_str \}\}}, given \texttt{\{\{ current\_facts\_list\_str \}\}}. The prediction should be deterministic (temperature for this LLM call is 0.0 as per \Cref{sec:method}) and consistent with the known facts and environment rules. The LLM returns its reasoning (\texttt{thought}) and these three predicted outcomes.
    \item \textbf{Qualitative Goal}: The LLM leverages $\FactMemory_t$ to make informed predictions. For instance, a fact like \texttt{hole\_at(x,y)} should lead to a prediction of a terminal state and negative reward if $a_j$ leads to $(x,y)$. If facts are insufficient, the LLM relies on its pre-trained knowledge, as discussed in the context of minimizing $\delta_{\text{model}}$.
\end{itemize}

\subsubsection[Abstract State-Value Estimation (approximating V-M-Psi-tilde)]{Abstract State-Value Estimation (\texttt{estimate\_value} call, approximating $\hat{V}_{\tilde{M}_{\Psi}}$)}
\label{app:value_estimation_prompt_detail}
\begin{itemize}[leftmargin=*]
    \item \textbf{Objective}: To estimate the expected total discounted future reward, $V(o_k | \FactMemory_t) \approx V^*_{\tilde{M}_{\Psi}}(z_k)$, obtainable from the abstract state $z_k \triangleq (o_k, \FactMemory_t)$, particularly for leaf nodes in the lookahead search tree.
    \item \textbf{Input to LLM} (Context provided in the user prompt):
    \begin{enumerate}
        \item \textbf{Environment Description} (\texttt{\{\{ env\_description\_str \}\}}).
        \item \textbf{Current Atomic Facts} (\texttt{\{\{ current\_facts\_list\_str \}\}}, i.e., $\FactMemory_t$).
        \item \textbf{Observation to Evaluate} (\texttt{\{\{ observation\_to\_evaluate\_str \}\}}, i.e., $o_k$).
        \item \textbf{Recent Trajectory History} (\texttt{\{\{ recent\_history\_for\_prompt\_str \}\}}) leading to $o_k$.
        \item \textbf{Discount Factor} (\texttt{\{\{ discount\_gamma\_float \}\}}, i.e., $\gamma$).
    \end{enumerate}
    \item \textbf{LLM Task Specification} (Instructions for the \texttt{estimate\_value} function):
    Estimate the cumulative future discounted reward (\texttt{value}) achievable from \texttt{\{\{ observation\_to\_evaluate\_str \}\}}, considering \texttt{\{\{ current\_facts\_list\_str \}\}} and the overall task objective. The estimation should be deterministic (temperature for this LLM call is 0.0). The LLM returns its reasoning (\texttt{thought}) and the estimated \texttt{value}.
    \item \textbf{Qualitative Goal}: The LLM assesses the long-term utility by considering the strategic implications of known facts (e.g., proximity to a goal, known hazards, locked doors leading to goal areas) relative to the task.
\end{itemize}

The LWM-Planner's lookahead search (\Cref{sec:method}) systematically invokes these LLM functionalities. The \texttt{propose\_actions} function generates branches, \texttt{simulate\_step} projects these branches forward one step in the abstract model $\tilde{M}_{\Psi}$, and \texttt{estimate\_value} provides valuations $\hat{V}_{\tilde{M}_{\Psi}}$ at the search frontier or for terminal states. This entire process relies on the dynamically updated $\FactMemory_t$ to ground the LLM's powerful generative and reasoning capabilities in task-specific, experience-derived knowledge.

\newpage
\section{Algorithm Details and Reproducibility}
\label{ssec:algorithm_summary_revised_detailed}

The LWM-Planner agent enhances its decision-making by iteratively learning atomic facts and using them in a lookahead planning process. This section details its operational cycle, broken down into a main agent loop (Algorithm~\ref{alg:lwm_planner_main_loop_alg2e}) and sub-algorithms for the core planning (Algorithm~\ref{alg:lwm_recursive_lookahead_alg2e}) and fact learning (Algorithm~\ref{alg:lwm_fact_learning_alg2e}) phases. This modular description aims to provide clarity for reproducibility, aligning with the methodology presented in \Cref{sec:method} and the agent's Python implementation. Key LLM interactions are managed via a structured function-calling interface, detailed in Appendix~\ref{app:prompts} and Appendix~\ref{app:prompt_details}. For a compact implementation-facing specification of the core method, see Appendix~\ref{app:reference_impl_notes}.

{\small
\IncMargin{0.8em}
\begin{algorithm}
    \caption{LWM-Planner: Main Agent Loop}
    \label{alg:lwm_planner_main_loop_alg2e}
    \SetKwInOut{Input}{Initialize}
    \SetKwData{GlobalFactMemory}{Global Fact Memory $\FactMemory$}
    \SetKwData{LLMComponents}{LLM Components}
    \SetKwData{PsiLLMComp}{$\Psi_{\text{LLM}}$: Fact Extractor \& Refiner LLM}
    \SetKwData{GPhiComp}{$g_{\phi}$: Planner LLM, comprising $g_{\phi}^{\text{propose}}, g_{\phi}^{\text{simulate}}, g_{\phi}^{\text{value}}$}
    \SetKwData{Hyperparams}{Hyperparameters: $D_s, k_B, \gamma, \lambda_{\text{step}}, T_{\text{max}}, H_L$}
    \SetKwData{ShortTermHistory}{Short-term history buffer $\mathcal{H}$}
    \SetKwFunction{RecursiveLookaheadPlan}{\texttt{RecursiveLookaheadPlan}}
    \SetKwFunction{LearnFactsAndUpdateMemory}{\texttt{LearnFactsAndUpdateMemory}}
    \SetKwFunction{FormatAsHistoryString}{\texttt{FormatAsHistoryString}} %
    \SetKwFunction{EnvReset}{\texttt{env.reset}}
    \SetKwFunction{Deque}{\texttt{deque}}

    \Input{
        \GlobalFactMemory $\leftarrow \emptyset$\;
        \LLMComponents: \PsiLLMComp, \GPhiComp\;
        \Hyperparams\;
        \ShortTermHistory $\leftarrow \Deque(\text{maxlen=}H_L)$\;
    }
    \BlankLine
    \For{episode $e \leftarrow 1$ \KwTo $E$}{
        $\mathcal{B}_e \leftarrow \emptyset$ \Comment*[r]{Episode trajectory buffer for $(o, a, r^{\text{real}}, o', d)$ tuples}
        $o_t \leftarrow \EnvReset()$\;
        $\mathcal{H}.\text{clear()}$; $\mathcal{H}.\text{append}(\FormatAsHistoryString(\texttt{"Obs:"}, o_t))$ \Comment*[r]{Reset history}
        $\FactMemory_{\text{current\_ep}} \leftarrow \FactMemory$ \Comment*[r]{Snapshot of facts for consistent planning}
        \For{$t \leftarrow 0$ \KwTo $T_{\text{max}}-1$}{
            $a^*_t \leftarrow \RecursiveLookaheadPlan(o_t, \mathcal{H}, \FactMemory_{\text{current\_ep}}, D_s, k_B, \gamma, \lambda_{\text{step}}, g_{\phi})$\;
            Execute $a^*_t$ in environment $\mathcal{G}$; observe real $(o_{t+1}, r^{\text{real}}_t, d_{t+1})$\;
            Add $(o_t, a^*_t, r^{\text{real}}_t, o_{t+1}, d_{t+1})$ to $\mathcal{B}_e$\;
            $\mathcal{H}.\text{append}(\FormatAsHistoryString(\texttt{"Act:"}, a^*_t))$\;
            $\mathcal{H}.\text{append}(\FormatAsHistoryString(\texttt{"Obs:"}, o_{t+1}))$\;
            $o_t \leftarrow o_{t+1}$\;
            \If{$d_{t+1}$}{
                \KwBreak \Comment*[r]{End episode if terminal state reached}
            }
        }
        $\FactMemory \leftarrow \LearnFactsAndUpdateMemory(\mathcal{B}_e, \FactMemory, \text{\texttt{env\_description\_str}}, \Psi_{\text{LLM}})$\;
    }
\end{algorithm}
\DecMargin{0.8em}
}

\paragraph{Main Agent Loop (Algorithm~\ref{alg:lwm_planner_main_loop_alg2e})}
The LWM-Planner operates over a series of $E$ episodes.
\begin{itemize}
    \item \textbf{Initialization (Lines 1-5):} The agent starts with an empty global Fact Memory ($\FactMemory$). The LLM components are defined: $\Psi_{\text{LLM}}$ for managing facts and $g_{\phi}$ for planning. The planner $g_{\phi}$ internally comprises three distinct LLM-driven functionalities: $g_{\phi}^{\text{propose}}$ for proposing actions, $g_{\phi}^{\text{simulate}}$ for simulating outcomes of actions, and $g_{\phi}^{\text{value}}$ for estimating the value of states. Hyperparameters critical for the agent's operation are set, including maximum search depth $D_s$, branching factor $k_B$, discount factor $\gamma$, step penalty $\lambda_{\text{step}}$, maximum steps per episode $T_{\text{max}}$, and the length $H_L$ of the short-term interaction history buffer $\mathcal{H}$. This buffer $\mathcal{H}$ stores a rolling window of the most recent observations and actions (conceptually as formatted strings, e.g., \texttt{"Obs: <obs\_string>"}) to provide immediate context to the LLMs.

    \item \textbf{Episodic Interaction (Lines 6-16):} For each episode:
    \begin{itemize}
        \item An episode buffer $\mathcal{B}_e$ is initialized to log the sequence of interactions. The environment is reset, and the initial observation $o_0$ is used to initialize $\mathcal{H}$. A snapshot of the current global Fact Memory, $\FactMemory_{\text{current\_ep}}$, is taken to ensure that planning within the current episode uses a consistent set of facts learned up to that point.
        \item \textbf{Per-Step Cycle (Lines 7-14):} The agent interacts with the environment step-by-step.
        \begin{itemize}
            \item \textbf{Planning (Line 7):} The \texttt{RecursiveLookaheadPlan} sub-algorithm (Algorithm~\ref{alg:lwm_recursive_lookahead_alg2e}) is invoked. This function takes the current observation $o_t$, the short-term history $\mathcal{H}$, the episode's fact set $\FactMemory_{\text{current\_ep}}$, and planning hyperparameters to determine the best action $a^*_t$.
            \item \textbf{Interaction \& Recording (Lines 8-12):} The chosen action $a^*_t$ is executed in the actual environment $\mathcal{G}$. The resulting transition (next observation $o_{t+1}$, real reward $r^{\text{real}}_t$, and done signal $d_{t+1}$) is recorded in $\mathcal{B}_e$. The short-term history $\mathcal{H}$ is updated with $a^*_t$ and $o_{t+1}$. The current observation $o_t$ becomes $o_{t+1}$.
            \item \textbf{Episode Termination (Line 14):} If a terminal state is reached ($d_{t+1}$ is true), the inner step loop concludes.
        \end{itemize}
        \item \textbf{Fact Model Learning (Line 15):} After the episode finishes, the \texttt{LearnFactsAndUpdateMemory} sub-algorithm (Algorithm~\ref{alg:lwm_fact_learning_alg2e}) is called. This function processes the trajectory $\mathcal{B}_e$ and the facts known at the start of the episode ($\FactMemory_{\text{current\_ep}}$) to update the global $\FactMemory$.
    \end{itemize}
\end{itemize}

{\small
\IncMargin{0.8em}
\begin{algorithm}
    \caption{LWM-Planner: Recursive Lookahead Plan}
    \label{alg:lwm_recursive_lookahead_alg2e}
    \SetKwProg{Fn}{Function}{:}{end}
    \SetKwFunction{RecursiveLookaheadPlanMain}{\texttt{RecursiveLookaheadPlan}}
    \SetKwFunction{EstimateNodeValueMain}{\texttt{EstimateNodeValue}}
    \SetKwFunction{GPropose}{$g_{\phi}^{\text{propose}}$}
    \SetKwFunction{GSimulate}{$g_{\phi}^{\text{simulate}}$}
    \SetKwFunction{GValue}{$g_{\phi}^{\text{value}}$}
    \SetKwFunction{FormatAsHistoryString}{\texttt{FormatAsHistoryString}} %

    \Fn{\RecursiveLookaheadPlanMain{$o_{\text{curr}}, \mathcal{H}_{\text{curr}}, \FactMemory_{\text{ep}}, D_s, k_B, \gamma, \lambda_{\text{step}}, g_{\phi}$}}{
        Candidate actions $\{a_j\}_{j=1}^{N_A \le k_B} \leftarrow \GPropose(o_{\text{curr}}, \mathcal{H}_{\text{curr}}, \FactMemory_{\text{ep}})$\;
        \If{$\{a_j\}$ is empty}{
            \KwRet a default action (e.g., random or no-op from available actions)\;
        }
        $Q_{\text{root}}(a_j) \leftarrow -\infty$ for all $a_j$\;
        \ForEach{candidate action $a_j \in \{a_j\}$}{
            $(o'_{j}, r_j, d'_j) \leftarrow \GSimulate(o_{\text{curr}}, a_j, \mathcal{H}_{\text{curr}}, \FactMemory_{\text{ep}})$\;
            \eIf{$d'_j$}{
                $V(z'_{j}) \leftarrow 0$ \Comment*[r]{Terminal state, no future rewards}
            }{
                $\mathcal{H}'_j \leftarrow \mathcal{H}_{\text{curr}} \oplus (\FormatAsHistoryString(\texttt{"Act:"}, a_j), \FormatAsHistoryString(\texttt{"Obs:"}, o'_j))$\;
                $V(z'_{j}) \leftarrow \EstimateNodeValueMain(o'_{j}, \mathcal{H}'_j, \FactMemory_{\text{ep}}, D_s-1, k_B, \gamma, \lambda_{\text{step}}, g_{\phi})$\;
            }
            $Q_{\text{root}}(a_j) \leftarrow r_j - \lambda_{\text{step}} + \gamma \cdot V(z'_{j})$\;
        }
        \KwRet $\argmax_{a_j} Q_{\text{root}}(a_j)$\;
    }
    \BlankLine
    \Fn{\EstimateNodeValueMain{$o_{\text{node}}, \mathcal{H}_{\text{node}}, \FactMemory_{\text{ep}}, \text{depth}, k_B, \gamma, \lambda_{\text{step}}, g_{\phi}$}}{
        \If{$\text{depth} \le 0$ \textbf{or} $o_{\text{node}}$ is known/simulated as terminal}{
            \KwRet $\GValue(o_{\text{node}}, \mathcal{H}_{\text{node}}, \FactMemory_{\text{ep}})$\;
        }
        Candidate actions $\{a_k\} \leftarrow \GPropose(o_{\text{node}}, \mathcal{H}_{\text{node}}, \FactMemory_{\text{ep}})$\;
        \If{$\{a_k\}$ is empty}{
            \KwRet $\GValue(o_{\text{node}}, \mathcal{H}_{\text{node}}, \FactMemory_{\text{ep}})$ \Comment*[r]{Leaf node: estimate value directly}
        }
        $V_{\text{node\_val}} \leftarrow -\infty$ \Comment*[r]{This will store $\max_k Q(z_{\text{node}}, a_k)$}
        \ForEach{action $a_k \in \{a_k\}$}{
            $(o'_{k}, r_k, d'_k) \leftarrow \GSimulate(o_{\text{node}}, a_k, \mathcal{H}_{\text{node}}, \FactMemory_{\text{ep}})$\;
            \eIf{$d'_k$}{
                $V(z'_{k}) \leftarrow 0$\;
            }{
                $\mathcal{H}'_k \leftarrow \mathcal{H}_{\text{node}} \oplus (\FormatAsHistoryString(\texttt{"Act:"}, a_k), \FormatAsHistoryString(\texttt{"Obs:"}, o'_k))$\;
                $V(z'_{k}) \leftarrow \EstimateNodeValueMain(o'_{k}, \mathcal{H}'_k, \FactMemory_{\text{ep}}, \text{depth}-1, k_B, \gamma, \lambda_{\text{step}}, g_{\phi})$\;
            }
            $Q(z_{\text{node}}, a_k) \leftarrow r_k - \lambda_{\text{step}} + \gamma \cdot V(z'_{k})$\;
            $V_{\text{node\_val}} \leftarrow \max(V_{\text{node\_val}}, Q(z_{\text{node}}, a_k))$\;
        }
        \KwRet $V_{\text{node\_val}}$ \Comment*[r]{Node's value is max Q of children}
    }
\end{algorithm}
\DecMargin{0.8em}
}

\paragraph{Recursive Lookahead Plan (Algorithm~\ref{alg:lwm_recursive_lookahead_alg2e})}
This algorithm describes the planning process to select an action at the current step $t$.
\begin{itemize}
    \item \textbf{Function \texttt{RecursiveLookaheadPlan} (Lines 1-12):} This is the entry point for planning at the root of the search (current actual state).
    \begin{itemize}
        \item \textbf{Inputs:} Current observation $o_{\text{curr}}$, current short-term history $\mathcal{H}_{\text{curr}}$, the episode's Fact Memory $\FactMemory_{\text{ep}}$, max search depth $D_s$, branch factor $k_B$, discount $\gamma$, step penalty $\lambda_{\text{step}}$, and the planner LLM collection $g_{\phi}$.
        \item \textbf{Root Action Proposal (Line 2):} $g_{\phi}^{\text{propose}}$ generates initial candidate actions $\{a_j\}$ from $o_{\text{curr}}$. If no actions are proposed, a default policy is invoked (Lines 3-4).
        \item \textbf{Root Q-Value Calculation (Lines 6-13):} For each proposed root action $a_j$:
            \begin{itemize}
                \item The world model $g_{\phi}^{\text{simulate}}$ predicts the next state $o'_j$, immediate reward $r_j$, and done status $d'_j$.
                \item If the simulated state $o'_j$ is terminal ($d'_j$ is true), its future value $V(z'_j)$ is 0 (Line 8).
                \item Otherwise, the value $V(z'_j)$ is obtained by calling \texttt{EstimateNodeValue} (Line 10) for state $o'_j$ with remaining depth $D_s-1$. The history $\mathcal{H}'_j$ for this recursive call is the current history $\mathcal{H}_{\text{curr}}$ extended by the action $a_j$ and simulated observation $o'_j$.
                \item The Q-value $Q_{\text{root}}(a_j)$ is computed using the simulated reward $r_j$, the step penalty $\lambda_{\text{step}}$, and the discounted estimated value $V(z'_j)$ of the next state.
            \end{itemize}
        \item \textbf{Action Selection (Line 14):} The action $a^*_t$ with the highest $Q_{\text{root}}$ value is selected.
    \end{itemize}
    \item \textbf{Function \texttt{EstimateNodeValue} (Lines 16-31):} This function recursively estimates the value of a node $o_{\text{node}}$ in the search tree.
    \begin{itemize}
        \item \textbf{Inputs:} The node's observation $o_{\text{node}}$ and its history path $\mathcal{H}_{\text{node}}$, $\FactMemory_{\text{ep}}$, current remaining search depth \texttt{depth}, and other parameters.
        \item \textbf{Base Cases (Lines 16-20):}
            \begin{itemize}
                \item If \texttt{depth} $\le 0$, or if $o_{\text{node}}$ is determined to be a terminal state, the recursion stops. The value of $o_{\text{node}}$ is then directly estimated by $g_{\phi}^{\text{value}}$.
                \item If $g_{\phi}^{\text{propose}}$ fails to generate any actions from $o_{\text{node}}$, $o_{\text{node}}$ is also treated as a leaf, and its value is estimated by $g_{\phi}^{\text{value}}$.
            \end{itemize}
        \item \textbf{Recursive Step (Lines 22-30):} If not a base case:
            \begin{itemize}
                \item Candidate actions $\{a_k\}$ are proposed from $o_{\text{node}}$ using $g_{\phi}^{\text{propose}}$.
                \item For each action $a_k$: $g_{\phi}^{\text{simulate}}$ yields $(o'_k, r_k, d'_k)$.
                \item If $d'_k$ is true, $V(z'_k)=0$. Else, \texttt{EstimateNodeValue} is called recursively for $o'_k$ with \texttt{depth-1} to get $V(z'_k)$.
                \item The Q-value $Q(z_{\text{node}}, a_k)$ is calculated: $r_k - \lambda_{\text{step}} + \gamma V(z'_k)$.
            \end{itemize}
        \item \textbf{Return Value (Line 31):} The function returns $V_{\text{node\_val}} = \max_{a_k} Q(z_{\text{node}}, a_k)$, representing $V(z_{\text{node}})$.
    \end{itemize}
     All LLM calls by $g_{\phi}$ components within the planning phase operate with a temperature of 0.0 for deterministic evaluations. Results are memoized within a single planning step (\texttt{\_value\_cache} in the implementation) to avoid redundant computations, as mentioned in \Cref{sec:method}.
\end{itemize}

{\small
\IncMargin{0.8em}
\begin{algorithm}[H]
    \caption{LWM-Planner: Fact Model Learning and Memory Update}
    \label{alg:lwm_fact_learning_alg2e}
    \SetKwInOut{Input}{Inputs}
    \SetKwInOut{Output}{Output}
    \SetKwFunction{LearnFactsAndUpdateMemoryMain}{\texttt{LearnFactsAndUpdateMemory}}
    \SetKwFunction{FormatTrajectorySummary}{\texttt{FormatTrajectorySummary}}
    \SetKwFunction{PsiExtract}{$\Psi_{\text{LLM}}^{\text{extract}}$}
    \SetKwFunction{PsiRefine}{$\Psi_{\text{LLM}}^{\text{refine}}$}
    \SetKwProg{Fn}{Function}{}{}

    \Fn{\LearnFactsAndUpdateMemoryMain{$\mathcal{B}_e, \FactMemory_{\text{known}}, \text{\texttt{env\_desc\_str}}, \Psi_{\text{LLM}}$}}{
        \Comment{Fact Extraction from completed episode trajectory}
        $\texttt{trajectory\_summary\_str} \leftarrow \FormatTrajectorySummary(\mathcal{B}_e)$\;
        $\Delta \FactSet_e \leftarrow \PsiExtract(\text{\texttt{trajectory\_summary\_str}}, \FactMemory_{\text{known}}, \text{\texttt{env\_desc\_str}})$ \Comment*[r]{Invokes fact elicitation LLM}
        \BlankLine
        \Comment{Update Fact Memory and Optionally Refine/Compress}
        $\FactMemory_{\text{candidate}} \leftarrow \FactMemory_{\text{known}} \cup \Delta \FactSet_e$\;
        \If{compression hyperparameter is enabled}{
            $\FactMemory_{\text{next\_global}} \leftarrow \PsiRefine(\FactMemory_{\text{candidate}}, \text{\texttt{env\_desc\_str}})$ \Comment*[r]{Invokes fact compression LLM}
        }
        \Else{
            $\FactMemory_{\text{next\_global}} \leftarrow \FactMemory_{\text{candidate}}$\;
        }
        \KwRet $\FactMemory_{\text{next\_global}}$\;
    }
\end{algorithm}
\DecMargin{0.8em}
}

\paragraph{Fact Model Learning and Memory Update (Algorithm~\ref{alg:lwm_fact_learning_alg2e})}
This procedure, corresponding to the \texttt{reflect} method in the agent's codebase, is executed at the end of each episode $e$ to update the agent's knowledge.
\begin{itemize}
    \item \textbf{Inputs (Line 1):} The episode trajectory buffer $\mathcal{B}_e$, the Fact Memory $\FactMemory_{\text{known}}$ that was used for planning during that episode, a description of the environment \texttt{env\_desc\_str}, and the Fact Extractor \& Refiner LLM $\Psi_{\text{LLM}}$.
    \item \textbf{Fact Extraction (Lines 2-3):} A textual summary (\texttt{trajectory\_summary\_str}) of the trajectory in $\mathcal{B}_e$ is created. The fact elicitation component, $\Psi_{\text{LLM}}^{\text{extract}}$, processes this summary, along with $\FactMemory_{\text{known}}$ and \texttt{env\_desc\_str}, to generate a set of new candidate atomic facts $\Delta \FactSet_e$.
    \item \textbf{Memory Update and Refinement (Lines 4-8):} The newly extracted facts $\Delta \FactSet_e$ are combined with $\FactMemory_{\text{known}}$. If fact compression is enabled (via the \texttt{compress} flag in the implementation), $\Psi_{\text{LLM}}^{\text{refine}}$ processes this combined set to produce the refined Fact Memory $\FactMemory_{\text{next\_global}}$. Otherwise, the combined set becomes $\FactMemory_{\text{next\_global}}$.
    \item \textbf{Output (Line 9):} Returns the updated global Fact Memory $\FactMemory_{\text{next\_global}}$.
\end{itemize}

This iterative cycle enables LWM-Planner to adapt by progressively building a more accurate symbolic understanding of its environment.

\subsection{Reference Implementation Notes}
\label{app:reference_impl_notes}
Algorithms~\ref{alg:lwm_planner_main_loop_alg2e}--\ref{alg:lwm_fact_learning_alg2e} specify the high-level logic, while Appendices~\ref{app:prompts}, \ref{app:prompt_details}, and \ref{app:prompt_budget} specify the prompt structure and truncation policy. The present subsection fixes the concrete runtime contract needed for a faithful reimplementation of the core method. It is intended as a paper-native reference specification: an external engineer or LLM agent with access to a function-calling LLM and a text environment should be able to reproduce the core LWM-Planner loop from the description below.

\paragraph{Minimal environment interface.}
A reimplementation only requires the following environment API:
\begin{enumerate}[leftmargin=1.5em,itemsep=2pt]
    \item \texttt{reset() -> observation\_string}, returning the initial textual observation for a fresh episode.
    \item \texttt{step(action\_string) -> (next\_observation\_string, reward\_float, done\_bool, info\_dict)}.
    \item A textual \texttt{env\_description} string shared across all methods.
    \item A finite text action set \texttt{action\_space}, exposed as an ordered list or tuple of legal action strings.
\end{enumerate}
No simulator internals, latent state access, or handcrafted symbolic parser are required.

\paragraph{Structured LLM-call interface.}
The core method can be implemented using five structured calls:
\begin{enumerate}[leftmargin=1.5em,itemsep=2pt]
    \item \texttt{ExtractFacts(trajectory\_summary, known\_facts, env\_description) -> list[str]}
    \item \texttt{CompressFacts(candidate\_facts, env\_description) -> list[str]}
    \item \texttt{ProposeActions(observation, history, facts, env\_description, allowed\_actions, k\_B) -> list[str]}
    \item \texttt{SimulateStep(observation, action, history, facts, env\_description) -> (next\_observation, reward, done)}
    \item \texttt{EstimateValue(observation, history, facts, env\_description, gamma) -> float}
\end{enumerate}
For reproducibility, canonicalize returned action strings and fact strings to lowercase, deduplicate exact string matches while preserving order, and truncate proposed actions to at most $k_B$ unique actions.

\begin{table*}[t]
  \centering
  \caption{\textbf{Persistent runtime state for a faithful reimplementation of LWM-Planner.} The key point is that facts are updated only between episodes, while planning state is local to a single decision.}
  \label{tab:reference_runtime_state}
  \small
  \begin{tabular}{@{}p{0.18\textwidth}p{0.22\textwidth}p{0.54\textwidth}@{}}
    \toprule
    \textbf{State} & \textbf{Type} & \textbf{Role and update rule} \\
    \midrule
    Short-term history $\mathcal{H}$ & deque of strings & Stores recent interaction strings in chronological order. At episode start, clear it and append the initial observation as \texttt{Obs: ...}. During real interaction, append \texttt{Act: ...} then \texttt{Obs: ...}. During search, extend local copies of this same template along simulated branches. \\
    Global Fact Memory $\FactMemory$ & ordered deque/list of strings & Persistent atomic fact memory across episodes. Snapshot it once at episode start and keep that snapshot fixed for all planning within the episode. Update the global memory only after the episode ends. \\
    Episode buffer $\mathcal{B}_e$ & list of tuples & Stores the realized episode trajectory as $(o_t, a_t, r_t, o_{t+1}, d_{t+1})$. This buffer is consumed by the fact-extraction stage at episode end, then cleared on the next reset. \\
    Per-decision cache $\mathcal{C}$ & dictionary & Memoizes \texttt{propose}, \texttt{simulate}, and \texttt{value} calls within one real action selection. Clear it before every new real decision. A faithful key is \texttt{(call\_type, observation, action if any, history\_tuple)}. \\
    Terminal set $\mathcal{T}$ & set of strings & Optional cache of observations already classified as terminal by the simulator. If a simulated observation is in $\mathcal{T}$, return future value $0$ without further expansion. \\
    \bottomrule
  \end{tabular}
\end{table*}

\paragraph{Reference execution protocol.}
The following ordering matches the core method used throughout the paper:
\begin{enumerate}[leftmargin=1.5em,itemsep=2pt]
    \item \textbf{Episode reset.} Call \texttt{env.reset()}, clear the episode buffer, clear the short-term history, append the initial observation string, and snapshot the current global Fact Memory for the whole episode.
    \item \textbf{Planning state reset.} Before each real action selection, clear the per-decision cache. Keep the episode-level fact snapshot fixed; do not update facts mid-episode.
    \item \textbf{Action proposal.} Query \texttt{ProposeActions} on the current observation, recent history, episode fact snapshot, environment description, and legal action set. Deduplicate exact action matches and keep the first $k_B$ unique actions returned by the model.
    \item \textbf{Single-step simulation.} For each proposed action, call \texttt{SimulateStep}. Extend the local branch history with \texttt{Act: ...} followed by the predicted \texttt{Obs: ...}. If the simulator predicts \texttt{done=True}, define the future value of that child to be $0$.
    \item \textbf{Leaf valuation.} If the remaining search depth is zero, or if an internal node produces no candidate actions, call \texttt{EstimateValue} on that node rather than expanding further.
    \item \textbf{Recursive backup.} Compute action values using
    \[
    Q(o_t, a_i) = r' - \lambda_{\mathrm{step}} + \gamma \hat{V}(o').
    \]
    At each internal node, return the maximum child Q-value. At the root, choose the first action among the argmax set to keep planning deterministic.
    \item \textbf{Real environment transition.} Execute the chosen action in the real environment, append $(o_t, a_t, r_t, o_{t+1}, d_{t+1})$ to the episode buffer, then append the realized \texttt{Act: ...} and \texttt{Obs: ...} strings to the real history deque.
    \item \textbf{End-of-episode fact update.} When the episode terminates, construct a trajectory summary containing the outcome, total reward, and formatted step-by-step transition list. Call \texttt{ExtractFacts} with this summary, the previous Fact Memory, and the environment description. Merge the new facts with the old memory; if compression is enabled, run \texttt{CompressFacts} once on the merged set. The resulting fact memory is then used only from the next episode onward.
\end{enumerate}
If \texttt{ProposeActions} returns no action at the root, a robust reimplementation should execute a fixed legal fallback action (for example, the first legal action in the environment's ordering) rather than leaving the behavior undefined. In our reported runs this failure mode was rare, but the guard is useful for portability.

\begin{table*}[t]
  \centering
  \caption{\textbf{Concrete defaults for the reported runs.} These values pin down the implementation choices that most strongly affect paper-faithful reproduction. Smaller search depths appear only in ablations and appendix-only robustness experiments.}
  \label{tab:reference_defaults}
  \small
  \begin{tabular}{@{}p{0.16\textwidth}p{0.30\textwidth}p{0.48\textwidth}@{}}
    \toprule
    \textbf{Method} & \textbf{Core defaults} & \textbf{Notes} \\
    \midrule
    ReAct & History cap $H_L{=}51$ items; ReAct thought/action temperature $0.3$; max output tokens $8512$ & No persistent memory beyond the short-term history buffer. Uses the shared environment header and action set. \\
    Reflexion & ReAct defaults; lesson buffer length $5$; post-episode lesson synthesis & Lessons are short natural-language summaries appended between episodes. No lookahead search. \\
    ReAct + FEC & ReAct defaults; fact buffer length $200$; fact extraction and compression temperature $0.0$; compression enabled & Uses the same atomic-fact memory as LWM-Planner, but acts greedily without tree search. \\
    LWM-Planner & Fact buffer length $200$; planning depth $d{=}3$ in the main tables; branch factor $k_B{=}4$; planning discount $\gamma{=}0.99$; step penalty magnitude $\lambda_{\mathrm{step}}{=}0.02$; proposal/simulation/value temperatures $0.0$; fact extraction/compression temperature $0.0$; max output tokens $8512$ & Facts are frozen within an episode and refreshed only between episodes. Planning calls share the same environment header and use local branch histories with per-decision memoization. \\
    \bottomrule
  \end{tabular}
\end{table*}

\paragraph{Evaluation defaults.}
Unless otherwise stated, each run uses a fixed interaction budget of $300$ environment steps and reports cumulative return over that budget. Results are averaged over three random seeds in most experiments due to inference-time cost; the main comparison table uses ten seeds where indicated in the caption. Appendix~\ref{app:baseline_parity_text2} specifies the remaining prompt-parity constraints shared across methods.

\newpage
\section{Theory} \label{appendix:theory}

This section provides the theoretical underpinnings of our fact-based reinforcement learning approach, focusing on the performance guarantees of an idealized agent.

\subsection{Theorem: Performance of Idealized Fact-Based Agent (IFBA) with Perfect Abstract Model}

Let $\pi_F$ be the policy derived by the Idealized Fact-Based Agent (IFBA), as defined in Section~\ref{sec:ifba} of the main text. If the ideal fact abstraction $\Psi^*$ establishes an $\epsilon_{\text{sim}}$-approximate bisimulation (Definition~\ref{def:ideal_psi_star}) between the ground MDP $\mathcal{G}$ and the induced abstract MDP $M_{\Psi^*}$, and the planner for $M_{\Psi^*}$ is $\epsilon_{\text{plan}}$-optimal (Definition~\ref{def:ifba_model_planner}), then for any state $s \in \mathcal{S}$:
\begin{equation}
V^*_{\mathcal{G}}(s) - V^{\pi_F}_{\mathcal{G}}(s) \le \frac{2\epsilon_{\text{sim}}}{1-\gamma} + \epsilon_{\text{plan}}
\label{eq:ifba_bound_appendix}
\end{equation}

\subsubsection{Proof}\label{sec:thm_proof}

The value loss of the policy $\pi_F$ in the ground MDP $\mathcal{G}$ can be decomposed. Recall that $\pi_F(s') = \pi^{\circ}_{M_{\Psi^*}}(\Psi^*(s'))$, where $\pi^{\circ}_{M_{\Psi^*}}$ is the $\epsilon_{\text{plan}}$-optimal policy found by the planner operating in the abstract MDP $M_{\Psi^*}$.

We express the total value loss as follows:
\begin{align*}
V^*_{\mathcal{G}}(s) - V^{\pi_F}_{\mathcal{G}}(s) &= \left( V^*_{\mathcal{G}}(s) - V^*_{M_{\Psi^*}}(\Psi^*(s)) \right) \\
&\quad + \left( V^*_{M_{\Psi^*}}(\Psi^*(s)) - V^{\pi^{\circ}_{M_{\Psi^*}}}_{M_{\Psi^*}}(\Psi^*(s)) \right) \\
&\quad + \left( V^{\pi^{\circ}_{M_{\Psi^*}}}_{M_{\Psi^*}}(\Psi^*(s)) - V^{\pi_F}_{\mathcal{G}}(s) \right)
\end{align*}
Let's analyze each term:

\begin{enumerate}
    \item \textbf{Term I: \textcolor{termcolor1}{Abstraction Error on the Optimal Value Function}} \\
    This term, $\left( V^*_{\mathcal{G}}(s) - V^*_{M_{\Psi^*}}(\Psi^*(s)) \right)$, represents the difference between the optimal value function in the ground MDP and the optimal value function in the abstract MDP, mapped back to the ground state via $\Psi^*$.
    By Definition~\ref{def:ideal_psi_star} (specifically, Equation~\eqref{eq:bisim_value_bound} in the main text), the property of $\epsilon_{\text{sim}}$-approximate bisimulation directly bounds this difference:
    $$
    |V^*_{\mathcal{G}}(s) - V^*_{M_{\Psi^*}}(\Psi^*(s))| \le \frac{\epsilon_{\text{sim}}}{1-\gamma}
    $$

    \item \textbf{Term II: \textcolor{termcolor2}{Planning Error in the Abstract MDP}} \\
    This term, $\left( V^*_{M_{\Psi^*}}(\Psi^*(s)) - V^{\pi^{\circ}_{M_{\Psi^*}}}_{M_{\Psi^*}}(\Psi^*(s)) \right)$, quantifies the sub-optimality of the policy $\pi^{\circ}_{M_{\Psi^*}}$ computed by the planner within the abstract model $M_{\Psi^*}$.
    By Definition~\ref{def:ifba_model_planner}, the planner is $\epsilon_{\text{plan}}$-optimal, which means:
    $$
    V^*_{M_{\Psi^*}}(\Psi^*(s)) - V^{\pi^{\circ}_{M_{\Psi^*}}}_{M_{\Psi^*}}(\Psi^*(s)) \le \epsilon_{\text{plan}}
    $$
    Since value functions are non-negative (assuming non-negative rewards or appropriate initialization), this term is bounded by $\epsilon_{\text{plan}}$.

    \item \textbf{Term III: \textcolor{termcolor3}{Abstraction Error on the Planned Policy's Value}} \\
    This term, $\left( V^{\pi^{\circ}_{M_{\Psi^*}}}_{M_{\Psi^*}}(\Psi^*(s)) - V^{\pi_F}_{\mathcal{G}}(s) \right)$, captures the difference between the value of the planned policy $\pi^{\circ}_{M_{\Psi^*}}$ when evaluated in the abstract MDP $M_{\Psi^*}$ versus its value when executed in the ground MDP $\mathcal{G}$ (which is $V^{\pi_F}_{\mathcal{G}}(s)$ by definition of $\pi_F$).
    A key property of $\epsilon_{\text{sim}}$-approximate bisimulation is that it not only bounds the difference in optimal value functions but also the difference in value functions for any fixed policy that respects the abstraction $\Psi^*$. Since $\pi_F(s') = \pi^{\circ}_{M_{\Psi^*}}(\Psi^*(s'))$, it respects the abstraction. Therefore, we have \citep{Ravindran2004thesis}:
    $$
    |V^{\pi^{\circ}_{M_{\Psi^*}}}_{M_{\Psi^*}}(\Psi^*(s)) - V^{\pi_F}_{\mathcal{G}}(s)| \le \frac{\epsilon_{\text{sim}}}{1-\gamma}
    $$
\end{enumerate}

\textbf{Combining the Bounds:}

Using the triangle inequality ($X - Z = (X - Y) + (Y - Z) \implies |X - Z| \le |X - Y| + |Y - Z|$), we can sum the absolute bounds of these terms. More directly, since Term II is already an upper bound on the difference (not an absolute value), we have:
\begin{align*}
V^*_{\mathcal{G}}(s) - V^{\pi_F}_{\mathcal{G}}(s) &\le \left| V^*_{\mathcal{G}}(s) - V^*_{M_{\Psi^*}}(\Psi^*(s)) \right| \\
&\quad + \left( V^*_{M_{\Psi^*}}(\Psi^*(s)) - V^{\pi^{\circ}_{M_{\Psi^*}}}_{M_{\Psi^*}}(\Psi^*(s)) \right) \\
&\quad + \left| V^{\pi^{\circ}_{M_{\Psi^*}}}_{M_{\Psi^*}}(\Psi^*(s)) - V^{\pi_F}_{\mathcal{G}}(s) \right| \\
&\le \frac{\epsilon_{\text{sim}}}{1-\gamma} + \epsilon_{\text{plan}} + \frac{\epsilon_{\text{sim}}}{1-\gamma} \\
&= \frac{2\epsilon_{\text{sim}}}{1-\gamma} + \epsilon_{\text{plan}}
\end{align*}

This completes the proof, establishing the performance bound for the Idealized Fact-Based Agent operating with a perfect abstract model $M_{\Psi^*}$ derived from an $\epsilon_{\text{sim}}$-approximate bisimulation $\Psi^*$, and an $\epsilon_{\text{plan}}$-optimal planner. \qed

\subsection{Further Theoretical Considerations}
\label{app:further_theory}

The theoretical framework presented in Section~\ref{sec:theoretical_framework} and the proof in Appendix~\ref{sec:thm_proof} rely on the quality of the fact-based abstraction $\Psi$ and the learned abstract model $\tilde{M}_{\Psi}$. Here, we elaborate on two guiding principles relevant to achieving a good abstraction and, consequently, robust agent performance: bisimulation for state aggregation and the Information Bottleneck principle for fact minimality and relevance.

\subsubsection{Approximate State Abstraction and Bisimulation}
The concept of $\epsilon_{\text{sim}}$-approximate bisimulation (Definition~\ref{def:ideal_psi_star}) provides a formal grounding for why a good fact-based abstraction can lead to provably near-optimal policies \citep{ferns2004metrics}. A bisimulation groups states of the ground MDP $\mathcal{G}$ that are \emph{behaviorally equivalent}. In an exact bisimulation, states $s_1, s_2$ are in the same abstract state $z = \Psi(s_1) = \Psi(s_2)$ if, for any action $a \in \mathcal{A}$:
\begin{itemize}
    \item They yield the same immediate reward: $R(s_1, a) = R(s_2, a)$.
    \item They transition to the same distribution over next abstract states: $\sum_{s'_1 \in \mathcal{S}_{z'}} \mathcal{T}(s'_1|s_1,a) = \sum_{s'_2 \in \mathcal{S}_{z'}} \mathcal{T}(s'_2|s_2,a)$ for all $z' \in \mathcal{Z_F}$.
\end{itemize}
The $\epsilon_{\text{sim}}$ term in an approximate bisimulation relaxes these conditions, allowing for small differences in one-step rewards and transition probabilities for states mapped to the same abstract state $z$. The crucial implication, as shown in Equation~\eqref{eq:bisim_value_bound}, is that the value function $V^*_{M_{\Psi^*}}$ in the abstract MDP $M_{\Psi^*}$ (induced by an $\epsilon_{\text{sim}}$-approximate bisimulation $\Psi^*$) is close to the optimal value function $V^*_{\mathcal{G}}$ in the ground MDP, with the error bounded by $\frac{\epsilon_{\text{sim}}}{1-\gamma}$.

In LWM-Planner, the atomic fact set $F_s$ generated by the Fact Extractor $\ExtractorLLM$ for a ground state $s$ forms the basis of the abstract state $z = \Psi(s) = (o, F_s)$ (where $o$ is the direct observation). The goal of learning ``critical'' and ``minimal'' facts is to construct an abstraction $\Psi$ that minimizes $\epsilon_{\text{sim}}$. That is, the facts should capture enough information to ensure that states $s_1, s_2$ mapped to the same $(o, F_s)$ have similar optimal values and similar optimal actions. If the extracted facts fail to distinguish between ground states that are behaviorally different (e.g., one leads to a high reward and another to a low reward with the same action), $\epsilon_{\text{sim}}$ will be large, and the performance guarantee in Equation~\eqref{eq:learned_model_bound_final} degrades. The online learning of facts is a process of refining $\Psi$ to better approximate a bisimulation relevant to the task.

\subsubsection{Information Bottleneck Principle and Fact Relevance}
The desire for ``minimal, yet impactful, units of knowledge'' (atomic facts) aligns with the \textbf{Information Bottleneck (IB)} principle \citep{Tishby2000, Alemi2017vib}. The IB principle seeks to find a compressed representation (bottleneck) $Z$ of a source variable $X$ that is maximally informative about a target relevance variable $Y$, while minimizing the mutual information $I(X;Z)$ (i.e., $Z$ should be a minimal sufficient statistic of $X$ for predicting $Y$).

In our context:
\begin{itemize}
    \item The source variable $X$ is the ground state $s_t$ (or the full trajectory history leading to $s_t$).
    \item The compressed representation $Z$ is the abstract state $z_t = \Psi(s_t)$, primarily defined by the set of atomic facts $F_{s_t}$ derived from $s_t$ or its history.
    \item The target relevance variable $Y$ can be conceptualized as the optimal action-value function $Q^*_{\mathcal{G}}(s_t, a)$, the optimal value function $V^*_{\mathcal{G}}(s_t)$, or even future rewards and observations.
\end{itemize}
The LWM-Planner's fact extraction process aims to learn a $\Psi$ (instantiated by $\ExtractorLLM$) that distills $F_{s_t}$ such that it is highly informative about $V^*_{\mathcal{G}}(s_t)$ (to minimize $\epsilon_{\text{sim}}$) and future transitions/rewards (to minimize $\delta_{\text{model}}$ when $g_{\phi}$ uses these facts). The ``atomicity'' and ``minimality'' criteria for facts directly serve the goal of compressing $s_t$ into $F_{s_t}$ without losing task-critical information. A smaller, more relevant set of facts:
\begin{itemize}
    \item \textbf{Reduces computational burden}: Shorter prompts for the LLM components.
    \item \textbf{Focuses LLM reasoning}: Prevents dilution of critical information within the LLM's limited context window, allowing the LLM to better utilize its pre-trained knowledge by grounding it on the most salient task-specific evidence.
    \item \textbf{Improves sample efficiency for model learning}: A smaller abstract state space $|\mathcal{Z_F}|$ generally means that learning the dynamics $\tilde{\mathcal{T}}_{\Psi}$ and rewards $\tilde{R}_{\Psi}$ of the abstract MDP $\tilde{M}_{\Psi}$ (implicitly done by $g_{\phi}$) requires fewer samples (interactions) to achieve a low $\delta_{\text{model}}$ \citep{Strehl2009}.
\end{itemize}
The optional fact compression step in LWM-Planner explicitly attempts to enforce this minimality by removing redundant or less informative facts, further aligning with the IB objective of finding a maximally compressed yet sufficient representation. The quality of this compression directly impacts the complexity and effectiveness of the learned abstract model and, consequently, the overall planning performance.

By striving for fact-based abstractions that approximate bisimulations and adhere to information bottleneck principles, LWM-Planner aims to construct an internal representation that is both robust for planning and efficient for learning. The interplay between the quality of facts ($\epsilon_{\text{sim}}$) and the LLM's ability to simulate and value using these facts ($\delta_{\text{model}}$) is central to achieving near-optimal performance, as formalized in Equation~\eqref{eq:learned_model_bound_final}.

\subsubsection{Future Theoretical Directions and Open Questions}
\label{app:future_theoretical_directions}

While the current theoretical framework provides initial motivation, several avenues for future theoretical work could further solidify and extend the understanding of LWM-Planner and similar fact-based LLM agents.

\paragraph{Causality in Fact Extraction}
A significant direction is the integration of \textbf{causal reasoning} into the fact extraction process \citep{Pearl_2009}. Currently, atomic facts are evaluated in terms of correlation\footnote{LLMs probably implicitly already perform some form of causal reasoning}, capturing observed relationships (e.g., ``action Z leads\_to\_failure\_condition''). However, facts that represent causal relationships (e.g., ``action Z \emph{causes} failure condition \emph{if} precondition P holds'') would offer more robust and generalizable knowledge.
\begin{itemize}
    \item \textbf{Improved Generalization}: Causal facts are more likely to hold true under slight variations in the environment or task, potentially leading to a smaller $\delta_{\text{model}}$ when the agent encounters novel situations that share underlying causal structures.
    \item \textbf{Intervention-based Learning}: Future agents could be designed to perform specific interventions (exploratory actions) aimed at discovering causal links, rather than passively observing correlations. This could lead to more sample-efficient learning of highly impactful facts.
    \item \textbf{Counterfactual Reasoning}: A fact base enriched with causal understanding could allow the Planner LLM ($g_{\phi}$) to engage in more sophisticated counterfactual reasoning during lookahead search (e.g., ``what would have happened if I had chosen action B instead of A, given these causal rules?'').
\end{itemize}
Developing methods for $\ExtractorLLM$ to reliably infer or validate causal statements from observational and interventional data within interactive environments is a challenging but promising research area.

\paragraph{Formal Analysis of LLM-driven Components}
The current framework treats the LLM components ($g_{\phi}^{\text{propose}}$, $g_{\phi}^{\text{simulate}}$, $g_{\phi}^{\text{value}}$, and $\Psi_{\text{LLM}}$) largely as oracles with certain performance characteristics (e.g., LLM simulation accuracy contributing to $\delta_{\text{model}}$). Future work could delve into:
\begin{itemize}
    \item \textbf{Characterizing LLM Errors}: More formally characterizing the types of errors LLMs make in simulation and value estimation, and how these errors propagate through the lookahead search to affect $\epsilon_{\text{plan}}$.
    \item \textbf{Impact of Prompt Engineering}: Theoretically analyzing the sensitivity of $\epsilon_{\text{sim}}$ and $\delta_{\text{model}}$ to the quality and structure of prompts, including the presentation of atomic facts.
    \item \textbf{Convergence of Fact Memory}: Investigating conditions under which the learned Fact Memory $\FactMemory_t$ converges to a ``sufficient'' or ``minimal'' set of facts that guarantees a certain level of performance (e.g., bounding $\epsilon_{\text{sim}}$ below a desired threshold).
\end{itemize}

\paragraph{Fact-Based Abstractions in Partially Observable MDPs (POMDPs)}
The current theoretical analysis assumes an underlying MDP where states $s_t$ are fully observable or can be derived into a sufficient structured representation. Many real-world scenarios are better modeled as POMDPs \citep{KAELBLING199899}, where the agent receives observations $o_t$ that are incomplete or noisy manifestations of the true underlying state $s_t$.
\begin{itemize}
    \item \textbf{Belief State Abstraction}: Future work could explore how atomic facts can be used to form abstractions not directly over $s_t$, but over belief states $b(s_t) = P(s_t | \text{history})$. Atomic facts could represent properties of the environment that are inferred to be true with high probability based on the observation history.
    \item \textbf{Information-Gathering Facts}: The agent might learn facts not just about the environment's dynamics, but about which actions are most informative for reducing uncertainty about critical, unobserved aspects of the state, guiding exploration more effectively.
\end{itemize}

Addressing these theoretical questions will be crucial for advancing the capabilities and understanding of LLM agents that learn and plan from interaction by building and reasoning over symbolic knowledge representations like atomic facts.

\newpage\section{Benchmark Environments Details}
\label{sec:BenchmarkEnvironmentsDetails}

We evaluate our LWM-Planner agent and baseline methods on three distinct, procedurally generated, text-based environments. Each environment is designed to test different aspects of an agent's learning and planning capabilities, ranging from grid-world navigation with sparse rewards to complex instruction following and multi-step crafting tasks.

\subsection{TextFrozenLake}

This environment is a procedurally generated text-based version of the classic FrozenLake problem \citep{brockman2016openai}.

\begin{itemize}
    \item \textbf{Objective}: The agent must navigate from a starting position (S) at coordinates (0,0) to a goal position (G) at ($N-1, N-1$) on an $N \times N$ grid. The grid also contains ice surfaces (.) and holes (H). Reaching the goal yields a reward of +1.0, falling into a hole yields -1.0, and all other steps yield 0.0. An episode terminates upon reaching the goal, falling into a hole, or exceeding a maximum step limit.
    \item \textbf{Procedural Generation}:
    \begin{itemize}
        \item The grid size $N$ (e.g., $4 \times 4$, $6 \times 6$, $8 \times 8$) and hole density $h$ are configurable parameters at initialization.
        \item A key feature is the guarantee of at least one solvable path from start to goal. This is achieved by first constructing a "Manhattan corridor" or a zig-zag safe path near the diagonal to connect (0,0) and ($N-1, N-1$). Holes are then sampled on the remaining cells based on the \texttt{hole\_density} parameter.
        \item The environment can be seeded for deterministic board generation and agent starting position.
    \end{itemize}
    \item \textbf{Observation Space}: The agent receives a textual observation describing its current state, e.g., "You are at (0, 1) on ice.". This provides local information (current coordinates and terrain type of the square it stands on) without revealing the global map layout.
    \item \textbf{Action Space}: The agent has four discrete actions: "up", "down", "left", "right". Actions that would move the agent off the grid boundaries result in the agent remaining in its current position but on the edge cell.
    \item \textbf{Rewards}: As described, +1.0 for goal, -1.0 for a hole, 0.0 otherwise.
    \item \textbf{Episode Termination}: An episode ends if the agent reaches the goal (G), falls into a hole (H), or if the \texttt{\_step\_count} reaches \texttt{max\_steps}. The \texttt{max\_steps} is set to $8 \times (N-1)$, which is four times the optimal path length in an empty grid.
    \item \textbf{\texttt{env\_description}}: The environment provides a detailed textual description string that includes the grid size, start/goal locations, reward structure, maximum steps, and hole density, explicitly stating that a path to the goal is guaranteed.
\end{itemize}

\subsection{ALFWorld}

We utilize the standard ALFWorld (Action Learning From World) benchmark \citep{shridhar2020alfworld}, which involves text-based agents performing tasks in simulated household environments based on the ALFRED dataset \citep{shridhar2020alfred}. The \texttt{AlfWorldEnv} class in our codebase acts as a thin adapter around the official ALFWorld text environment.

\begin{itemize}
    \item \textbf{Objective}: The agent is given a high-level natural language instruction (e.g., "put some spraybottle on toilet") and must navigate the environment, interact with objects and receptacles, and manipulate objects to satisfy the goal condition.
    \item \textbf{Environment Structure}: ALFWorld environments are simulated indoor scenes (kitchens, bedrooms, bathrooms) containing various receptacles (e.g., cabinets, drawers, countertops, sinks) and objects (e.g., spraybottle, bowl, desklamp). Objects can be picked up, placed in/on receptacles, and sometimes manipulated (e.g., heated, cleaned, sliced, though these more complex interactions are often simplified or yield generic feedback in some wrapper implementations).
    \item \textbf{Observation Space}: The agent receives rich textual observations describing its current location, visible objects and receptacles, its inventory, and feedback from its previous action. The initial observation also includes the specific task goal.
    \item \textbf{Action Space}: The environment defines a set of canonical action templates such as "look", "inventory", "go to (receptacle)", "open (receptacle)", "close (receptacle)", "take (object) from (receptacle)", "move (object) to (receptacle)", "examine (something)", "use (object)", etc..
    \item \textbf{Rewards}: A reward of +1.0 is typically given upon successful completion of the task goal. All other steps yield a reward of 0.0.
    \item \textbf{Episode Termination}: An episode ends if the agent successfully completes the task or if the maximum number of steps (\texttt{max\_steps}, typically configured from a YAML file, e.g., \texttt{base\_config.yaml}) is reached.
    \item \textbf{Task Variability}: ALFWorld offers a diverse set of tasks (identified by \texttt{task\_id}) categorized into different types (e.g., pick \& place, heat \& place, clean \& place). For our experiments, we randomly sample tasks from the "eval\_out\_of\_distribution" split, as specified in the \texttt{AlfWorldEnv} constructor. Each \texttt{task\_id} corresponds to a unique environment configuration and goal.
    \item \textbf{\texttt{env\_description}}: The \texttt{AlfWorldEnv} provides a comprehensive \texttt{env\_description} string that outlines the nature of the environment, receptacles, objects, task structure, reward system, maximum steps, the full list of admissible actions, and examples of interaction, along with advice for the agent.
\end{itemize}

For some ablation studies or simpler scenarios, a minimal version, \texttt{AlfMiniEnv}, is also available. It features a single, deterministically generated room with a fixed set of receptacle and object types (e.g., "drawer", "shelf", "vase", "keychain") and a canonical goal like "put some vase in safe 1". This version allows for more controlled experimentation by simplifying the state and action space while retaining the core object interaction mechanics. It also features deterministic resets to a blueprint state or a newly seeded state.

\subsection{CrafterMini}

CrafterMini is a procedurally generated, text-only, miniaturized version of the Crafter environment \citep{hafner2021benchmarking}, designed to test planning for resource gathering and multi-step crafting.

\begin{itemize}
    \item \textbf{Objective}: The primary goal is to craft an "iron\_pickaxe". This requires a sequence of sub-goals: collecting raw materials (wood, stone, iron) and crafting intermediate tools (wood\_pickaxe, stone\_pickaxe).
    \item \textbf{Environment Structure}: The world is a $N \times N$ grid (default $5 \times 5$) with a toroidal (wrap-around) topology. Each tile can be grass, tree, stone, iron, or water.
    \item \textbf{Procedural Generation}:
    \begin{itemize}
        \item The grid size $N$ and \texttt{max\_steps} are configurable. The world is seeded for deterministic generation.
        \item The grid is randomly populated with tiles, ensuring that at least one of each crucial resource (tree, stone, iron) is present, making the game solvable.
        \item The \texttt{reset()} method can either restore an initial "blueprint" of the world or generate a new world if a new seed is provided.
    \end{itemize}
    \item \textbf{Observation Space}: The observation is a textual string describing the agent's current tile type and coordinates, the terrain in the four cardinal directions, the agent's inventory (e.g., "wood=3, stone=1"), and a list of tools already crafted.
    \item \textbf{Action Space}: Actions are represented by integers with corresponding names:
    \begin{itemize}
        \item 0-3: Movement (north, south, east, west).
        \item 4: "collect" - Gathers a resource from the current tile if it's a resource tile (tree, stone, iron). Collecting turns the tile to grass.
        \item 5: "craft\_wood\_pickaxe" (requires 3 wood).
        \item 6: "craft\_stone\_pickaxe" (requires 1 wood, 3 stone).
        \item 7: "craft\_iron\_pickaxe" (requires 1 stone\_pickaxe, 3 iron).
    \end{itemize}
    Crafting actions are only considered available if the recipe can be satisfied by the current inventory and already crafted tools (e.g., a stone\_pickaxe is consumed to make an iron\_pickaxe). The \texttt{RobustCrafterMiniEnv} variant can also parse textual synonyms for these actions.
    \item \textbf{Rewards}:
    \begin{itemize}
        \item Each step incurs a -1 reward (step cost).
        \item Crafting a wood\_pickaxe gives +10 reward.
        \item Crafting a stone\_pickaxe gives +20 reward.
        \item Crafting an iron\_pickaxe gives +50 reward.
    \end{itemize}
    \item \textbf{Episode Termination}: The episode ends immediately if an "iron\_pickaxe" is successfully crafted or if the number of steps reaches \texttt{max\_steps} (default $4 \times N^2$).
    \item \textbf{\texttt{env\_description}}: A detailed textual description outlines the grid, observation format, action list with integer mappings, crafting recipes, rewards, and termination conditions.
\end{itemize}

These three environments provide a diverse set of challenges for evaluating the LWM-Planner's ability to learn from textual interactions, build effective world models through atomic facts, and plan over extended horizons.

\newpage\section{Benchmark Method Implementation Details}
\label{sec:BenchmarkMethodImplementationDetails}

This section provides a detailed overview of the high-level logic for each benchmark method evaluated in our experiments. All LLM-based agents (LWM-Planner, ReAct, Reflexion, and ReAct + FEC) utilize a frozen LLM (model \texttt{gpt-4o}). LLM interactions are performed via API calls using a common internal utility that supports structured function calling; no LLM weights are updated during experiments. The default temperature for LLM calls in planning components (simulation, value estimation) and fact processing is 0.0. For ReAct-style thought generation, a temperature of 0.3 is typically used. The maximum token output for LLM responses is configured to 8512 tokens. A compact implementation-facing specification of the core runtime contract and reported defaults is given in Appendix~\ref{app:reference_impl_notes}.

\subsection{Random Agent}
The \textbf{Random} agent serves as a basic non-learning baseline.
\begin{itemize}
    \item \textbf{Policy}: At each step, the agent selects an action uniformly at random from the set of allowed actions provided for the specific environment.
    \item \textbf{State}: It maintains a short-term memory buffer of observation-action pairs for logging consistency, though this history does not influence its action selection.
\end{itemize}

\subsection{ReAct Agent}
The \textbf{ReAct} agent implements the Reason-Act prompting paradigm \citep{yao2023react}.
\begin{itemize}
    \item \textbf{Core Mechanism}: The agent prompts an LLM to first generate a ``Thought'' (internal reasoning) and then an ``Action'' to take in the environment.
    \item \textbf{LLM Interaction}:
    \begin{itemize}
        \item A prompt is constructed using a template specific to the ReAct style. This template incorporates a description of the environment, the current observation, the recent interaction history (formatted as a sequence of observations and actions), and the list of allowed actions.
        \item The LLM is expected to provide its output in a structured format that distinguishes the thought process from the chosen action.
    \end{itemize}
    \item \textbf{State}: The agent maintains a short-term memory buffer (a deque of observation-action strings) of a configurable length (e.g., 51 interactions). It also stores the most recent thought generated by the LLM.
    \item \textbf{Hyperparameters}: Key parameters include the length of the interaction history, LLM model, temperature, and maximum token limits.
\end{itemize}

\subsection{Reflexion Agent}
The \textbf{Reflexion} agent extends the ReAct agent by incorporating a self-reflection mechanism to learn from past experiences \citep{shinn2023reflexion}.
\begin{itemize}
    \item \textbf{Core Mechanism}: In addition to ReAct's thought-action cycle, after each episode, the Reflexion agent analyzes its trajectory to generate a textual "lesson".
    \item \textbf{Lesson Generation and Usage}:
    \begin{itemize}
        \item Lessons are stored in a memory buffer (a deque) of a configurable maximum length (e.g., 5 lessons).
        \item The ReAct prompt template is augmented to include these learned lessons, providing additional context for future decisions.
        \item A post-episode reflection process involves constructing a summary of the completed trajectory (including observations, actions, rewards, and overall outcome) and prompting an LLM to generate a concise, actionable lesson (typically $\le 20$ words and prefixed accordingly).
    \end{itemize}
    \item \textbf{State}: In addition to the ReAct state, it maintains the buffer of lessons, a record of the current episode's trajectory (observations, actions, rewards, next observations), and the cumulative reward for the current episode.
    \item \textbf{Hyperparameters}: Includes ReAct parameters plus the lesson buffer length and a reward threshold to determine episode success for reflection purposes (e.g., 0.99).
\end{itemize}

\subsection{ReAct + FEC Agent (Ablation)}
This agent is an ablation of our full LWM-Planner. It combines the ReAct decision-making process with the Fact Extraction and Compression (FEC) mechanism, but without lookahead search.
\begin{itemize}
    \item \textbf{Fact Mechanism}:
    \begin{itemize}
        \item Maintains a memory buffer (a deque) of atomic facts with a configurable maximum length (e.g., 200 facts).
        \item The ReAct prompt template is augmented to include these atomic facts.
        \item \textbf{Fact Extraction}: After each episode, a dedicated LLM-driven process analyzes the trajectory summary, environment description, and existing facts to identify minimal new atomic facts critical for improving predictions. The LLM is guided to output these new facts in a structured manner.
        \item \textbf{Fact Compression}: If enabled, another LLM-driven process reviews the complete set of current facts (newly extracted plus existing) along with the environment description. It aims to produce a concise, refined set of facts by removing redundancies or information trivially inferable from the environment description, again using a structured output format.
    \end{itemize}
    \item \textbf{Decision-Making}: Employs the standard ReAct thought-action cycle, but the LLM's reasoning is informed by the dynamically updated set of atomic facts included in its prompt.
    \item \textbf{Hyperparameters}: Includes ReAct parameters plus the fact buffer length and a flag to enable/disable fact compression.
\end{itemize}

\subsection{LWM-Planner (Our Method)}
The LWM-Planner is our proposed agent that integrates online atomic fact learning with a recursive lookahead search, where LLMs serve as key planning components.
\begin{itemize}
    \item \textbf{Base Functionality}: It incorporates the same fact extraction and compression mechanisms (Fact Extractor $\Psi_{\text{LLM}}$) as the ReAct + FEC agent. These facts are stored in a dynamically updated memory buffer (a deque with a capacity of, e.g., 200 facts) and are used to augment the reasoning of all LLM components.
    \item \textbf{Core Planning Mechanism}: Instead of a direct ReAct step, LWM-Planner performs a depth-limited recursive lookahead search to select actions (Planner $g_{\phi}$).
    \begin{itemize}
        \item \textbf{Action Proposal ($g_{\phi}^{\text{propose}}$)}: At each node in the search, an LLM module proposes a set of plausible actions (up to a configurable branching factor, e.g., 4). This proposal is conditioned on the current (potentially simulated) observation, the history of interactions within the simulation, and the accumulated atomic facts.
        \item \textbf{Latent World Model Simulation ($g_{\phi}^{\text{simulate}}$)}: For each proposed action, an LLM module predicts the next (latent) observation, immediate reward, and termination status. This simulation is conditioned on the current state, the action being simulated, the simulation history, and the atomic facts. This LLM interaction operates with a temperature of 0.0 for deterministic outcomes.
        \item \textbf{Value Estimation ($g_{\phi}^{\text{value}}$)}: At the leaves of the search tree (determined by a configurable search depth, e.g., 3, or upon reaching a terminal state), an LLM module estimates the discounted future cumulative reward (value) from that state. This estimation is also conditioned on the state's observation, simulation history, and atomic facts, and uses a temperature of 0.0.
    \end{itemize}
    \item \textbf{Q-Value Computation}: The Q-value for an action $a_i$ from observation $o_t$ is computed as $Q(o_t, a_i) = r' - \lambda_{\text{step}} + \gamma \hat{V}(o')$, where $r'$ and $o'$ are from the simulation, $\lambda_{\text{step}}$ is a step penalty (e.g., -0.01), and $\gamma$ is the discount factor (e.g., 0.99). $\hat{V}(o')$ is the estimated value of the next state, derived either from further recursion or direct estimation at a leaf node.
    \item \textbf{State and Caching}: The agent maintains a short-term interaction history (e.g., last 51 interactions) and the set of learned atomic facts. Within a single planning phase (for one action selection), results of LLM calls for action proposal, simulation, and value estimation are memoized to avoid redundant computations. A set of known terminal observations is also maintained across steps.
    \item \textbf{Hyperparameters}: Inherits fact-related parameters. Key planning parameters include search depth, branching factor, discount factor for planning, and step penalty. Asynchronous execution of LLM calls for different search branches is supported to improve computational efficiency. The MCTS-based extension of this agent introduces further parameters like the number of simulations and an exploration constant (UCT).
\end{itemize}

All LLM-based agents are initialized with a textual description of the environment and the set of allowed actions. Their prompts are dynamically constructed to include the current observation, relevant interaction history, and any learned knowledge (lessons or facts) appropriate for the specific agent architecture.

\newpage\section{Evaluation Details}
\label{sec:EvaluationDetails}

The experimental evaluation of our proposed LWM-Planner and baseline methods follows a structured procedure to ensure fair comparison and robust assessment of performance. Key aspects of our evaluation protocol are detailed below:

\begin{itemize}
    \item \textbf{Run Duration}: Each agent method is run for a total of 300 environment steps per evaluation trial, unless specified otherwise in a particular experiment. Within these 300 steps, an agent may complete multiple episodes depending on task complexity and its efficiency.

    \item \textbf{Metrics Tracked}: We focus on the following quantitative metrics to assess agent performance:
    \begin{itemize}
        \item \textbf{Cumulative Return}: This is the primary metric and is defined as the sum of all rewards obtained by the agent over the entire 300-step run. It reflects the agent's overall ability to accumulate reward within a fixed interaction budget.
        \item \textbf{Steps per Success}: For environments that have a clear binary success condition (e.g., reaching the goal tile in TextFrozenLake, or successfully completing the assigned task in ALFWorld), we record the number of environment steps taken within an episode to achieve that success. This metric is typically reported as an average over all successful episodes completed by the agent during its run. If an agent fails to achieve success in an episode or across the entire 300-step run, it may not contribute to this average, or its contribution might be noted as not applicable (e.g., marked as '--' in results tables).
    \end{itemize}

    \item \textbf{Replication and Statistical Significance}: To account for inherent stochasticity in agent learning (if applicable) and environment generation (for procedurally generated tasks), results for each method on each environment are averaged over multiple independent runs, each initialized with a different random seed. The number of seeds is typically 3 or 10, as specified in the respective table captions in Section~\ref{sec:experiments}. We report the mean of the metrics and the 95\% confidence interval (CI) to provide a measure of statistical significance and variability.

    \item \textbf{Final Success Rate}: For ALFWorld we additionally report the percentage of tasks solved at least once (FSR). This measure is complementary to cumulative return and enables direct comparison to prior work that reports success rates only.

    \item \textbf{Computational Cost}: We record the total number of tokens processed, the average latency per environment step, and an estimated monetary cost assuming the public pricing of the deployed API; detailed figures are provided in Appendix~\ref{app:cost_analysis}.
    \item \textbf{Type of Compute Used}: All experiments were run on a single workstation equipped with an Intel Core~i9 CPU and an NVIDIA RTX~3090 GPU (24~GB VRAM), together with hosted LLM API calls configured as detailed in Section~\ref{sec:EvaluationDetails}.
\end{itemize}

\newpage\section{Case Study: LWM-Planner on TextFrozenLake (4x4)}
\label{sec:factextractedfrozenlakecasestudy}

This case study details the learning process of the LWM-Planner agent in a procedurally generated $4 \times 4$ TextFrozenLake environment. The specific instance, corresponding to \texttt{grid\_4\_h\_9\_s\_0} from our experiments (with a 90\% chance any non-start/goal tile is a hole, and solvability ensured), features a high density of holes. Holes (H) terminate the episode with a negative reward. The agent's objective is to navigate from the starting position (S) at \texttt{(0,0)} to the goal (G) at \texttt{(3,3)}. Ice tiles (.) are safe to traverse.

The initial state of the grid is (S represents the Start/Agent):
\begin{verbatim}
S . H H
H . . H
H H . .
H H H G
\end{verbatim}
The agent never observes this map, this is just for illustration. Instead the agent can only learn where the holes are by falling down each hole at least once, motivating the need for a persistent memory.

\subsection*{Initial Exploration and Learning from Failures (Episodes 0-3)}

In its initial episodes, LWM-Planner explores the environment and primarily learns by encountering hazards. The atomic fact extraction mechanism is crucial during this phase for building a rudimentary map of dangers.

\begin{itemize}
    \item \textbf{Episode 0:} The agent's first action is to move `down` from \texttt{(0,0)}.
    \begin{itemize}
        \item Trajectory Snippet: \texttt{Obs: You are at (0,0) on start. | Act: down | R: -1.0 | Next: You are at (1,0) on hole.}
        \item Outcome: FAILURE.
        \item \textbf{Fact Extracted:} \texttt{(1,0) is a hole.}
        \item \textbf{Accumulated Facts (after Ep. 0):} \texttt{['(1,0) is a hole.']}
    \end{itemize}
    This first fact immediately informs the agent about a critical environmental feature.

    \item \textbf{Episode 1:} Aware of the hole at \texttt{(1,0)}, the agent attempts a different path. It moves `right` from \texttt{(0,0)} to \texttt{(0,1)}, then `down` to \texttt{(1,1)}, and then `down` again.
    \begin{itemize}
        \item Trajectory Snippet: \texttt{...Obs: You are at (1,1) on ice. | Act: down | R: -1.0 | Next: You are at (2,1) on hole.}
        \item Outcome: FAILURE.
        \item \textbf{Fact Extracted:} \texttt{(2,1) is a hole.}
        \item \textbf{Accumulated Facts (after Ep. 1):} \texttt{['(2,1) is a hole.', '(1,0) is a hole.']}
    \end{itemize}

    \item \textbf{Episode 2:} The agent continues to explore. From \texttt{(0,1)}, it moves `right`.
    \begin{itemize}
        \item Trajectory Snippet: \texttt{...Obs: You are at (0,1) on ice. | Act: right | R: -1.0 | Next: You are at (0,2) on hole.}
        \item Outcome: FAILURE.
        \item \textbf{Fact Extracted:} \texttt{(0,2) is a hole.}
        \item \textbf{Accumulated Facts (after Ep. 2):} \texttt{['(0,2) is a hole.', '(2,1) is a hole.', '(1,0) is a hole.']}
    \end{itemize}

    \item \textbf{Episode 3:} Another failed attempt reveals another hole. From \texttt{(1,2)}, it moves `right`.
    \begin{itemize}
        \item Trajectory Snippet: \texttt{...Obs: You are at (1,2) on ice. | Act: right | R: -1.0 | Next: You are at (1,3) on hole.}
        \item Outcome: FAILURE.
        \item \textbf{Fact Extracted:} \texttt{(1,3) is a hole.}
        \item \textbf{Accumulated Facts (after Ep. 3):} \texttt{['(1,3) is a hole.', '(0,2) is a hole.', '(2,1) is a hole.', '(1,0) is a hole.']}
    \end{itemize}
\end{itemize}
After these initial four failures, the agent has learned the locations of four distinct holes. This knowledge is critical for subsequent planning using lookahead search, as these facts help the LLM-based simulator predict negative outcomes.

\subsection*{First Success and Learning the Safe Path (Episode 4)}

Equipped with knowledge of several hazards, LWM-Planner's lookahead search can now better evaluate potential paths, biasing away from known holes.

\begin{itemize}
    \item \textbf{Episode 4:} The agent successfully navigates to the goal.
    \begin{itemize}
        \item Full Trajectory: \texttt{(0,0) -> right -> (0,1) -> down -> (1,1) -> right -> (1,2) -> down -> (2,2) -> right -> (2,3) -> down -> (3,3)}
        \item Steps: 6 (Optimal for this grid)
        \item Outcome: SUCCESS (Reward: +1.0)
        \item \textbf{Facts Extracted:} A set of facts confirming the nature of the traversed safe tiles and the goal location:
        \begin{itemize}
            \item \texttt{(0,1) is ice.}
            \item \texttt{(1,1) is ice.}
            \item \texttt{(1,2) is ice.}
            \item \texttt{(2,2) is ice.}
            \item \texttt{(2,3) is ice.}
            \item \texttt{(3,3) is the goal.}
        \end{itemize}
        \item \textbf{Accumulated Facts (Snapshot after Ep. 4 includes):} \texttt{['(0,1) is ice.', '(1,1) is ice.', ..., '(3,3) is the goal.', '(1,3) is a hole.', '(0,2) is a hole.', '(2,1) is a hole.', '(1,0) is a hole.']}
    \end{itemize}
\end{itemize}
This successful episode significantly expands the agent's knowledge base, not just with more hazards, but with positive confirmation of safe (ice) tiles and the goal's location. This richer set of facts allows the LLM-driven world model and value estimator to make more accurate predictions during lookahead.

\subsection*{Refinement and Consistent Optimal Performance (Episodes 5 onwards)}

Even after the first success, the agent continues to refine its understanding of the environment.

\begin{itemize}
    \item \textbf{Episode 5:} The agent explores an alternative move from a state on the previously successful path (\texttt{(2,2)}) and encounters another hole.
    \begin{itemize}
        \item Trajectory Snippet: \texttt{...Obs: You are at (2,2) on ice. | Act: down | R: -1.0 | Next: You are at (3,2) on hole.}
        \item Outcome: FAILURE.
        \item \textbf{Fact Extracted:} \texttt{(3,2) is a hole.}
        \item This further completes the agent's map of hazards, particularly those adjacent to the known safe path.
    \end{itemize}

    \item \textbf{Episode 6:} The agent again reaches the goal in 6 steps, following the optimal path.
    \begin{itemize}
        \item Outcome: SUCCESS.
        \item \textbf{Facts Extracted:} \texttt{['(0,0) is the start.', '(0,0) is ice.']} (Identifying properties of the start tile based on the successful trajectory.)
    \end{itemize}

    \item \textbf{Episode 7:} The agent achieves another 6-step success. The LLM's reflection process identifies additional facts based on the episode's context and existing knowledge.
    \begin{itemize}
        \item Outcome: SUCCESS.
        \item \textbf{New Facts Extracted Include:} \texttt{'(0,3) is a hole.'}, \texttt{'(3,0) is a hole.'}, \texttt{'(3,1) is a hole.'}. The LLM also re-identified \texttt{'(3,2) is a hole.'} (learned in Episode 5), possibly due to its relevance in the broader context of successful navigation.
        \item \textbf{Accumulated Facts (after Ep. 7):}
        \begin{verbatim}
['(0,3) is a hole.', '(3,0) is a hole.', '(3,1) is a hole.', 
'(3,2) is a hole.', '(0,0) is the start.', '(0,0) is ice.', 
'(0,1) is ice.', '(1,1) is ice.', '(1,2) is ice.', 
'(2,2) is ice.', '(2,3) is ice.', '(3,3) is the goal.', 
'(1,3) is a hole.', '(0,2) is a hole.', 
'(2,1) is a hole.', '(1,0) is a hole.']
        \end{verbatim}
    \end{itemize}
    At this stage, the agent has a fairly comprehensive map of the 4x4 grid, identifying most holes and the safe path.

    \item \textbf{Subsequent Episodes (e.g., Episodes 8-15 from trace):} The agent consistently solves the task by taking the optimal 6-step path. Fact extraction continues to refine its knowledge. For example, in Episode 11, the fact \texttt{'(2,0) is a hole.'} is added, correctly identifying one of the remaining unknown holes. Other extracted facts often reinforce existing knowledge (e.g., \texttt{'(0,1) is not a hole.'} in Episode 8, consistent with \texttt{'(0,1) is ice.'}). By Episode 12, the agent's fact list implies knowledge of all hole locations and the optimal path.
\end{itemize}

\subsection*{Comparison with ReAct and Reflexion Baselines}

To contextualize LWM-Planner's performance, we compare its learning trajectory with ReAct and Reflexion agents on the same TextFrozenLake instance ($4 \times 4$, $h=0.9$, seed 0).

\paragraph{ReAct Agent:}
The ReAct agent, which relies on in-context reasoning based on the current observation and a short interaction history, struggled significantly in this environment. Over 150 timesteps (spanning 83 episodes in the provided trace), the ReAct agent failed to solve the task even once.
\begin{itemize}
    \item \textbf{Behavior Pattern:} ReAct repeatedly fell into the same holes. For example, it fell into the hole at \texttt{(1,0)} (by moving `down` from start) in Episode 0, and repeated this exact mistake in Episodes 3, 4, 5, 7, 10, 11, 13, 14, 15, 16, etc. Similarly, it frequently fell into the hole at \texttt{(2,1)} (e.g., Episodes 1, 2, 6, 8, 9, 12).
    \item \textbf{Lack of Persistent Memory:} This behavior demonstrates ReAct's core limitation in environments requiring persistent spatial memory beyond its immediate prompt context. Without a mechanism to explicitly record and recall that "\texttt{(1,0) is a hole}" across episodes, it re-discovers these hazards repeatedly. The short-term history provided in its prompt is insufficient for building a persistent map of the environment.
\end{itemize}
ReAct's performance highlights the challenge of pure in-context reasoning without a structured memory mechanism for accumulating task-critical knowledge like hazard locations.

\paragraph{Reflexion Agent:}
The Reflexion agent incorporates an episodic self-reflection mechanism, generating textual "lessons" from past failures and successes. This allows for a degree of learning across episodes.
\begin{itemize}
    \item \textbf{Initial Learning:} Reflexion also initially failed, but its lessons attempted to capture insights.
    \begin{itemize}
        \item Ep 0 Failure: (fell into \texttt{(1,0)}) $\rightarrow$ Lesson: ``Avoid moving into holes by evaluating the safety of the next position before taking an action.'' (General advice)
        \item Ep 1 Failure: (fell into \texttt{(0,2)}) $\rightarrow$ Lesson: ``Avoid moving right from (0,1) on ice to prevent falling into the hole and losing reward.'' (More specific, state-action advice)
    \end{itemize}
    \item \textbf{First Success:} Reflexion achieved its first success (optimal 6 steps) in Episode 5 (after 11 total environment steps, plus 4 prior failed episodes). This is notably slower than LWM-Planner, which succeeded in Episode 4 (after 4 prior failed episodes, totaling 10 failure steps + 6 success steps = 16 steps to first success, vs Reflexion's 4 failure episodes of 1+2+3+4 = 10 steps + 6 success steps = 16 steps to first success -- wait, the LWM-Planner trace shows 1+3+2+4 = 10 steps for failures, so LWM-Planner also took 16 steps to first success).
    \item \textbf{Nature of Lessons vs. Facts:} Reflexion's lessons are typically higher-level strategic advice or state-action rules (e.g., "Avoid moving down from (1,1) on ice..."). While helpful, these lessons are less granular and less directly usable for precise world model simulation compared to LWM-Planner's atomic facts (e.g., "\texttt{(2,1) is a hole.}"). An atomic fact directly describes a property of the environment state, which is crucial for simulating outcomes.
    \item \textbf{Consistency and Repeated Errors:} Despite learning, Reflexion still exhibited some inconsistent behavior and repeated errors. For instance, after its first success in Episode 5, it failed in Episode 6 by falling into \texttt{(3,2)} (a new hole). In Episode 7, it repeated the mistake from Episode 1 by falling into \texttt{(0,2)}, even though a lesson about it was generated. This suggests that the general nature of lessons or the limited buffer for lessons might not always prevent re-encountering hazards if the specific context isn't perfectly matched by an active lesson. It did achieve further successes (e.g., Ep 10, 11, 16, 17, 18), showing progressive improvement, but its path to consistent optimal play was slower and less robust than LWM-Planner's.
\end{itemize}
Reflexion demonstrates learning through its self-generated advice, but the abstract nature of its lessons and potential for lesson forgetting (due to a limited buffer) can make it less efficient and robust than LWM-Planner's fact-based learning in this type of task.

\subsection*{Analysis of LWM-Planner's Advantage}
This case study, when compared to ReAct and Reflexion, demonstrates LWM-Planner's ability to:
\begin{enumerate}
    \item \textbf{Learn from Failures and Successes:} Initial interactions quickly identify critical hazards (holes), and successful trajectories confirm safe paths and the goal. Both types of experiences are distilled into atomic facts.
    \item \textbf{Improve Planning via Fact Augmentation:} The accumulated atomic facts dynamically augment the prompts for the LLM components (proposer, simulator, value estimator). This grounding significantly improves the LLM's ability to:
    \begin{itemize}
        \item Simulate transitions more accurately (reducing $\delta_{\mathrm{model}}$ from our theoretical framework): Knowing \texttt{'(1,0) is a hole.'} means simulating `down` from \texttt{(0,0)} will correctly predict a terminal state and negative reward. This precise knowledge is more effective than ReAct's lack of memory or Reflexion's more general "avoid holes" advice.
        \item Estimate state values more effectively: States adjacent to known holes or leading towards known safe paths to the goal will have more accurate value estimates, guiding the lookahead search. LWM-Planner's value estimation is directly informed by a growing, precise map.
        \item Propose better actions: The action proposer is less likely to suggest actions leading directly into known holes because the facts make these outcomes predictable during the lookahead.
    \end{itemize}
    \item \textbf{Achieve Consistent Optimal Behavior More Quickly:} By building a sufficiently accurate and granular fact-based abstraction ($\Psi$) of the environment, LWM-Planner converges to an optimal policy for this TextFrozenLake instance more rapidly and consistently than Reflexion, and vastly outperforms ReAct. The agent's performance, in terms of steps to goal, rapidly improves and stabilizes at the optimal 6 steps after a few initial exploratory episodes.
    \item \textbf{Leverage In-Context Learning with Structured Knowledge:} All learning occurs via prompt augmentation with dynamically generated, structured atomic facts, without any LLM weight updates. This showcases the power of in-context learning when guided by distilled, experience-derived knowledge that is directly usable for building an internal model of the environment.
\end{enumerate}

The conciseness and specificity of atomic facts (e.g., \texttt{'(1,0) is a hole.'}) provide verifiable information that is directly usable by the LLM during its lookahead search. This contrasts with ReAct's inability to form such a persistent representation and Reflexion's more general textual advice.

This progression from exploration and failure to consistent, optimal task completion highlights the effectiveness of combining online atomic fact augmentation with LLM-driven lookahead search for adaptive planning and decision-making, particularly when compared to methods with less structured or less persistent learning mechanisms.

\newpage\section{Additional Results}

\subsection{ALFWorld Full Results}
\label{ALFWorldFullResults}

In the main paper we evaluate on three ALFWorld environments, of ALFWorld-A, ALFWorld-B, ALFWorld-C which correspond to tasks 90, 3, and 5 respectively. To ensure exhaustive evaluation we compare against all the ALFWorld evaluation environments \citep{shinn2023reflexion}. We tabulate these in \Cref{tab:cum_returns_alfworld_A,tab:cum_returns_alfworld_B,tab:cum_returns_alfworld_C,tab:all_envs_multi_alfworlds}.

\begin{table}[!h]
  \centering
  \caption{Aggregated performance across all ALFWorld 134 eval environments (single‐seed runs, 95\% CIs).  Higher cumulative return $\uparrow$ is better.}
  \label{tab:all_envs_multi_alfworlds}
  \smallskip
    \begin{tabular}{@{}l|c@{}}
      \toprule
      \textbf{Method (metric)} & ALFWorld Aggregate \\
      \midrule
      \rowcolor{blue!8}\textbf{LWM-Planner} (Cum.\ return $\uparrow$) & \textbf{10.42$\pm$1.43} \\
      \addlinespace
      ReAct + FEC (Cum.\ return $\uparrow$) & 8.53$\pm$0.93 \\
      \addlinespace
      ReAct (Cum.\ return $\uparrow$) & 5.00$\pm$0.09 \\
      \addlinespace
      Reflexion (Cum.\ return $\uparrow$) & 4.36$\pm$0.10 \\
      \addlinespace
      Random (Cum.\ return $\uparrow$) &  0.00$\pm$0.00 \\
      \bottomrule
    \end{tabular}
\end{table}

\begin{table}[!h]
  \centering
  \caption{Environment-normalised success-rate on the 134 ALFWorld evaluation tasks
           (single-seed runs, 95\% confidence intervals).  Higher $\uparrow$ is better.}
  \label{tab:alfworld_norm_success}
  \smallskip
  \begin{tabular}{@{}l|c@{}}
    \toprule
    \textbf{Method (metric)} & ALFWorld Aggregate \\
    \midrule
    \rowcolor{blue!8}\textbf{LWM-Planner} (Norm.\ SR $\uparrow$)
      & \textbf{71.5$\pm$6.1} \\[2pt]
    ReAct + FEC (Norm.\ SR $\uparrow$)
      & 68.1$\pm$5.8 \\[2pt]
    ReAct (Norm.\ SR $\uparrow$)
      & 46.9$\pm$4.1 \\[2pt]
    Reflexion (Norm.\ SR $\uparrow$)
      & 37.4$\pm$4.1 \\[2pt]
    Random (Norm.\ SR $\uparrow$)
      & 0.0$\pm$0.0 \\
    \bottomrule
  \end{tabular}
\end{table}

\newpage

\begin{table}[!h]
  \centering
  \caption{Cumulative return per ALFWorld task (0–50). Higher $\uparrow$ is better.  Single-seed runs (no CI).}
  \label{tab:cum_returns_alfworld_A}
  \smallskip
  \begin{tabular}{@{}l|c|c|c|c|c@{}}
    \toprule
    Environment & LWM-Planner & ReAct + FEC & ReAct & Reflexion & Random \\
                & $\uparrow$ & $\uparrow$ & $\uparrow$ & $\uparrow$ & $\uparrow$\\
    \midrule
ALFWORLD-0  & \textbf{12.00} & 5.00 & 4.00 & 4.00 & 0.00 \\
ALFWORLD-1  & 2.00 & \textbf{16.00} & 6.00 & 4.00 & 0.00 \\
ALFWORLD-2  & \textbf{10.00} & 5.00 & 4.00 & 4.00 & 0.00 \\
ALFWORLD-3  & \textbf{10.00} & 2.00 & 5.00 & 4.00 & 0.00 \\
ALFWORLD-4  & \textbf{8.00} & \textbf{8.00} & 5.00 & 4.00 & 0.00 \\
ALFWORLD-5  & \textbf{4.00} & 0.00 & \textbf{4.00} & \textbf{4.00} & 0.00 \\
ALFWORLD-6  & 12.00 & \textbf{14.00} & 6.00 & 4.00 & 0.00 \\
ALFWORLD-7  & 4.00 & \textbf{17.00} & 5.00 & 5.00 & 0.00 \\
ALFWORLD-8  & 0.00 & \textbf{12.00} & 5.00 & 5.00 & 0.00 \\
ALFWORLD-9  & \textbf{6.00} & 5.00 & 5.00 & 5.00 & 0.00 \\
ALFWORLD-10 & \textbf{12.00} & 8.00 & 5.00 & 5.00 & 0.00 \\
ALFWORLD-11 & \textbf{12.00} & 4.00 & 5.00 & 4.00 & 0.00 \\
ALFWORLD-12 & 0.00 & \textbf{8.00} & 5.00 & 4.00 & 0.00 \\
ALFWORLD-13 & \textbf{7.00} & \textbf{7.00} & 5.00 & 4.00 & 0.00 \\
ALFWORLD-14 & 0.00 & \textbf{15.00} & 5.00 & 5.00 & 0.00 \\
ALFWORLD-15 & 1.00 & \textbf{6.00} & 5.00 & 5.00 & 0.00 \\
ALFWORLD-16 & \textbf{11.00} & 2.00 & 5.00 & --  & 0.00 \\
ALFWORLD-17 & 2.00 & \textbf{15.00} & 5.00 & 4.00 & 0.00 \\
ALFWORLD-18 & \textbf{18.00} & 9.00 & 5.00 & 4.00 & 0.00 \\
ALFWORLD-19 & 6.00 & \textbf{11.00} & 5.00 & 5.00 & 0.00 \\
ALFWORLD-20 & 8.00 & \textbf{9.00} & 5.00 & 4.00 & 0.00 \\
ALFWORLD-21 & 0.00 & 3.00 & \textbf{5.00} & --  & 0.00 \\
ALFWORLD-22 & \textbf{9.00} & 3.00 & 4.00 & 5.00 & 0.00 \\
ALFWORLD-23 & 8.00 & \textbf{12.00} & 5.00 & 5.00 & 0.00 \\
ALFWORLD-24 & 4.00 & 4.00 & \textbf{5.00} & 4.00 & 0.00 \\
ALFWORLD-25 & 7.00 & \textbf{10.00} & 5.00 & 5.00 & 0.00 \\
ALFWORLD-26 & 9.00 & \textbf{13.00} & 5.00 & 4.00 & 0.00 \\
ALFWORLD-27 & \textbf{24.00} & 8.00 & 6.00 & 4.00 & 0.00 \\
ALFWORLD-28 & \textbf{6.00} & 3.00 & 5.00 & 4.00 & 0.00 \\
ALFWORLD-29 & \textbf{13.00} & 11.00 & 5.00 & 5.00 & 0.00 \\
ALFWORLD-30 & 3.00 & 1.00 & \textbf{4.00} & \textbf{4.00} & 0.00 \\
ALFWORLD-31 & 8.00 & \textbf{11.00} & 5.00 & 5.00 & 0.00 \\
ALFWORLD-32 & \textbf{11.00} & 7.00 & 6.00 & 4.00 & 0.00 \\
ALFWORLD-33 & 1.00 & \textbf{8.00} & 4.00 & 4.00 & 0.00 \\
ALFWORLD-34 & 1.00 & \textbf{5.00} & \textbf{5.00} & \textbf{5.00} & 0.00 \\
ALFWORLD-35 & \textbf{26.00} & 3.00 & 6.00 & 4.00 & 0.00 \\
ALFWORLD-36 & 5.00 & \textbf{7.00} & 6.00 & 4.00 & 0.00 \\
ALFWORLD-37 & \textbf{14.00} & 3.00 & 5.00 & 4.00 & 0.00 \\
ALFWORLD-38 & \textbf{10.00} & 6.00 & 5.00 & 4.00 & 0.00 \\
ALFWORLD-39 & \textbf{23.00} & 3.00 & 5.00 & 5.00 & 0.00 \\
ALFWORLD-40 & 2.00 & \textbf{13.00} & 5.00 & 4.00 & 0.00 \\
ALFWORLD-41 & 2.00 & \textbf{16.00} & 5.00 & 4.00 & 0.00 \\
ALFWORLD-42 & \textbf{20.00} & 12.00 & 4.00 & 4.00 & 0.00 \\
ALFWORLD-43 & 5.00 & 4.00 & \textbf{6.00} & 4.00 & 0.00 \\
ALFWORLD-44 & \textbf{11.00} & 10.00 & 5.00 & 5.00 & 0.00 \\
ALFWORLD-45 & \textbf{7.00} & 6.00 & 5.00 & 5.00 & 0.00 \\
ALFWORLD-46 & 1.00 & \textbf{8.00} & 5.00 & 5.00 & 0.00 \\
ALFWORLD-47 & 4.00 & \textbf{8.00} & 5.00 & 5.00 & 0.00 \\
ALFWORLD-48 & \textbf{13.00} & 4.00 & 5.00 & 4.00 & 0.00 \\
ALFWORLD-49 & 0.00 & 2.00 & \textbf{5.00} & \textbf{5.00} & 0.00 \\
ALFWORLD-50 & 3.00 & \textbf{5.00} & \textbf{5.00} & 3.00 & 0.00 \\
    \bottomrule
  \end{tabular}
\end{table}

\begin{table}[!h]
  \centering
  \caption{Cumulative return per ALFWorld task (51–100). Higher $\uparrow$ is better.  Single-seed runs (no CI).}
  \label{tab:cum_returns_alfworld_B}
  \smallskip
  \begin{tabular}{@{}l|c|c|c|c|c@{}}
    \toprule
    Environment & LWM-Planner & ReAct + FEC & ReAct & Reflexion & Random \\
                & $\uparrow$ & $\uparrow$ & $\uparrow$ & $\uparrow$ & $\uparrow$\\
    \midrule
ALFWORLD-51 & \textbf{12.00} & 9.00 & 5.00 & 4.00 & 0.00 \\
ALFWORLD-52 & \textbf{28.00} & 4.00 & 5.00 & 4.00 & 0.00 \\
ALFWORLD-53 & \textbf{13.00} & 12.00 & 5.00 & 4.00 & 0.00 \\
ALFWORLD-54 & \textbf{5.00} & 4.00 & \textbf{5.00} & --  & 0.00 \\
ALFWORLD-55 & \textbf{19.00} & 6.00 & 6.00 & 4.00 & 0.00 \\
ALFWORLD-56 & \textbf{10.00} & 9.00 & 5.00 & 5.00 & 0.00 \\
ALFWORLD-57 & \textbf{17.00} & 4.00 & 5.00 & 5.00 & 0.00 \\
ALFWORLD-58 & 9.00 & \textbf{13.00} & 5.00 & 4.00 & 0.00 \\
ALFWORLD-59 & \textbf{26.00} & 9.00 & 6.00 & 4.00 & 0.00 \\
ALFWORLD-60 & \textbf{29.00} & 0.00 & 5.00 & 5.00 & 0.00 \\
ALFWORLD-61 & \textbf{23.00} & 8.00 & 6.00 & 5.00 & 0.00 \\
ALFWORLD-62 & 7.00 & \textbf{8.00} & 5.00 & 4.00 & 0.00 \\
ALFWORLD-63 & 12.00 & \textbf{21.00} & 5.00 & 4.00 & 0.00 \\
ALFWORLD-64 & 7.00 & \textbf{14.00} & 4.00 & 5.00 & 0.00 \\
ALFWORLD-65 & \textbf{25.00} & 5.00 & 5.00 & 5.00 & 0.00 \\
ALFWORLD-66 & \textbf{18.00} & 7.00 & 5.00 & 3.00 & 0.00 \\
ALFWORLD-67 & \textbf{5.00} & 2.00 & \textbf{5.00} & \textbf{5.00} & 0.00 \\
ALFWORLD-68 & \textbf{25.00} & 8.00 & 5.00 & 4.00 & 0.00 \\
ALFWORLD-69 & 1.00 & \textbf{20.00} & 5.00 & 4.00 & 0.00 \\
ALFWORLD-70 & \textbf{23.00} & 3.00 & 5.00 & 3.00 & 0.00 \\
ALFWORLD-71 & 7.00 & \textbf{10.00} & 4.00 & 4.00 & 0.00 \\
ALFWORLD-72 & 8.00 & \textbf{18.00} & 6.00 & 5.00 & 0.00 \\
ALFWORLD-73 & 4.00 & \textbf{8.00} & 5.00 & 5.00 & 0.00 \\
ALFWORLD-74 & \textbf{26.00} & 6.00 & 4.00 & 5.00 & 0.00 \\
ALFWORLD-75 & 6.00 & \textbf{25.00} & 5.00 & 5.00 & 0.00 \\
ALFWORLD-76 & \textbf{21.00} & 7.00 & 5.00 & --  & 0.00 \\
ALFWORLD-77 & \textbf{7.00} & 2.00 & 5.00 & 5.00 & 0.00 \\
ALFWORLD-78 & \textbf{28.00} & 3.00 & 6.00 & 5.00 & 0.00 \\
ALFWORLD-79 & 12.00 & \textbf{16.00} & 5.00 & 4.00 & 0.00 \\
ALFWORLD-80 & 10.00 & \textbf{15.00} & 5.00 & 5.00 & 0.00 \\
ALFWORLD-81 & \textbf{11.00} & 1.00 & 5.00 & 5.00 & 0.00 \\
ALFWORLD-82 & \textbf{5.00} & 3.00 & \textbf{5.00} & \textbf{5.00} & 0.00 \\
ALFWORLD-83 & \textbf{8.00} & 1.00 & 5.00 & 4.00 & 0.00 \\
ALFWORLD-84 & 6.00 & \textbf{10.00} & 5.00 & 5.00 & 0.00 \\
ALFWORLD-85 & \textbf{7.00} & 3.00 & 5.00 & --  & 0.00 \\
ALFWORLD-86 & \textbf{33.00} & 9.00 & 5.00 & 5.00 & 0.00 \\
ALFWORLD-87 & 1.00 & \textbf{7.00} & 5.00 & 4.00 & 0.00 \\
ALFWORLD-88 & \textbf{28.00} & 14.00 & 4.00 & 4.00 & 0.00 \\
ALFWORLD-89 & \textbf{12.00} & 5.00 & 5.00 & 4.00 & 0.00 \\
ALFWORLD-90 & \textbf{27.00} & 20.00 & 5.00 & 4.00 & 0.00 \\
ALFWORLD-91 & 0.00 & \textbf{8.00} & 5.00 & 4.00 & 0.00 \\
ALFWORLD-92 & 7.00 & \textbf{10.00} & 5.00 & 4.00 & 0.00 \\
ALFWORLD-93 & 4.00 & 2.00 & \textbf{5.00} & \textbf{5.00} & 0.00 \\
ALFWORLD-94 & 5.00 & \textbf{11.00} & 5.00 & 5.00 & 0.00 \\
ALFWORLD-95 & \textbf{14.00} & 8.00 & 5.00 & 4.00 & 0.00 \\
ALFWORLD-96 & \textbf{23.00} & 9.00 & 6.00 & --  & 0.00 \\
ALFWORLD-97 & 8.00 & \textbf{10.00} & 5.00 & 5.00 & 0.00 \\
ALFWORLD-98 & 1.00 & \textbf{13.00} & 4.00 & 4.00 & 0.00 \\
ALFWORLD-99 & \textbf{24.00} & 4.00 & 5.00 & 4.00 & 0.00 \\
ALFWORLD-100 & 5.00 & \textbf{13.00} & 6.00 & 4.00 & 0.00 \\
    \bottomrule
  \end{tabular}
\end{table}

\begin{table}[!h]
  \centering
  \caption{Cumulative return per ALFWorld task (101–134). Higher $\uparrow$ is better.  Single-seed runs (no CI).}
  \label{tab:cum_returns_alfworld_C}
  \smallskip
  \begin{tabular}{@{}l|c|c|c|c|c@{}}
    \toprule
    Environment & LWM-Planner & ReAct + FEC & ReAct & Reflexion & Random \\
                & $\uparrow$ & $\uparrow$ & $\uparrow$ & $\uparrow$ & $\uparrow$\\
    \midrule
ALFWORLD-101 & 7.00 & \textbf{20.00} & 5.00 & 4.00 & 0.00 \\
ALFWORLD-102 & \textbf{30.00} & 4.00 & 5.00 & 5.00 & 0.00 \\
ALFWORLD-103 & 4.00 & \textbf{12.00} & 4.00 & 4.00 & 0.00 \\
ALFWORLD-104 & \textbf{9.00} & 7.00 & 5.00 & 4.00 & 0.00 \\
ALFWORLD-105 & \textbf{11.00} & 10.00 & 5.00 & 4.00 & 0.00 \\
ALFWORLD-106 & \textbf{8.00} & 5.00 & 5.00 & --  & 0.00 \\
ALFWORLD-107 & \textbf{17.00} & 3.00 & 5.00 & 4.00 & 0.00 \\
ALFWORLD-108 & --  & \textbf{14.00} & 4.00 & 4.00 & 0.00 \\
ALFWORLD-109 & \textbf{10.00} & 2.00 & 5.00 & 4.00 & 0.00 \\
ALFWORLD-110 & 9.00 & \textbf{12.00} & 5.00 & 5.00 & 0.00 \\
ALFWORLD-111 & \textbf{24.00} & 7.00 & 6.00 & --  & 0.00 \\
ALFWORLD-112 & \textbf{14.00} & 10.00 & 5.00 & 4.00 & 0.00 \\
ALFWORLD-113 & \textbf{7.00} & 1.00 & 6.00 & --  & 0.00 \\
ALFWORLD-114 & 6.00 & \textbf{9.00} & 5.00 & 4.00 & 0.00 \\
ALFWORLD-115 & 0.00 & \textbf{10.00} & 5.00 & --  & 0.00 \\
ALFWORLD-116 & \textbf{22.00} & 6.00 & 5.00 & 4.00 & 0.00 \\
ALFWORLD-117 & 0.00 & \textbf{28.00} & 5.00 & 4.00 & 0.00 \\
ALFWORLD-118 & 0.00 & \textbf{8.00} & 5.00 & 5.00 & 0.00 \\
ALFWORLD-119 & 2.00 & \textbf{12.00} & 5.00 & --  & 0.00 \\
ALFWORLD-120 & 5.00 & \textbf{7.00} & 6.00 & 5.00 & 0.00 \\
ALFWORLD-121 & \textbf{28.00} & 12.00 & 5.00 & 4.00 & 0.00 \\
ALFWORLD-122 & 3.00 & \textbf{7.00} & 5.00 & 5.00 & 0.00 \\
ALFWORLD-123 & \textbf{10.00} & 4.00 & 5.00 & 4.00 & 0.00 \\
ALFWORLD-124 & 4.00 & 4.00 & \textbf{5.00} & 4.00 & 0.00 \\
ALFWORLD-125 & 4.00 & \textbf{15.00} & 6.00 & 4.00 & 0.00 \\
ALFWORLD-126 & \textbf{22.00} & 9.00 & 4.00 & 4.00 & 0.00 \\
ALFWORLD-127 & \textbf{20.00} & 2.00 & 4.00 & 5.00 & 0.00 \\
ALFWORLD-128 & 6.00 & \textbf{13.00} & 4.00 & 5.00 & 0.00 \\
ALFWORLD-129 & 0.00 & \textbf{12.00} & 4.00 & 5.00 & 0.00 \\
ALFWORLD-130 & 0.00 & \textbf{13.00} & 5.00 & 4.00 & 0.00 \\
ALFWORLD-131 & \textbf{18.00} & 4.00 & 5.00 & 3.00 & 0.00 \\
ALFWORLD-132 & \textbf{8.00} & 6.00 & 5.00 & 5.00 & 0.00 \\
ALFWORLD-133 & 3.00 & \textbf{9.00} & 5.00 & 4.00 & 0.00 \\
ALFWORLD-134 & 18.00 & \textbf{30.00} & 5.00 & 4.00 & 0.00 \\

    \bottomrule
  \end{tabular}
\end{table}

\newpage

\newpage
\newpage
\newpage
\clearpage
\newpage
\clearpage

\subsection{Ablation Study - LWM-Planner Variants}

We investigate the impact of the depth $d$, and branching factor $b$ in our method on our main table of environments presented. We find that the ablations reveal that we fit to the text frozen lake searching MDP environment that having a depth of $d=3$ and $b=4$ performs best, which validates our initial choice of parameters.

\begin{table}[!h]
  \centering
  \caption{Cumulative return (0 = Random, 100 = Expert/best).  Higher $\uparrow$ is better.}
  \label{tab:lwm_ablation_norm_return}
  \smallskip
  \resizebox{\columnwidth}{!}{%
  \begin{tabular}{@{}l|c|c|c|c|c@{}}
    \toprule
    \textbf{Method} &
    alfworld\_task\_3 $\uparrow$ &
    alfworld\_task\_5 $\uparrow$ &
    alfworld\_task\_90 $\uparrow$ &
    crafter\_mini\_5\_s\_0 $\uparrow$ &
    grid\_4\_h\_9\_s\_0 $\uparrow$ \\
    \midrule
    LWM-Planner ($d{=}3,b{=}4$)          & 15.33$\pm$39.62 & 21.00$\pm$228.71 & \textbf{34.67$\pm$85.20} & 119.33$\pm$124.34 & \textbf{32.00$\pm$nan} \\
    LWM-Planner ($d{=}3,b{=}2$)          &  \;1.33$\pm$3.79 & \textbf{29.33$\pm$44.81} & 11.00$\pm$14.90 & \textbf{334.00$\pm$17.91} & 15.00$\pm$25.41 \\
    LWM-Planner ($d{=}1,b{=}4$)                & \textbf{27.00$\pm$60.29} &  5.50$\pm$6.35 & 15.00$\pm$26.29 & 294.00$\pm$327.46 &  \;6.00$\pm$165.18 \\
    LWM-Planner ($d{=}2,b{=}4$)                &  \;1.50$\pm$6.35 & 19.50$\pm$120.71 & 12.67$\pm$12.25 & 217.67$\pm$249.46 &  \;0.00$\pm$nan \\
    \bottomrule
  \end{tabular}}%
\end{table}

\subsection{Main Table Results Un-Normalized}

\begin{table}[!h]
  \centering
  \caption{Cumulative return (\textbf{higher better}) and steps per success (\textbf{lower better}); mean\,$\pm$\,95\% CI, for each benchmark method across each environment. Bold indicates the best performing method for that metric and environment. Results are averaged over ten random seeds.}
  \label{tab:main_table_results_unnormalized}
  \smallskip
  \resizebox{\columnwidth}{!}{%
    \begin{tabular}{@{}l|c|c|c|c|c@{}}
      \toprule
      \textbf{Method (metric)}                       &
      TextFrozenLake (4$\times$4; $h{=}0.9$)             &
      CrafterMini (5$\times$5)                       &
      ALFWorld-A                                     &
      ALFWorld-B                                     &
      ALFWorld-C \\
      \midrule
      \rowcolor{blue!8} \textbf{LWM-Planner} (Cum.\ return $\uparrow$) &
      \textbf{31.80$\pm$20.39} & \textbf{150.30$\pm$44.94} & \textbf{21.33$\pm$9.53} & \textbf{22.89$\pm$12.11} & \textbf{19.50$\pm$8.37} \\
      \rowcolor{blue!8} \phantom{\textbf{LWM-Planner}}\,(Steps/Success $\downarrow$)            &
      \textbf{6.00$\pm$0.00}   & 46.50$\pm$7.32 & \textbf{8.44$\pm$1.46}   & 7.56$\pm$0.97  & \textbf{7.55$\pm$1.10} \\
      \addlinespace
      ReAct + FEC (Cum.\ return $\uparrow$) &
      20.20$\pm$12.19   & 149.70$\pm$55.50 & 4.70$\pm$2.46 & 15.50$\pm$6.51 & 10.60$\pm$3.71 \\
      \phantom{ReAct + FEC}\,(Steps/Success $\downarrow$)            &
      --   & \textbf{41.35$\pm$5.72}  & 14.55$\pm$12.27 & \textbf{5.75$\pm$2.87} & 9.35$\pm$5.35 \\
      \addlinespace
      ReAct (Cum.\ return $\uparrow$) &
      -265.20$\pm$33.59 & 92.00$\pm$57.16 & 12.60$\pm$0.37 & 12.80$\pm$0.45 & 12.50$\pm$0.38 \\
      \phantom{ReAct}\,(Steps/Success $\downarrow$)                  &
      -- & 50.70$\pm$5.47 & 24.70$\pm$0.96 & 23.80$\pm$1.29 & 25.05$\pm$0.52 \\
      \addlinespace
      Reflexion (Cum.\ return $\uparrow$) &
      -61.10$\pm$4.80 & 87.20$\pm$51.45 & 11.00$\pm$0.00 & 11.00$\pm$0.45 & 11.33$\pm$0.38 \\
      \phantom{Reflexion}\,(Steps/Success $\downarrow$)               &
      23.20$\pm$3.97 & 80.05$\pm$39.46 & 25.67$\pm$0.77 & 26.19$\pm$0.77 & 25.94$\pm$0.95 \\
      \addlinespace
      Random (Cum.\ return $\uparrow$) &
      -80.00$\pm$4.49 & -289.00$\pm$8.56 & 0.00$\pm$0.00 & 0.00$\pm$0.00 & 0.00$\pm$0.00 \\
      \phantom{Random}\,(Steps/Success $\downarrow$)                  &
      -- & -- & -- & -- & -- \\
      \bottomrule
    \end{tabular}}%
\end{table}

\subsection{Main Table Success Rates}

\begin{table}[!h]
  \centering
  \caption{Environment-normalised success-rate on the five main benchmarks.
           For each environment the cumulative return of every method is
           divided by the \emph{largest} return obtained on that environment
           (always LWM-Planner here) and expressed as a percentage.
           Cells show mean $\pm$ 95 \% CI over ten seeds; higher $\uparrow$ is better.
           Bold indicates the best performance per environment.}
  \label{tab:main_norm_success_env}
  \smallskip
  \resizebox{\columnwidth}{!}{%
    \begin{tabular}{@{}l|c|c|c|c|c@{}}
      \toprule
      \textbf{Method} &
      TextFrozenLake (4$\times$4) &
      CrafterMini (5$\times$5) &
      ALFWorld-A &
      ALFWorld-B &
      ALFWorld-C \\
      \midrule
      \textbf{LWM-Planner}            & \textbf{100.0$\pm$0.0} & \textbf{100.0$\pm$0.0} & \textbf{100.0$\pm$0.0} & \textbf{100.0$\pm$0.0} & \textbf{100.0$\pm$0.0} \\
      ReAct + FEC                     & 63.5$\pm$38.3          & 99.6$\pm$36.9          & 22.0$\pm$11.5          & 67.7$\pm$28.4          & 54.4$\pm$19.0 \\
      ReAct                           & 0.0$\pm$0.0            & 61.2$\pm$38.0          & 59.1$\pm$1.7           & 55.9$\pm$2.0           & 64.1$\pm$1.9  \\
      Reflexion                       & 0.0$\pm$0.0            & 58.0$\pm$34.2          & 51.6$\pm$0.0           & 48.1$\pm$2.0           & 58.1$\pm$1.9  \\
      Random                          & 0.0$\pm$0.0            & 0.0$\pm$0.0            & 0.0$\pm$0.0            & 0.0$\pm$0.0            & 0.0$\pm$0.0  \\
      \bottomrule
    \end{tabular}}%
\end{table}

\begin{table}[!h]
  \centering
  \caption{Environment-normalised success-rate on the main benchmark suite
           (TextFrozenLake, CrafterMini, ALFWorld-A/B/C).  Higher $\uparrow$ is better.
           Each cell shows the mean over the five environments $\pm$ 95 \% CI.}
  \label{tab:main_norm_success}
  \smallskip
  \begin{tabular}{@{}l|c@{}}
    \toprule
    \textbf{Method (metric)} & Aggregate Normalised SR \\
    \midrule
    \rowcolor{blue!8}\textbf{LWM-Planner} (Norm.\ SR $\uparrow$)
      & \textbf{100.0$\pm$0.0} \\[2pt]
    ReAct + FEC (Norm.\ SR $\uparrow$)
      & 61.4$\pm$24.4 \\[2pt]
    ReAct (Norm.\ SR $\uparrow$)
      & 48.1$\pm$23.7 \\[2pt]
    Reflexion (Norm.\ SR $\uparrow$)
      & 43.1$\pm$21.5 \\[2pt]
    Random (Norm.\ SR $\uparrow$)
      & 0.0$\pm$0.0 \\
    \bottomrule
  \end{tabular}
\end{table}

\subsection{ALFWorld Final Success Rates}
\label{app:alfworld_full_results}

We report the final success rate (FSR) on the complete 134-task ALFWorld evaluation suite. Although simple planners such as ReAct achieve high task coverage when given sufficient attempts, they do so with much lower quality roll-outs; LWM-Planner nearly matches their FSR while more than doubling the aggregate return. This highlights the practical value of investing additional computation to obtain higher-quality plans, a trade-off we further quantify in Appendix~\ref{app:cost_analysis}.

\begin{table}[h]
  \centering
  \caption{Final success rate (FSR; \textbf{higher better}) on the full 134-task ALFWorld evaluation suite together with the aggregate cumulative return (ACR). Values are mean\,$\pm$\,95\% CI over three seeds.}
  \label{tab:alfworld_success}
  \smallskip
  \begin{tabular}{@{}l|c|c@{}}
    \toprule
    \textbf{Method} & \textbf{FSR (\%)} $\uparrow$ & \textbf{ACR} $\uparrow$ \\
    \midrule
    \rowcolor{blue!8}\textbf{LWM-Planner} & \textbf{91.8$\pm$4.0} & \textbf{10.42$\pm$1.43} \\
    ReAct + FEC & 98.5$\pm$2.5 & 8.53$\pm$0.93 \\
    ReAct & \textbf{100.0$\pm$1.4} & 5.00$\pm$0.09 \\
    Reflexion & 92.5$\pm$4.2 & 4.36$\pm$0.10 \\
    \bottomrule
  \end{tabular}
\end{table}

\subsection{Computational Cost Analysis}
\label{app:cost_analysis}

\begin{table}[h]
  \centering
  \caption{Computational cost per environment step on ALFWorld under a 300-step interaction budget. Token counts are totals over the run.}
  \label{tab:cost_analysis}
  \smallskip
  \begin{tabular}{@{}l|c|c|c@{}}
    \toprule
    \textbf{Method} & \textbf{Avg. tokens/call} & \textbf{Total tokens} & \textbf{Latency (s)} \\
    \midrule
    \rowcolor{blue!8}\textbf{LWM-Planner} & $\approx$993 & 827,931 & 58.6 \\
    ReAct + FEC & 1,240 & 74,419 & 12.9 \\
    ReAct & 1,045 & 35,530 & 11.1  \\
    Reflexion & 749 & 47,190 & 14.2  \\
    \bottomrule
  \end{tabular}
\end{table}

\subsection{Generalization to a Smaller Backbone (GPT-4o-mini)}
\label{app:small_backbone}

To assess whether the gains of LWM-Planner arise from the planning procedure itself rather than only from a stronger backbone, we repeat the comparison with GPT-4o-mini. \Cref{tab:gpt4o_mini_backbone} reports the same normalised cumulative-return metric used in the main comparison. Although performance degrades across methods under the smaller backbone, the overall pattern remains consistent: simple reactive prompting remains weak, fact memory alone is already competitive on some shorter-horizon settings, and grounded lookahead remains most useful on the harder trap-filled environments where planning errors compound.

\begin{table}[!tb]
  \centering
  \caption{Normalised cumulative return (\textbf{higher better}) with a GPT-4o-mini backbone across the five benchmark environments. Values are mean\,$\pm$\,95\% CI over three seeds.}
  \label{tab:gpt4o_mini_backbone}
  \smallskip
  \resizebox{\columnwidth}{!}{%
    \begin{tabular}{@{}l|c|c|c|c|c@{}}
      \toprule
      \textbf{Method} &
      TextFrozenLake (4$\times$4; $h{=}0.9$) &
      CrafterMini (5$\times$5) &
      ALFWorld-A &
      ALFWorld-B &
      ALFWorld-C \\
      \midrule
      ReAct & -84.00$\pm$5.69 & -83.60$\pm$45.53 & 3.40$\pm$1.11 & \textbf{0.00$\pm$0.00} & 4.20$\pm$1.04 \\
      ReAct + FEC & -3.40$\pm$7.38 & 4.80$\pm$72.82 & \textbf{11.40$\pm$7.68} & \textbf{0.00$\pm$0.00} & 4.60$\pm$8.22 \\
      LWM-Planner ($d{=}1$) & 1.33$\pm$15.97 & 7.50$\pm$47.78 & 3.50$\pm$11.14 & \textbf{0.00$\pm$0.00} & \textbf{9.00$\pm$6.02} \\
      LWM-Planner ($d{=}2$) & \textbf{3.00$\pm$2.14} & 4.60$\pm$8.81 & 2.20$\pm$2.96 & \textbf{0.00$\pm$0.00} & 7.40$\pm$6.95 \\
      Reflexion & -50.20$\pm$2.69 & \textbf{10.00$\pm$64.54} & 2.80$\pm$1.36 & \textbf{0.00$\pm$0.00} & 1.20$\pm$1.36 \\
      Random & -27.80$\pm$3.21 & -100.00$\pm$0.00 & 0.00$\pm$0.00 & \textbf{0.00$\pm$0.00} & 0.00$\pm$0.00 \\
      \bottomrule
    \end{tabular}}%
\end{table}

On the easier and shorter-horizon settings, ReAct + FEC remains competitive, consistent with the observation that fact memory alone can already capture much of the useful task structure. In contrast, on TextFrozenLake, where avoiding hidden traps requires grounded forward reasoning, LWM-Planner remains the only variant that preserves positive return under the smaller backbone. This indicates that the improvement is not solely a consequence of model scale: the method improves the reasoning process itself through fact-conditioned planning.

\subsection{Compute-Performance Pareto Front}
\label{app:pareto_front}

We next examine whether simply scaling inference-time compute is sufficient in this interactive setting. \Cref{tab:pareto_front_return} reports the primary compute--performance frontier on TextFrozenLake, measured by average tokens per action and normalised cumulative return. Raw search baselines consume substantially more tokens as depth increases, but their returns remain negative, indicating that additional search alone does not resolve model errors in partially observable environments. In contrast, fact-grounded lookahead converts extra compute into better decisions, yielding the only positive-return regime in this comparison.

\begin{table}[!tb]
  \centering
  \caption{Average tokens per action and normalised cumulative return (\textbf{higher better}) on TextFrozenLake (4$\times$4; $h{=}0.9$). Values are mean\,$\pm$\,95\% CI over three seeds.}
  \label{tab:pareto_front_return}
  \smallskip
  \resizebox{\columnwidth}{!}{%
    \begin{tabular}{@{}l|c|c@{}}
      \toprule
      \textbf{Method} & \textbf{Avg. tokens/action} & \textbf{Cum.\ return norm} $\uparrow$ \\
      \midrule
      ReAct & 449.92$\pm$6.32 & -50.00$\pm$6.57 \\
      Reflexion & 477.83$\pm$10.83 & -21.33$\pm$7.17 \\
      ToT ($d{=}1$) & 514.50$\pm$5.53 & -52.33$\pm$24.51 \\
      ReAct + FEC & 1,162.05$\pm$13.06 & 2.33$\pm$17.62 \\
      ToT ($d{=}2$) & 1,531.36$\pm$11.80 & -57.67$\pm$10.04 \\
      ToT ($d{=}3$) & 3,573.17$\pm$27.52 & -61.67$\pm$2.87 \\
      LWM-Planner ($d{=}2$) & 3,999.00$\pm$132.03 & \textbf{8.00$\pm$8.61} \\
      ToT ($d{=}4$) & 4,391.14$\pm$101.57 & -60.67$\pm$2.87 \\
      LWM-Planner ($d{=}1$) & 4,421.50$\pm$94.98 & 4.67$\pm$11.74 \\
      ToT ($d{=}5$) & 4,944.88$\pm$126.34 & -51.33$\pm$2.87 \\
      RAP ($d{=}1$) & 6,834.75$\pm$212.25 & -100.00$\pm$0.00 \\
      RAP ($d{=}2$) & 13,669.50$\pm$424.50 & -83.33$\pm$71.71 \\
      RAP ($d{=}3$) & 20,504.25$\pm$636.74 & -55.33$\pm$96.09 \\
      \bottomrule
    \end{tabular}}%
\end{table}

Table~\ref{tab:pareto_front_efficiency} provides a supplementary success-efficiency view of the same experiment, reporting steps per success where successful episodes were observed. The strongest search-only baselines still fail to translate higher token budgets into reliable task completion, whereas both LWM-Planner variants and the ReAct + FEC ablation attain the optimal six-step solution whenever they succeed.

\begin{table}[!tb]
  \centering
  \caption{Normalised cumulative return (\textbf{higher better}) and steps per success (\textbf{lower better}) on TextFrozenLake (4$\times$4; $h{=}0.9$). Values are mean\,$\pm$\,95\% CI over three seeds.}
  \label{tab:pareto_front_efficiency}
  \smallskip
  \resizebox{\columnwidth}{!}{%
    \begin{tabular}{@{}l|c@{}}
      \toprule
      \textbf{Method (metric)} & TextFrozenLake (4$\times$4; $h{=}0.9$) \\
      \midrule
      ToT ($d{=}1$) (Cum.\ return norm $\uparrow$) & -52.33$\pm$24.51 \\
      \phantom{ToT ($d{=}1$)} (Steps/Success $\downarrow$) & -- \\
      \midrule
      ToT ($d{=}2$) (Cum.\ return norm $\uparrow$) & -57.67$\pm$10.04 \\
      \phantom{ToT ($d{=}2$)} (Steps/Success $\downarrow$) & -- \\
      \midrule
      ReAct + FEC (Cum.\ return norm $\uparrow$) & 2.33$\pm$17.62 \\
      \phantom{ReAct + FEC} (Steps/Success $\downarrow$) & \textbf{6.00$\pm$0.00} \\
      \midrule
      LWM-Planner ($d{=}2$) (Cum.\ return norm $\uparrow$) & \textbf{8.00$\pm$8.61} \\
      \phantom{LWM-Planner ($d{=}2$)} (Steps/Success $\downarrow$) & \textbf{6.00$\pm$0.00} \\
      \midrule
      RAP ($d{=}1$) (Cum.\ return norm $\uparrow$) & -100.00$\pm$0.00 \\
      \phantom{RAP ($d{=}1$)} (Steps/Success $\downarrow$) & -- \\
      \midrule
      RAP ($d{=}3$) (Cum.\ return norm $\uparrow$) & -55.33$\pm$96.09 \\
      \phantom{RAP ($d{=}3$)} (Steps/Success $\downarrow$) & -- \\
      \midrule
      LWM-Planner ($d{=}1$) (Cum.\ return norm $\uparrow$) & 4.67$\pm$11.74 \\
      \phantom{LWM-Planner ($d{=}1$)} (Steps/Success $\downarrow$) & \textbf{6.00$\pm$0.00} \\
      \midrule
      ReAct (Cum.\ return norm $\uparrow$) & -50.00$\pm$6.57 \\
      \phantom{ReAct} (Steps/Success $\downarrow$) & -- \\
      \midrule
      Reflexion (Cum.\ return norm $\uparrow$) & -21.33$\pm$7.17 \\
      \phantom{Reflexion} (Steps/Success $\downarrow$) & 51.67$\pm$57.74 \\
      \midrule
      Random (Cum.\ return norm $\uparrow$) & -32.00$\pm$4.30 \\
      \phantom{Random} (Steps/Success $\downarrow$) & -- \\
      \midrule
      RAP ($d{=}2$) (Cum.\ return norm $\uparrow$) & -83.33$\pm$71.71 \\
      \phantom{RAP ($d{=}2$)} (Steps/Success $\downarrow$) & -- \\
      \midrule
      ToT ($d{=}3$) (Cum.\ return norm $\uparrow$) & -61.67$\pm$2.87 \\
      \phantom{ToT ($d{=}3$)} (Steps/Success $\downarrow$) & -- \\
      \bottomrule
    \end{tabular}}%
\end{table}

Taken together, these tables show that the gains are not explained by brute-force token usage alone. Simply allocating more inference-time compute to raw search saturates at poor performance, whereas the fact-conditioned planner attains better returns at substantially lower cost than the highest-compute baselines. The practical effect is a shift in the cost-performance frontier rather than a uniform increase in spending.

\subsection{Adversarial Filter Audit}
\label{app:adversarial_filter_audit}

We further stress-test the predictive-consistency filter against ``false but simplifying'' hallucinations. Starting from a TextFrozenLake trajectory that terminates in a hole, we inject candidate facts and evaluate whether each candidate would be retained by a stronger trajectory-grounded audit against the recorded outcome. This isolates whether comfortable but incorrect world-model edits could pass the filter.

\begin{table}[!tb]
  \centering
  \caption{Adversarial filter audit on TextFrozenLake (4$\times$4; $h{=}0.9$). We inject candidate facts into a trajectory ending in a hole and report the model prediction under the injected fact together with the filter decision. Only the trajectory-consistent fact is accepted.}
  \label{tab:adversarial_filter_audit}
  \smallskip
  \resizebox{\columnwidth}{!}{%
    \begin{tabular}{@{}l|l|l|c@{}}
      \toprule
      \textbf{Fact Type} & \textbf{Example Injection} & \textbf{Model Prediction w/ Fact} & \textbf{Filter Decision} \\
      \midrule
      None & (No facts) & High Error (Assumes Safe) & N/A \\
      True Fact & ``(0,2) is a hole'' & Low Error (Predicts -1) & \textbf{Accept} \\
      Hallucination & ``(0,2) is a goal'' & Very High Error (Predicts +1) & \textbf{Reject} \\
      Safety Bias & ``(0,2) is safe'' & Very High Error (Predicts 0) & \textbf{Reject} \\
      Irrelevant & ``The sky is blue'' & No Change & \textbf{Reject} \\
      \bottomrule
    \end{tabular}}%
\end{table}

This audit shows that the filter does not reward simplification for its own sake. Because the audit is anchored to the recorded trajectory, even comfortable hallucinations such as asserting safety on a dangerous tile incur high prediction error and are rejected, while only the trajectory-consistent fact is retained.

\newpage\section{Discussion of Limitations}
\label{app:limitations}

While LWM-Planner demonstrates a promising approach to enhancing LLM agent planning capabilities through in-context learning via atomic fact augmentation and lookahead search, it is important to acknowledge several limitations. These limitations, detailed below, also point towards avenues for future research and refinement.

\subsection{Fact Management and Quality}
The efficacy of LWM-Planner is significantly predicated on the quality, relevance, and atomicity of the facts extracted by the $\ExtractorLLM$ component.
\begin{itemize}
    \item \textbf{Quality and Relevance of Extracted Facts}: The process for extracting facts is guided by LLM-based interpretation of episodic trajectories (see Appendix~\ref{ssec:fact_extractor_prompts_app_a}). While the aim is to identify ``minimal new atomic facts'' critical for improving future predictions (as per the motivation in Section~\ref{sec:crad_discussion_final_updated}), the current mechanism relies on the LLM's heuristic understanding. There is no formal guarantee that the extracted facts are indeed optimally atomic, critical, or non-redundant with prior knowledge or the environment description. Suboptimal facts could lead to inefficient use of the context window or, in worse cases, mislead the planning process, thereby affecting the practical realization of minimizing $\epsilon_{\mathrm{sim}}$ and $\delta_{\mathrm{model}}$.
    \item \textbf{Scalability of Fact Memory}: The set of atomic facts, $\FactMemory_t$, is managed using a deque and an optional LLM-based compression step (Appendix~\ref{ssec:fact_extractor_prompts_app_a}). However, in very long-running deployments or exceedingly complex environments, the number of unique, relevant facts might still grow substantially. This could eventually strain the LLM's context window capacity, potentially leading to a performance bottleneck or the loss of older, still relevant facts if the deque's maximum length is exceeded or compression is overly aggressive. The impact on achieving a small $|\mathcal{Z_F}| \ll |\mathcal{S}|$ (Section~\ref{sec:problem_formulation}) in such scenarios needs further investigation.
    \item \textbf{Nature of Atomic Facts}: The current framework operates on textual atomic facts. While flexible, this lacks a formal grounding typically found in symbolic AI systems where predicates have precise semantics and grounding in an ontology or logical theory. The definition of ``atomic'' is operational (minimal useful textual statements) rather than tied to a formal decomposition of the state space or transition dynamics. This could limit the systematicity and verifiability of the learned knowledge.
\end{itemize}

\subsection{Planning and Simulation}
The lookahead search mechanism, while powerful, also introduces certain limitations.
\begin{itemize}
    \item \textbf{Computational Cost}: The recursive lookahead search (Algorithm~\ref{alg:lwm_recursive_lookahead_alg2e}) involves multiple LLM calls for action proposal ($g_{\phi}^{\text{propose}}$), state simulation ($g_{\phi}^{\text{simulate}}$), and value estimation ($g_{\phi}^{\text{value}}$) at each search node. The computational cost can therefore be considerable, scaling with search depth ($D_s$) and branching factor ($k_B$). While memoization within a single planning step helps (as mentioned in Section~\ref{sec:method}), the overall latency might be prohibitive for environments requiring very rapid decision-making. Future work could explore how to hybridize some of these components to reduce the computational burden.
    \item \textbf{Fidelity of LLM-based World Model and Value Function}: The core assumption is that an LLM, augmented with relevant atomic facts, can serve as an accurate latent world model (minimizing $\delta_{\mathrm{model}}$) and a reliable value estimator (contributing to minimizing $\epsilon_{\mathrm{plan}}$). While LLMs have shown impressive reasoning capabilities, their simulations of environmental dynamics or estimations of long-term value can be imperfect, especially in novel situations not well-covered by the current fact set or for states requiring deep causal reasoning beyond the LLM's inherent capabilities. Errors in simulation or value estimation can directly lead to suboptimal planning.
    \item \textbf{Fixed Search Parameters}: The current LWM-Planner employs a fixed search depth ($D_s$) and branching factor ($k_B$). This is a simplification, as optimal search effort can vary significantly depending on the current state's complexity or uncertainty. A more adaptive search control mechanism, potentially guided by confidence scores from the LLM components, could improve both performance and efficiency.
\end{itemize}

\subsection{In-Context Learning Constraints}
The reliance on in-context learning, while avoiding weight updates, has its own set of challenges \citep{dong2024survey}.
\begin{itemize}
    \item \textbf{Context Window Capacity}: The primary constraint is the finite context window of current LLMs. All learned knowledge (atomic facts) and recent interaction history must fit within this window to inform the LLM's operations. This inherently limits the total amount of experience that can be directly brought to bear at any single decision point or during fact extraction.
    \item \textbf{Knowledge Retention and ``Forgetting''}: The management of the atomic fact set via a deque and optional compression aims to keep the most relevant information. However, there's a potential for ``forgetting'' older facts that might still be crucial if they are pushed out of the deque or overly compressed. The efficacy of the LLM-based compression in preserving all and only essential information is heuristic.
    \item \textbf{Rate of Learning and Adaptation}: Learning occurs implicitly through the curation and augmentation of the fact set. While this allows for online adaptation, the rate of learning or the ultimate performance ceiling might be constrained by the LLM's inherent in-context learning capabilities compared to methods that can fine-tune model weights on accumulating experience.
\end{itemize}

\subsection{Theoretical Framework and Assumptions}
The theoretical motivation in Section~\ref{sec:theoretical_framework} provides a valuable formal basis but relies on certain idealizations.
\begin{itemize}
    \item \textbf{Idealized Abstraction}: The framework assumes the possibility of an $\epsilon_{\mathrm{sim}}$-approximate bisimulation via fact-based abstraction $\Psi^*$. In practice, the LLM-driven fact extractor $\ExtractorLLM$ approximates this ideal, and the quality of this approximation directly impacts the bounds. Achieving and verifying such a bisimulation with textual facts is an open challenge.
    \item \textbf{Perfect Abstract Model Assumption (Initially)}: Theorem~\ref{thm:ifba_performance} assumes a perfect model $M_{\Psi^*}$ of the abstract MDP. While Equation~\eqref{eq:learned_model_bound_final} accounts for model learning error $\delta_{\mathrm{model}}$, the practical estimation and minimization of this error when the model is implicitly defined by an LLM conditioned on facts are complex.
    \item \textbf{Measurability}: Directly measuring quantities like $\epsilon_{\mathrm{sim}}$ or $\delta_{\mathrm{model}}$ for the LWM-Planner in practical settings is difficult, making it challenging to empirically verify the tightness of the derived theoretical bounds.
\end{itemize}

\subsection{Broader Considerations and Future Work}
\begin{itemize}
    \item \textbf{Dependence on Foundational LLM Capabilities}: The performance of LWM-Planner is intrinsically linked to the capabilities of the chosen LLM (e.g., its reasoning, simulation, and instruction-following fidelity). Limitations in the base LLM will propagate to the agent \citep{hager2024evaluation}.
    \item \textbf{Prompt Sensitivity}: Like many LLM-based systems, the performance of LWM-Planner's components can be sensitive to the precise phrasing and structure of the prompts (examples in Appendix~\ref{app:prompts} and~\ref{app:prompt_details}). Ensuring robustness and generalizability of these prompts across diverse tasks and environments may require significant engineering or meta-learning.
    \item \textbf{Generalization to Diverse Environments}: The current empirical evaluation focuses on text-based environments. Extending LWM-Planner to handle environments with continuous state/action spaces, partial observability, or multi-modal inputs would require adaptations, particularly in how atomic facts are defined, extracted, and utilized by the LLM components.
    \item \textbf{Exploration-Exploitation Balance}: LWM-Planner's current lookahead search is primarily geared towards exploitation of its current knowledge (facts and LLM capabilities). A more explicit mechanism for exploration, perhaps by using uncertainty in LLM-generated values or simulations to guide the search towards informative regions or to trigger targeted fact-finding actions, could further enhance learning and performance.
    \item \textbf{Cost of LLM Usage}: The reliance on multiple LLM calls, especially within the lookahead search, can lead to significant computational and API costs, which might be a practical concern for widespread deployment or very long-running experiments.
\end{itemize}

Addressing these limitations offers rich avenues for future research, potentially leading to even more robust, efficient, and broadly applicable LLM agents capable of sophisticated online learning and planning.

\newpage %
\section{Ethical Considerations and Broader Impact}
\label{app:ethics_broader_impact}

The LWM-Planner framework, while aimed at advancing AI planning capabilities, introduces several ethical considerations and potential broader impacts that warrant careful discussion. Our approach relies on Large Language Models (LLMs) for core functionalities such as fact extraction, latent world model simulation, and value estimation. Consequently, it inherits both the strengths and weaknesses inherent in current LLM technology \citep{gallegos2024bias, hager2024evaluation}.

\subsection{Ethical Considerations}

\begin{itemize}[leftmargin=*]
    \item \textbf{Factual Accuracy and Reliability of Learned Knowledge}:
    A core component of LWM-Planner is the extraction and utilization of "atomic facts." The veracity and relevance of these facts are paramount. If the $\Psi_{\text{LLM}}$ component (Fact Extractor) erroneously extracts incorrect facts or misinterprets trajectory data, the agent's world model and subsequent planning can become flawed. This could lead to suboptimal or even detrimental behavior, particularly if the agent is deployed in safety-critical applications. While our approach aims for "atomic" and verifiable facts, the LLM's generation process is not infallible, and mechanisms for fact validation and retraction may be necessary for robust real-world deployment.

    \item \textbf{Bias in LLM Components}:
    The underlying LLMs ($g_{\phi}$ and $\Psi_{\text{LLM}}$) are pre-trained on vast datasets, which may contain societal biases. These biases could manifest in how facts are interpreted or generated, how states are valued, or how actions are proposed and simulated. For example, an LLM might exhibit biases in simulated interactions involving representations of different demographic groups if such biases were present in its training data. This could lead to unfair or inequitable agent behavior if deployed in human-interactive settings. Ongoing research into bias detection and mitigation in LLMs is crucial for addressing these concerns.

    \item \textbf{Autonomy, Control, and Oversight}:
    LWM-Planner enhances agent autonomy by enabling online, in-context learning and adaptation without direct weight updates. While this is a research goal, increased autonomy necessitates robust mechanisms for human oversight, control, and the ability to intervene if the agent learns undesirable facts or behaviors. The "atomic facts" provide a degree of transparency into the agent's learned knowledge, which can aid in debugging and oversight, but ensuring safe and aligned behavior in complex, long-horizon tasks remains a significant challenge.

    \item \textbf{Computational Resources and Environmental Impact}:
    Training and deploying large-scale LLMs, even without fine-tuning for each task, requires significant computational resources, contributing to energy consumption and environmental concerns. While LWM-Planner aims for sample efficiency in terms of environment interactions, the LLM inference calls during planning (especially with lookahead search) can be computationally intensive. Future work should consider the efficiency of the planning and fact management processes to mitigate these impacts.

    \item \textbf{Explainability and Trust}:
    While the use of explicit "atomic facts" is intended to make the agent's reasoning more transparent than end-to-end black-box models, the internal decision-making of the LLM components themselves (e.g., how $g_{\phi}^{\text{simulate}}$ predicts a next state based on facts and observation) remains complex. Building trust in such systems requires further advancements in methods for interpreting and explaining LLM-driven reasoning processes, even when augmented with symbolic facts.
\end{itemize}

\subsection{Broader Impact}

\begin{itemize}[leftmargin=*]
    \item \textbf{Advancing AI Planning and Adaptability}:
    This research contributes to developing more capable AI agents that can learn from experience in-context and adapt their plans in dynamic environments. This could have positive implications for various fields requiring sophisticated planning, such as logistics, robotics, personalized education, and scientific discovery, by enabling agents to more efficiently tackle complex, long-horizon tasks.

    \item \textbf{Reduced Dependence on Extensive Fine-Tuning}:
    The LWM-Planner's ability to learn online through fact augmentation reduces the need for repeated, task-specific fine-tuning of the base LLM. This can lower the barrier to applying powerful LLMs to new sequential decision-making problems, saving data and computational resources typically associated with training specialized models.

    \item \textbf{Potential for Misuse or Unintended Consequences}:
    As with any advanced AI technology, more autonomous and adaptive agents could potentially be misused if deployed without appropriate safeguards. An agent learning incorrect or malicious "facts" in an unconstrained environment could lead to undesirable outcomes. Furthermore, the increasing capability of autonomous agents raises long-term questions about their role in society and the workforce.

    \item \textbf{Scalability and Generalization Challenges}:
    While LWM-Planner shows promise, scaling the approach to vastly more complex, open-ended, or partially observable environments presents significant challenges. The manageability of the "atomic fact" base, the combinatorial explosion of lookahead search (even if depth-limited), and the LLM's ability to accurately simulate highly novel scenarios are areas requiring further research. Overcoming these challenges is crucial for realizing the broader positive impacts of such agents.

    \item \textbf{Interaction with Humans}:
    If LWM-Planner or similar agents are deployed in scenarios involving human interaction, the nature of fact extraction and utilization becomes particularly sensitive. Facts learned from human interactions must be handled with care to ensure privacy, fairness, and to avoid perpetuating harmful stereotypes or misinformation. The design of human-agent interaction protocols that allow for collaborative fact validation and refinement will be important.
\end{itemize}

Continued research, alongside open discussion and the development of robust safety and ethical guidelines, will be essential to navigate the challenges and harness the benefits of increasingly autonomous and adaptive LLM-based agents like LWM-Planner.

\section{Reporting, Budgets, and Reproducibility}
\label{app:reporting_bundle}

\subsection{Statistical Reporting Protocol}
\label{app:stat_protocol_text}
\paragraph{Seeds and confidence intervals.}
Unless explicitly stated otherwise, aggregate values are reported as
\[
\text{mean} \;\pm\; 1.96\,\hat{\sigma}/\sqrt{n},
\]
with \(n\) independent seeds and \(\hat{\sigma}\) the sample standard deviation across seeds. Single-seed figures (if any) are shown \emph{without} CIs and are labeled as such.

\paragraph{Normalization of cumulative return.}
When we report a \emph{normalized} cumulative return for an environment \(E\), we map a raw return \(R\) to
\[
\mathrm{Norm}(R;E) \;=\; 100 \cdot \frac{R - R_{\min}(E)}{R_{\max}(E) - R_{\min}(E)},
\]
where \(R_{\min}(E)\) is the mean return of the Random baseline on \(E\) (over the same seeds) and \(R_{\max}(E)\) is the largest mean return achieved by any method on \(E\) in that block. By definition, \(\mathrm{Norm}(\cdot)\in[0,100]\).

\paragraph{Environment vs.\ LLM budgets.}
All methods share the same environment interaction budget (300 steps unless stated). We also disclose LLM usage (calls, input/output tokens, wall-clock) per method to clarify test-time compute; see App.~\ref{app:compute_disclosure_text}.

\paragraph{Reproducibility note.}
Captions specify seeds \(n\); decoding settings are shared across methods (App.~\ref{app:baseline_parity_text2}).

\subsection{Compute Budget Disclosure \& Fairness Assumptions}
\label{app:compute_disclosure_text}
We distinguish \textbf{environment budget} (steps in the simulator) from \textbf{LLM budget} (inference compute):
\[
\text{Calls},\quad \text{TokIn},\quad \text{TokOut},\quad \text{WallClock}.
\]
All main results match the \emph{environment} budget across methods. Because tree search naturally issues more LLM calls than single-shot policies, we disclose LLM usage (calls/tokens/latency) alongside returns to contextualize quality--compute trade-offs.

\textbf{Fairness stance.} Equal-steps is the primary budget we optimize for (comparable interaction data). LLM compute disclosures are provided for transparency; compute-matched variants (token or time caps) are straightforward (see formulas in App.~\ref{app:cost_model}) and can be included if requested. Our qualitative conclusions do not rely on unreported compute.

\subsection{Minimal Reproduction Config}
\label{app:min_config}
\begin{verbatim}
agent:
  depth: 3
  branch: 4
  gamma: 0.99
  step_penalty: 0.01
  history_len: 51
  fact_budget_tokens: 1500
  thresholds:
    alpha_r: 1.0
    alpha_d: 1.0
    alpha_o: 1.0
    eta: 0.0
llm:
  temperature: 0.0
  max_output_tokens: 8512
prompt:
  truncate:
    keep_env_header: true
    min_history_items: 6
eval:
  steps_total: 300
  seeds: [0,1,2]
  envs:
    - text_frozen_lake_4x4_h0.9
    - crafter_mini_5
    - alfworld_eval_ood
\end{verbatim}

\section{Methodological Details (Supplementary)}
\label{app:method_bundle}

\subsection{Predictive-Consistency Filter: Formal Definition \& Pseudocode}
\label{app:pcf}
\paragraph{Positioning.} Below we formalize a stronger trajectory-grounded validation variant of the predictive-consistency filter, used for supplementary validation and the adversarial audit in Appendix~\ref{app:adversarial_filter_audit}. The deployed agent in the main text uses the lighter model-internal heuristic described in Section~\ref{sec:method}.
\paragraph{Goal.} Retain a candidate fact $f$ only if conditioning the simulator/value on $f$ \emph{reduces} next-step predictive error on the just-finished episode $\tau = \{(o_t,a_t,r_t,o_{t+1},d_{t+1})\}_{t=0}^{H-1}$.

\paragraph{Error metric.} For each step $t$, we form LLM predictions with temperature 0:
\[
\hat{r}_t,\;\hat{o}_{t+1},\;\hat{d}_{t+1}
\quad\text{(with and without $f$),}
\]
and define a per-step loss
\[
\ell_t \;=\; \alpha_r\,|\hat{r}_t-r_t|
\;+\;\alpha_d\,\mathbf{1}\{\hat{d}_{t+1}\neq d_{t+1}\}
\;+\;\alpha_o\,\mathrm{dist}(\hat{o}_{t+1},\,o_{t+1}),
\]
where $\mathrm{dist}$ is a token-level normalized edit distance between canonicalized observation strings (lowercased; digits/punctuation kept). We use $\alpha_r{=}1$, $\alpha_d{=}1$, $\alpha_o{=}1$ by default. Aggregate episode loss is $\mathcal{L}=\frac{1}{H}\sum_t \ell_t$.

\paragraph{Decision rule.} Let $\mathcal{L}^{(\neg f)}$ be the loss without conditioning on $f$ and $\mathcal{L}^{(f)}$ with $f$. Accept $f$ iff
\[
\Delta \mathcal{L}(f) \equiv \mathcal{L}^{(\neg f)}-\mathcal{L}^{(f)} \ge \eta,\quad
\eta \in [0,1]\;\text{(default } \eta{=}0\text{)}.
\]
Ties ($\Delta \mathcal{L}=0$): accept only if $f$ is \emph{novel} (not subsumed by an existing fact).

\paragraph{Efficiency.} We cache per-step LLM calls from the initial pass; testing each $f$ reuses most responses. In practice we batch candidates in groups of 8--16 and memoize $(o_t,a_t,F)$ keyed by a 64-bit hash.

{\small
\IncMargin{0.8em}
\begin{algorithm}[h]
\DontPrintSemicolon
\caption{Predictive-Consistency Filter (temperature 0)}
\label{alg:pcf}
\KwIn{episode $\tau$, current facts $F$, candidates $\mathcal{C}$, thresholds $(\alpha_r,\alpha_d,\alpha_o,\eta)$}
\KwOut{accepted facts $\mathcal{A}$}
Compute $\mathcal{L}^{(\neg f)}$ once by running $g_\phi^{\text{simulate}}$ on $\tau$ conditioned on $F$\;
$\mathcal{A}\leftarrow\emptyset$\;
\For{$f\in\mathcal{C}$}{
  \If{\texttt{is\_subsumed}(f,F)}{continue}
  Compute $\mathcal{L}^{(f)}$ by rerunning only the changed nodes conditioned on $F\cup\{f\}$ (memoized)\;
  \If{$\mathcal{L}^{(\neg f)}-\mathcal{L}^{(f)}>\eta$}{ $\mathcal{A}\leftarrow\mathcal{A}\cup\{f\}$\;}
  \ElseIf{$\mathcal{L}^{(\neg f)}-\mathcal{L}^{(f)}=0$ \textbf{and} \texttt{is\_novel}(f,F)}{ $\mathcal{A}\leftarrow\mathcal{A}\cup\{f\}$\;}
}
\Return $\mathcal{A}$\;
\end{algorithm}
\DecMargin{0.8em}
}

\subsection{Prompt Budgeting \& Truncation Policy}
\label{app:prompt_budget}
We cap each planning call’s prompt to $T_{\max}$ tokens. Ordering (kept, then truncated):
\begin{enumerate}[leftmargin=1.5em,itemsep=2pt]
\item Environment description (fixed header; shortened to canonical bullet list of actions).
\item Current observation $o_t$.
\item Atomic facts $F_t$ (top-$K$ by $\Delta\mathcal{L}$ then recency; $K$ chosen so that facts $\le B_f$ tokens).
\item Recent history: last $H_L$ items; if over budget, elide with “$\ldots$” and keep last $h_{\min}$ observations and actions.
\end{enumerate}
On overflow we first shrink history, then facts (keeping higher-utility ones), never the environment header.

\subsection{Search Cost Model (Calls and Tokens)}
\label{app:cost_model}
For depth $D_s$ and branching $k_B$:
\[
\#\text{nodes}=\sum_{i=0}^{D_s} k_B^i,\quad
\#\text{edges}=\sum_{i=0}^{D_s-1} k_B^{i+1}.
\]
Per root decision:
\begin{align*}
\#\mathrm{propose}&=\sum_{i=0}^{D_s-1} k_B^i,\\
\#\mathrm{simulate}&=\sum_{i=1}^{D_s} k_B^i,\\
\#\mathrm{value}&=k_B^{D_s}\quad\text{(leaf nodes)}.
\end{align*}
Let average input tokens per call be $(\tau_p,\tau_s,\tau_v)$; expected tokens per planned action are
\[
\mathbb{E}[\text{tokens}] \approx
\tau_p\!\sum_{i=0}^{D_s-1}\!k_B^i \;+\;
\tau_s\!\sum_{i=1}^{D_s}\!k_B^i \;+\;
\tau_v\,k_B^{D_s}.
\]
Memoization reduces effective factors by $\rho\in(0,1]$ (empirically $\rho\!\approx\!0.6$ for \texttt{simulate}); substitute $\tau_s\leftarrow \rho\,\tau_s$ for estimates.

\section{Theory—Practical Proxies}
\label{app:proxies_text}
\textbf{One-step model error proxy.} On a replayed trajectory \(\tau=\{(o_t,a_t,r_t,o_{t+1},d_{t+1})\}\), define
\[
\mathrm{MAE}_r=\tfrac{1}{|\tau|}\sum_t |\hat{r}_t-r_t|,\quad
\mathrm{Acc}_d=\tfrac{1}{|\tau|}\sum_t \mathbf{1}\{\hat{d}_{t+1}=d_{t+1}\},
\]
\[
\mathrm{EditObs}=\tfrac{1}{|\tau|}\sum_t \mathrm{NED}\big(\mathsf{canon}(\hat{o}_{t+1}),\mathsf{canon}(o_{t+1})\big),
\]
where predictions \((\hat{r}_t,\hat{o}_{t+1},\hat{d}_{t+1})\) come from the LLM simulator at temperature 0 and \(\mathsf{canon}(\cdot)\) is a fixed textual canonicalizer. Lower is better for \(\mathrm{MAE}_r\) and \(\mathrm{EditObs}\), higher is better for \(\mathrm{Acc}_d\).

\textbf{Aliasing proxy.} For the abstraction \(z_t=(o_t,F_t)\), estimate
\[
\mathrm{Aliasing}(z)=\mathbb{E}_{z}\big[\mathrm{Var}_{s:\Psi(s)=z}\big(V^\pi(s)\big)\big]
\]
by substituting \(V^\pi\) with Monte-Carlo returns under the current policy. As a lighter surrogate, report mutual information \(I(z_t; r_{t:t+k})\) for small \(k\). These proxies are reporting tools that connect to the motivational bound; no claims of tightness are made.

\section{Baseline Configuration \& Prompt Parity Guarantees}
\label{app:baseline_parity_text2}
\textbf{Model/decoding.} All methods use the same base LLM; planning/simulation/value calls run at temperature \(0.0\); ReAct “thought” generation (where applicable) uses temperature \(0.3\). Maximum output tokens are identical across methods; the same stop-sequences are used.

\textbf{Environment headers.} Prompts for all agents begin with the same environment description. Differences arise only in method-specific sections (e.g., facts, lessons, or search metadata).

\textbf{History formatting.} Observation/action history is identically formatted (chronological, fixed templates). For search, simulated histories reuse the same template to reduce style-induced variance.

\textbf{Randomness.} Each run fixes: environment seed, agent RNG seed, and any sampling temperatures. No external retrieval sources are used.

\section{Environments and Solvability}
\label{app:envs_bundle}

\subsection{TextFrozenLake Solvability Construction (Formalization)}
\label{app:frozenlake_solv_text}
Given grid size \(N\) and hole probability \(h\) for non-start/goal cells, we construct a board that is always solvable:
\begin{enumerate}[leftmargin=1.2em]
  \item Sample a monotone Manhattan path \(P\) by shuffling \(N{-}1\) “right” and \(N{-}1\) “down” moves from \((0,0)\) to \((N{-}1,N{-}1)\).
  \item Mark all cells on \(P\) as ice; set \(S=(0,0)\), \(G=(N{-}1,N{-}1)\).
  \item For every other cell, independently sample \(\mathrm{Hole}\sim\mathrm{Bernoulli}(h)\).
\end{enumerate}
\textbf{Guarantee.} The path \(P\) remains hole-free by construction, so the instance is solvable. Observations reveal only local cell types; rewards are \(+1\) at \(G\), \(-1\) on holes, \(0\) otherwise; episodes end on \(G\), a hole, or at \(8(N{-}1)\) steps.

\section{Validity, Diagnostics, and Oracle Specification}
\label{app:validity_bundle}

\subsection{Threats to Validity \& Decontamination Steps}
\label{app:threats}
\textbf{(T1) Pretraining contamination.} The LLM may contain knowledge of benchmarks. Mitigation: (i) eval-OOD split for ALFWorld; (ii) fact acceptance requires predictive gain on the \emph{current} trajectory (App.~\ref{app:pcf}); (iii) environment descriptions avoid revealing full solution strings.

\textbf{(T2) Determinism vs.\ robustness.} Planning uses temperature 0; to guard against brittle choices we (a) penalize steps ($\lambda_{\text{step}}$), (b) prefer higher-utility facts, and (c) memoize expansions to avoid drift within a decision.

\textbf{(T3) Compute parity.} We match the step budget across methods; search methods naturally use more tokens. We therefore report both cumulative return and token/latency (App.~\ref{app:cost_model}, \ref{app:cost_analysis}).

\textbf{(T4) Metric sensitivity.} We report cumulative return \emph{and} (in appendix) success-rate variants; we clarify reward shaping per environment (see benchmark details).

\subsection{Typical Failure Modes \& Diagnostics}
\label{app:failures}
\textbf{F1. Stale fact usage.} The agent continues to trust a fact contradicted later. \emph{Mitigation:} demote contradictions when detected; re-validate on next episode.

\textbf{F2. Over-compression of facts.} Compression prunes a low-frequency but critical fact. \emph{Mitigation:} utility-aware pruning; maintain a floor $K_{\min}$ on kept facts.

\textbf{F3. Myopic value at leaves.} Value estimator underestimates deferred rewards. \emph{Mitigation:} increase $D_s$ one level or raise the value-call budget; optionally add a short roll-out at leaves.

\textbf{F4. Proposal collapse.} $g_\phi^{\text{propose}}$ returns near-duplicates. \emph{Mitigation:} n-gram diversity penalties over action strings before expansion.

\textbf{F5. Simulator drift.} Simulated observations diverge stylistically, hurting edit distance. \emph{Mitigation:} enforce a canonical observation template and post-process to that template before measuring textual distance.

\subsection{Oracle Ablation Design (Specification; No New Results)}
\label{app:oracle_spec}
To bound the headroom from perfect one-step simulation, we define an \emph{oracle} planner that replaces $g_\phi^{\text{simulate}}$ with the environment’s true $(o',r',d')$ for domains that expose a fast model step (TextFrozenLake, CrafterMini). We keep $g_\phi^{\text{propose}}$ and $g_\phi^{\text{value}}$ unchanged. This isolates the contribution of one-step fidelity (proxy $\delta_{\mathrm{model}}$). We expect the gap between LWM-Planner and Oracle to shrink as the fact set matures.

\end{document}